\def\eqref#1{equation~\ref{#1}}
\def\1{\bm{1}}
\DeclareMathAlphabet{\mathsfit}{\encodingdefault}{\sfdefault}{m}{sl}
\SetMathAlphabet{\mathsfit}{bold}{\encodingdefault}{\sfdefault}{bx}{n}
\DeclareMathOperator*{\argmax}{arg\,max}
\DeclareMathOperator*{\argmin}{arg\,min}
\newtheorem{ass}{Assumption}
\newtheorem{lemma}{Lemma}
\newtheorem{example}{Example}
\newtheorem{defi}{Definition}
\newtheorem{prop}{Proposition}
\def\bbE{\mathbb{E}}
\def\bbI{\mathbb{I}}
\def\bbR{\mathbb{R}}
\def\wse{\mbox{wse}}
\def\ose{\mbox{ose}}
\def\id{\mbox{id}}
\def\ood{\mbox{ood}}
\def\cR{\mathcal{R}}
\def\bx{\boldsymbol{x}}
\def\by{\boldsymbol{y}}
\def\bz{\boldsymbol{z}}
\def\bv{\boldsymbol{v}}
\def\bw{\boldsymbol{w}}
\def\bm{\boldsymbol{m}}
\def\be{\boldsymbol{e}}
\def\ba{\boldsymbol{a}}
\def\br{\boldsymbol{r}}
\def\bmu{\boldsymbol{\mu}}
\def\bQ{\boldsymbol{Q}}
\def\bI{\boldsymbol{I}}
\def\cA{\mathcal{A}}
\def\cC{\mathcal{C}}
\def\cD{\mathcal{D}}
\def\cG{\mathcal{G}}
\def\cN{\mathcal{N}}
\def\cR{\mathcal{R}}
\def\cS{\mathcal{S}}
\def\cV{\mathcal{V}}
\newcommand*\samethanks[1][\value{footnote}]{\footnotemark[#1]}
\title{Spurious Feature Diversification Improves Out-of-distribution Generalization}
\author{
Yong Lin\thanks{Equal contribution. Corresponding to: Yong Lin$<$ylindf@connect.ust.hk$>$}$\ \ ^\dag$
\quad Lu Tan\samethanks$\ \ ^\S$
\quad Yifan Hao\samethanks$\ \ ^\dag$
\quad Ho Nam Wong$^\dag$
\quad Hanze Dong$^\dag$ \And
Weizhong Zhang$^\ddag$
\quad Yujiu Yang$^\S$
\quad Tong Zhang$^\P$
\\\\
$^\dag$ {The Hong Kong University of Science and Technology}, $^\S$ Tsinghua University, \\
$^\ddag$ Fudan University, $^\P$ University of Illinois Urbana-Champaign. 
}
\begin{document}

\maketitle
\begin{abstract}
Generalization to out-of-distribution (OOD) data is a critical challenge in machine learning. Ensemble-based methods, like weight space ensembles that interpolate model parameters, have been shown to achieve superior OOD performance. However, the underlying mechanism for their effectiveness remains unclear. 

In this study, we closely examine WiSE-FT, a popular weight space ensemble method that interpolates between a pre-trained and a fine-tuned model. We observe an unexpected ``FalseFalseTrue" phenomenon, in which WiSE-FT successfully corrects many cases where each individual model makes incorrect predictions, which contributes significantly to its OOD effectiveness. To gain further insights, we conduct theoretical analysis in a multi-class setting with a large number of spurious features. Our analysis predicts the above phenomenon and it further shows that ensemble-based models reduce prediction errors in the OOD settings by utilizing a more diverse set of spurious features. Contrary to the conventional wisdom that focuses on learning invariant features for better OOD performance, our findings suggest that incorporating a large number of diverse spurious features weakens their individual contributions, leading to improved overall OOD generalization performance. Additionally, our findings provide the first explanation for the mysterious phenomenon of weight space ensembles outperforming output space ensembles in OOD. Empirically we demonstrate the effectiveness of utilizing diverse spurious features on a MultiColorMNIST dataset, and our experimental results are consistent with the theoretical analysis. 

Building upon the new theoretical insights into the efficacy of ensemble methods, we further identify an issue of WiSE-FT caused by the overconfidence of fine-tuned models in OOD situations. This overconfidence magnifies the fine-tuned model's incorrect prediction, leading to deteriorated OOD ensemble performance. To remedy this problem, we propose a novel method called BAlaNced averaGing (BANG) to mitigate the overconfidence problem, which significantly enhances the OOD performance of WiSE-FT.
\end{abstract}

\section{Introduction}

Machine learning has seen significant advancements recently. However, the assumption that testing samples follow the same distribution as training samples, known as the Identically Independent Distributed (IID) assumption, can be violated in real-world applications. When a machine learning model encounters novel testing samples that it hasn't seen during training, it faces the out-of-distribution (OOD) generalization problem.

Ensemble-based models (ESM) have achieved significant success in addressing OOD problems in recent years. Specifically, denote the input as $\bx$ and the model as $f_\theta$ with parameter $\theta$. Given two models $f_{\bar \theta}$ and $f_{\tilde \theta}$, existing ESM works typically consider the output space ensemble (OSE) which outputs $f_{\bar \theta}(\bx) + f_{\tilde \theta}(\bx)$ and the weight space ensemble (WSE) which outputs $f_{(\bar \theta + \tilde \theta)/2}(\bx)$. WSE is also called weight averaging in literature. \cite{wortsman2022robust, wortsman2022model, rame2022diverse} show that ESM can significantly improve the OOD performance and  WSE outperforms OSE. Many works, e.g., \cite{cha2021swad, rame2022diverse, arpit2022ensemble,ramemodel, wortsman2022model, tian2023trainable, kumar2022calibrated},  adopt WSE to repeatedly improve the SOTA performance on many OOD benchmarks such as DomainBed \citep{gulrajani2020search} and ImageNet variants \citep{wortsman2022robust}. See Appendix \ref{app:reviewing_existing_works} for more related works.   

Consider two types of features for OOD: (1) invariant features that consistently predict the label across distributions, and (2) spurious features that have unstable correlations with the label.  
Existing OOD theories \citep{arjovsky2019invariant, rosenfeld2020risks, wald2022malign, ahuja2020empirical, zhou2022sparse} show that an ERM-trained model relying on spurious features can fail in worst-case. ESM, which combines multiple ERM-trained models, may still heavily depend on such features and potentially fail in worst-case scenarios as well. There have been some previous attempts to explain the effectiveness of model ensemble, but they do not offer satisfactory explanations on the overall OOD improvement of ESM. Furthermore, the difference between weight and output space ensemble remains under-explored (a thorough discussion on related works in Appendix \ref{app:our_difference_related_works}).

\textbf{An intriguing phenomenon}. To understand the benefits of ESM, we examine the WiSE-FT \citep{wortsman2022robust}, which interpolates between a pre-trained and fine-tuned model. When evaluating OOD datasets, we divided them into four groups based on the correctness of predictions made by the individual models. Surprisingly, we found a ``FalseFalseTrue" phenomenon: WiSE-FT can correct predictions on samples where both individual models make incorrect predictions. Further, we show that two individual models learn different feature sets, and WiSE-FT utilizes more diverse features. Based on these observations, we then motivate our theory by a toy example (shown in Figure~\ref{fig:three_class_example}). Suppose we have two models, $\bar f$ and $\tilde f$, for a 3-class classification task. For a sample from the first class, $\bar f$ produces logits of (0.4, 0.6, 0), and $\tilde f$ produces logits of (0.4, 0, 0.6). The ensemble model's prediction would be (0.4, 0.3, 0.3).  This phenomenon can happen when $\bar f$ and $\tilde f$ learn different subsets of spurious features, represented as $\bar \cS$ and $\tilde \cS$, respectively. Recall that the spurious correlations change in OOD. In the example, $\bar f$ generates a high logit (0.6) for the second class influenced by $\bar \cS$, while $\tilde f$ produces a high logit (0.6) for the third class influenced by $\tilde \cS$ (details in Section~\ref{sect:under_stand_avg}).

\begin{figure}
    \centering
\includegraphics[scale=0.35]{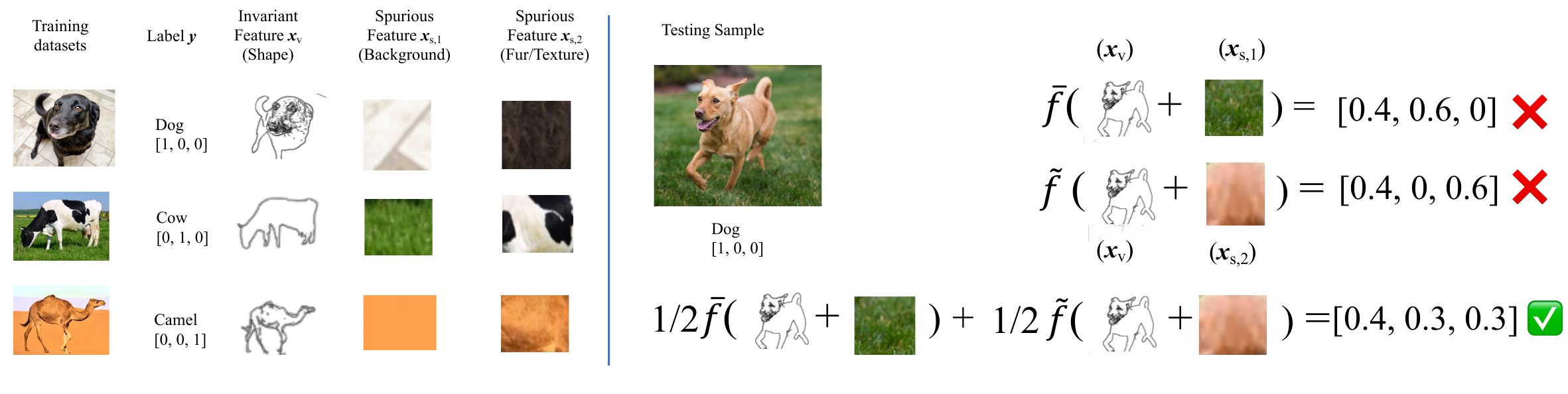}    \caption{Illustration of FalseFalseTrue phenomenon. Consider to classify camels, cows, and dogs. The invariant feature $\bx_v$ is the shape of the animal. There are 2 spurious features, i.e., 1) the background $\bx_{s, 1}$, e.g., camels are always on the sand, cows are on grass and dogs are on the floor. 2) the fur of the animals $\bx_{s, 2}$, e.g., camels have brown fur, cows have dotted fur and dogs are all in black in the training dataset. 
Suppose we fit two models, $\bar f$ and $\tilde f$, on the training dataset independently. Assume that $\bar f$ uses the invariant feature $\bx_{v}$ and $\bx_{s, 1}$, and  $\tilde f$ uses  $\bx_{v}$ and $\bx_{s, 2}$. 
      $\bar f$ and $\tilde f$ both correctly predict the label of a sample from the training distribution. Consider an OOD  testing sample of a dog with brown fur on the grass. $\bar f$ puts a large logit for the cow class since the background(grass) is spuriously correlated with cows, i.e., $\bar f(\bx_v, \bx_{s, 1}) = [0.4, 0.6, 0]$. $\tilde f$ puts a large logit for the camel class since the texture(brown fur) is spuriously correlated with camels, i.e., $\tilde f(\bx_v, \bx_{s, 2}) = [0.4, 0, 0.6]$. \textbf{Both $\bar f$ and $\tilde f$ make mistakes on this sample. However, the  average of them can make correct prediction}, i.e., $1/2 \bar f(\bx_v, \bx_{s, 1}) + 1/2 \tilde f(\bx_v, \bx_{s, 2})  = [0.4, 0.3, 0.3]$.}\label{fig:three_class_example}
\vspace{-5mm}
\end{figure}

\textbf{A new perspective on OOD generalization}. In Section~\ref{sect:theretical_analysis}, we extend a popular theoretical setting \citep{rosenfeld2020risks, wald2022malign} to a 3-class classification with multiple spurious features. Our theoretical results predicts the aforementioned phenomenon. We show that ESM incorporates more diverse spurious features, which weakens the contributions of individual spurious feature and further leads to improved overall OOD  performance.  We also shed light on the difference between the weight and output space ensemble. {
Recall that there has been a significant effort in OOD community to learn invariant features and discard spurious features~\citep{arjovsky2019invariant}. However, these approaches have not shown satisfactory performance when applied to real-world datasets \citep{gulrajani2020search}, which may be due to the fact that invariant learning requires numerous domains~\citep{rosenfeld2020risks}, strong regularization~\citep{zhou2022sparse}, and faces additional difficulties induced by non-linearity~\citep{rosenfeld2020risks}, overparameterization~\citep{lin2022bayesian}, and optimization challenges~\citep{chen2023pareto}.  In contrast, our findings offer a new perspective that \textbf{spurious features  diversification} actually improves OOD performance, which can be easily implemented as shown in ensemble-based models and has achieved remarkable empirical success. To further verify our findings, we introduce MultiColorMNIST in Section~\ref{sect:multicolor_exp}, a novel variant of CMNIST ~\citep{arjovsky2019invariant}, with multiple spurious features. Through empirical analysis, we show that individual models trained on MultiColorMNIST utilize different spurious features, and their ensemble achieves superior OOD performance by leveraging this diversity.} Notably, while several methods promote feature diversity to enhance empirical performance, none of them have explored the spurious features diversification from a perspective similar to ours (details in Appendix~\ref{app:our_difference_related_works}).

\textbf{An improved method}. Our theoretical results indicate that the scaling of $\bar f$ and $\tilde f$ should be similar to maintain the improvement of the model ensemble. If $\tilde f$ is much more confident than $\bar f$, resulting in a larger scaling for $\tilde f$, the ensemble model can become biased towards $\tilde f$. Unfortunately, the scaling issue arises in WiSE-FT, which combines a pre-trained model and a fine-tuned model in the weight space. Empirical evidence shows that the pre-trained model is well calibrated, whereas the fine-tuned model is highly over-confident on OOD datasets, indicating a larger scaling compared to the pre-trained model. Based on these findings, we propose BAlaNced averaGing (BANG), which combines the pre-trained model with a model fine-tuned by over-confidence preventing methods like Label Smoothing and MixUp. We demonstrate that BANG improves vanilla WiSE-FT by approximately 1.9pp in average OOD performance across five ImageNet variants.

To summarize, the following are the main contributions of the paper:
\begin{itemize}
     \vspace{-2mm}
     \item By examining WiSE-FT, a popular method of ensemble-based models (EBM) that combines the pre-trained and fine-tuned model in the weight space, we discover an unexpected `FalseFalseTrue' phenomenon that WiSE-FT can correct a large fraction of OOD samples on which both individual models make wrong predictions.  We further show that two individual models use different sets of features and WiSE-FT utilizes more diverse features.
    \item  Through theoretical analysis on a multi-class classification problem with multiple spurious features, we provide a natural explanation for the observed phenomenon and show EBM can improve OOD performance through spurious features diversification. Additionally, our findings provide the first-ever explanation for the mysterious phenomenon of  weight space ensembles outperforming output space ensembles in OOD scenarios.  
    \vspace{-0.5mm}
    \item Contrary to the traditional belief that emphasizes the exclusive learning of invariant features for OOD, our findings suggest that incorporating diverse spurious features
weakens their individual contributions, leading to improved overall OOD generalization performance. Through experiments on our MultiColorMNIST dataset, which contains multiple spurious features, we provide concrete evidence for the effectiveness of diverse spurious features.     
    \item 
    Based on our theoretical and empirical findings, we show that WiSE-FT can suffer from the over-confidence problem of the fine-tuned model, which skews the ensemble and deteriorates the  OOD performance.
    We further propose a novel method BANG to remedy this problem, and it significantly improves the OOD performance.
    \vspace{-2mm}
\end{itemize}

\vspace{-1mm}
\section{Understanding Ensemble-based Models via Examining WiSE-FT}
\label{sect:under_stand_avg}

\textbf{The FalseFalseTrue phenomenon}. In this section, we closely examine WiSE-FT \citep{wortsman2022robust} to obtain intuition on why EBM can improve OOD performance. Specifically, \citep{wortsman2022robust} ensemble pre-trained CLIP and the model fine-tuned on ImageNet in the weight space. In Appendix~\ref{sect:wrong_wrong_correct}, we divide each dataset (ImageNet as ID dataset and five ImageNet variants \footnote{They are 
ImageNet-V2~\citep{recht2019imagenet}, ImageNet-R~\citep{hendrycks2021many}, ImageNet-A~\citep{hendrycks2021natural}, ImageNet Sketch~\citep{wang2019learning} and ObjectNet~\citep{barbu2019objectnet}. We refer to them as IN-V2, IN-R, IN-A, IN-S, and ObjNet for short. More details in Appendix~\ref{chap:Appendix_dataset}} as OOD datasets) into 8 groups by whether the pre-trained, fine-tuned and averaged models make correct predictions. We surprisingly find that WiSE-FT can correct a substantial part of samples on which both the pre-trained and fine-tuned models make mistakes. Specifically, we calculate the number of ``FalseFalseTrue" samples, i.e., samples on which WiSE-FT is correct while both the pre-trained and fine-tuned models are incorrect. We then calculate the FalseFalseTrue ratio by dividing FalseFalseTrue number over the dataset size. Figure~\ref{fig:simple_falsefalse_correct_grad}(Left) shows FalseFalseTrue ratio on each OOD dataset and compares it with ``overall improvement", which is the accuracy improvement of WiSE-FT over the best of pre-trained and fine-tuned model.  We can see that there are substantial parts of FalseFalseTrue samples in each dataset. Refer to Appendix~\ref{sect:wrong_wrong_correct} for more details. It is interesting that the FalseFalseTrue ratio is even higher than the overall improvement in IN-R and IN-A, we provide in-depth analysis and explanation in Appendix~\ref{sect:wrong_wrong_correct} and~\ref{app:better_calibration}.

\begin{figure}[t]
    \centering
    \vspace{-8mm}
    \includegraphics[width=0.4\linewidth]{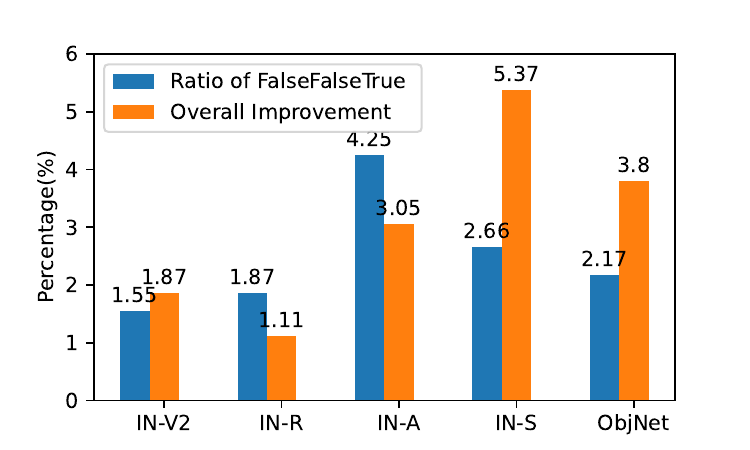}
    \includegraphics[width=0.45\textwidth]{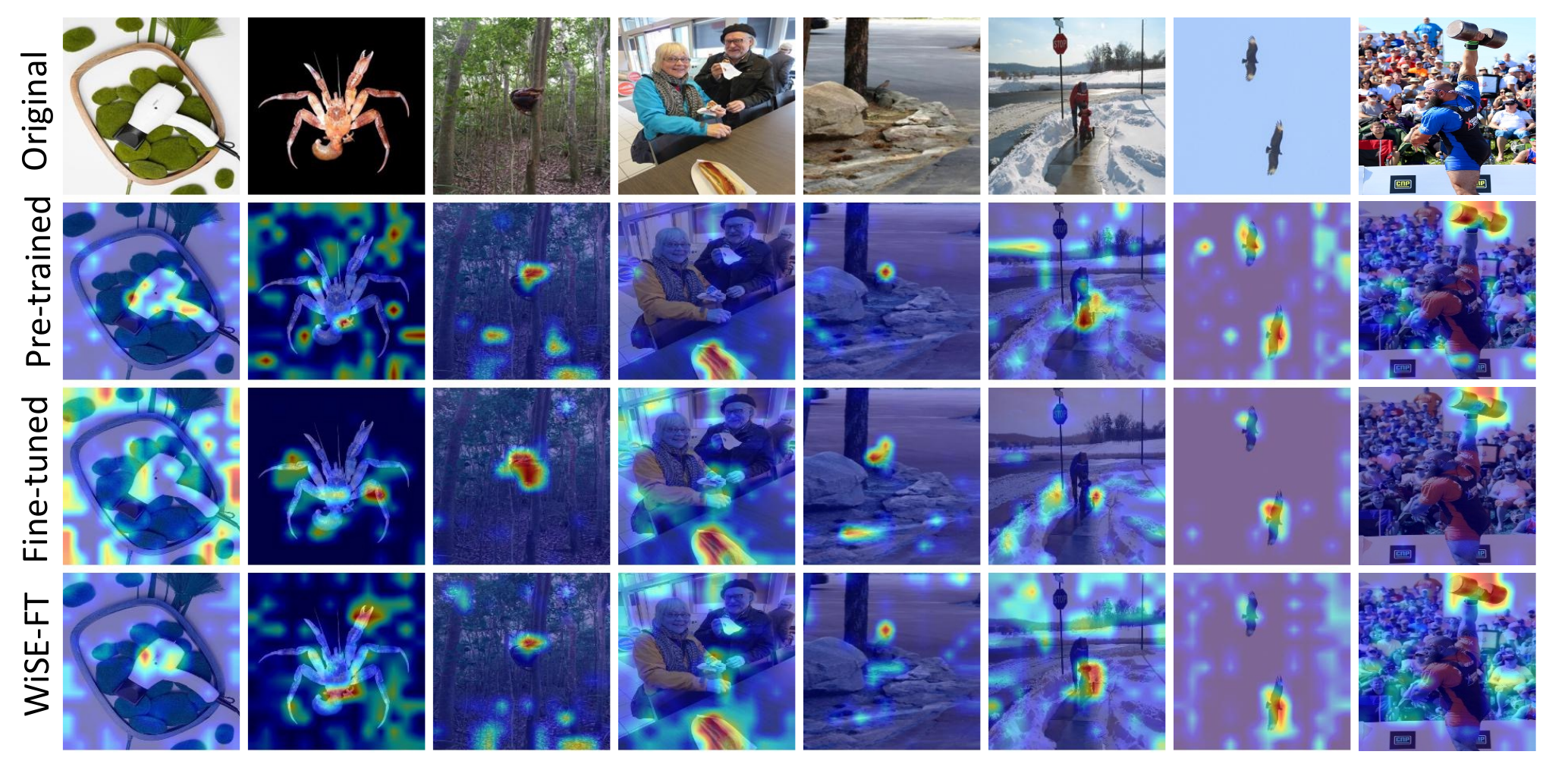}
    
    \caption{(Left)~FalseFalseTrue ratio; (Right) GradCAM  feature visualization.} 
    \label{fig:simple_falsefalse_correct_grad}
    \vspace{-5mm}
\end{figure}

\textbf{Illustration on when FalseFalseTrue occurs}. In this part, we try to understand the FalseFalseTrue phenomenon. We first consider the output space ensemble to be similar to the weight space ensemble in this part and will present an analysis of their difference in Section~\ref{sect:theretical_analysis}.  Suppose we want to distinguish from camels, cows, and dogs. There is one invariant feature $\bx_v$ (the shape of the animal) and two spurious features (the background $\bx_{s, 1}$ and the fur of the animal $\bx_{s, 2}$). Camels are typically found on sand, cows on grass, and dogs on the floor. Camels have brown fur, cows have dotted fur, and dogs are all black in the training dataset. See Fig.~\ref{fig:three_class_example} for illustration.  Suppose we fit two different models, $\bar f$  and $\tilde f$ on the training dataset. Further assume $\bar f$  uses the  feature $\bx_{v}$ and $\bx_{s, 1}$, and  $\tilde f$  uses  $\bx_{v}$ and $\bx_{s, 2}$ \footnote{For simplicity of illustration, we assume that $\bar f$ and $\tilde f$ learn the same invariant feature. However, this is not necessary for EBM to outperform both individual models, as demonstrated in Section~\ref{sect:theretical_analysis} 
}. 
Both $\bar f$  and $\tilde f$   correctly predict samples from the training distribution. Whereas, for a sample from the testing distribution, e.g., a dog with brown fur ($\bx_{s, 2}$) on the grass ($\bx_{s, 1}$): $\bar f$  puts a large logit for the cow class since the background, grass, is spuriously correlated with cow, i.e., $\bar f(\bx_v, \bx_{s, 1}) = [0.4, 0.6, 0]$; $\tilde f$  puts a large logit for the camel class since the texture, brown fur, is spuriously correlated with camel, i.e., $\tilde f(\bx_v, \bx_{s, 2}) = [0.4, 0, 0.6]$. Both $\bar f$  and $\tilde f$  make different mistakes under distributional shifts due to using different spurious features. However, the ensemble of them can make a correct prediction, i.e., $1/2f_1(\bx_v, \bx_{s, 1}) + 1/2f_1(\bx_v, \bx_{s, 2})  = [0.4, 0.3, 0.3]$. 

\textbf{Feature visualization}. The reasoning above assumes that individual models utilize different features. GradCam~\citep{selvaraju2016grad} visualization of the features used by the pre-trained (zero-shot), fine-tuned, and WiSE-FT in Figure~\ref{fig:simple_falsefalse_correct_grad}(Right) confirms this assumption. The visualization shows that the pre-trained and fine-tuned models rely on different features, while WiSE-FT utilizes more diverse features. Additionally, \citep{allen2020towards} provides empirical evidence supporting the use of diverse features by different DNNs with the same architecture trained on the same datasets (with different initialization). They also provide formal theoretical proof for 2-layer DNNs. We include some of \citep{allen2020towards}'s empirical results in Appendix~\ref{app:allen_zhu_results}. Additionally, there is more evidence suggesting that DNNs favor sparse feature representations and discard redundant features \citep{papyan2020prevalence, andriushchenko2023sgd}.

\section{Analysis on Spurious Feature Diversification}
\label{sect:theretical_analysis}

\subsection{Theoretical Settings}
\textbf{Notation}. For simplicity of presentation, we consider a 3-class classification problem, i.e., $\by \in \{\be_1, \be_2, \be_3\}$, where $\be_i$ denotes the 3-dimensional unit vector with $i$th element equaling 1, e.g., $\be_2 = [0, 1, 0]^\top$. In Appendix \ref{pf:prop2}, we extend the setting to $K$-class classification. $\boldsymbol{a}(k)$ means the $k$th element of vector $\boldsymbol{a}$, $\boldsymbol{A}(k)$ means the $k$th column of matrix $\boldsymbol{A}$. 
We use  $\boldsymbol{I}_K$ to represent a $K \times K$ identity matrix, e.g., $\bI_3 = [\be_1, \be_2, \be_3]$. We omit the subscript of $\bI$ when no confusion arises. 

Suppose we have $d_v$ invariant features $\{\bx_{v, i}\}_{i=1}^{d_v}$ and $d_s$ spurious features $\{\bx_{s, j}\}_{j=1}^{d_s}$ where $\bx_{v, i}, \bx_{s, j}  \in \bbR^d$ and the whole feature $\bx \in \bbR^{  d \times (d_s+d_v)  }$ is the concatenation of them, i.e., $\bx = \mbox{Concat} \Big(\{\bx_{v, i}\}_{i=1}^{d_v} \cup \{\bx_{s, j}\}_{j=1}^{d_s} \Big) = [\bx_{v, 1}, \dots, \bx_{v, d_v}, \bx_{s, 1}, \dots, \bx_{s, d_s}].$ Consider that each model $f$ is composed of a featurizer $\Phi \in \{0, 1\}^{d_v + d_s}$ and a classifier $\bw \in \bbR^{d \times 3}$. $\Phi$ first selects feature by $\bx \Phi$. For example, suppose $\bx = [\bx_1, \bx_2, \bx_3]$ and $\Phi = [1, 1, 0]^\top$, then $\bx \Phi = \bx_1 + \bx_2$. Then the classifier $\bw \in \bbR^{d \times 3}$ is fit based on the features selected by $\Phi$ as $\bw = \argmin_{\bv \in  \bbR^{d \times 3} } \cR_{{id}}(\bv, \Phi) = \argmin_{\bv \in  \bbR^{d \times 3} }  \bbE_{(\bx, \by) \sim \cD_{{id}}}[\ell(\bv^\top (\bx \Phi) , \by)]$,
where $\ell$ is the cross-entropy loss function and $\cD_{{id}}$ is the ID distribution. (\textbf{Remark}: Refer to Appendix~\ref{app:discussion_of_theretical_models} for detailed discussions on the setting.)  

Following \citep{rosenfeld2020risks, wald2022malign}, we consider that  each $\bx_{v, i}$ and $\bx_{s, j}$ are generated from the label $\by$ with the \textit{latent} invariant features $\bmu_{v, i}$ and spurious features $\bmu_{s, i}$, where $\bmu_{v, i}, \bmu_{s, j} \in \bbR^{d \times 3}$. The full data generation process is:

\begin{defi}[Data Generation Process]
\label{defi:data_generation}
The whole data generation process is as follows:
\vspace{-2mm}
\begin{align}
\small
    & \boldsymbol{y} \sim \text{Unif} \left\{ \be_1, \be_2, \be_3 \right\}, \bx = \mbox{Concat} \Big(\{\bx_{v, i}\}_{i=1}^{d_v} \cup \{\bx_{s, j}\}_{j=1}^{d_s} \Big), \nonumber \\
    &\mathbb{P}_{\theta} (\boldsymbol{x}_{v, i} \mid \boldsymbol{y}) = \mathcal{N} \left( {\boldsymbol{\mu}}_{v,i}\bQ_{v, i}\by, \sigma^2 \boldsymbol{I}_d \right), \mathbb{P}_{\theta} (\boldsymbol{x}_{s, j} \mid \boldsymbol{y}) = \mathcal{N} \left( {\boldsymbol{\mu}}_{s,j}\bQ_{s, j}\by, \sigma^2 \boldsymbol{I}_d \right), \forall i, j.
\end{align}
where $\bQ_{v, i}, \bQ_{s, j} \in \{0, 1\}^{3\times3}$. Further, $\bQ_{v, i}=\bI_3 = [\be_1, \be_2, \be_3]$ always hold.  In the ID distribution $\cD_{\mbox{id}}$,  $\bQ_{s, j}=\bI_3$; and in OOD $\cD_{\mbox{ood}}$, the $k$th column of $\bQ$, i.e., $\bQ_{s, j}(k)$, is as follows for $k=1,2,3$: 
$$\bQ_{s, j}(k) = \begin{cases}
    \be_k,  \mbox{ with probability } 1-p\\ 
    \mbox{Unif} \{\be_1, \be_2, \be_3\}, \mbox{ with probability } p.
\end{cases}$$ 

\end{defi}
\begin{figure}[t]
    \centering
        \vspace{-8mm}
\includegraphics[width=0.8\linewidth]{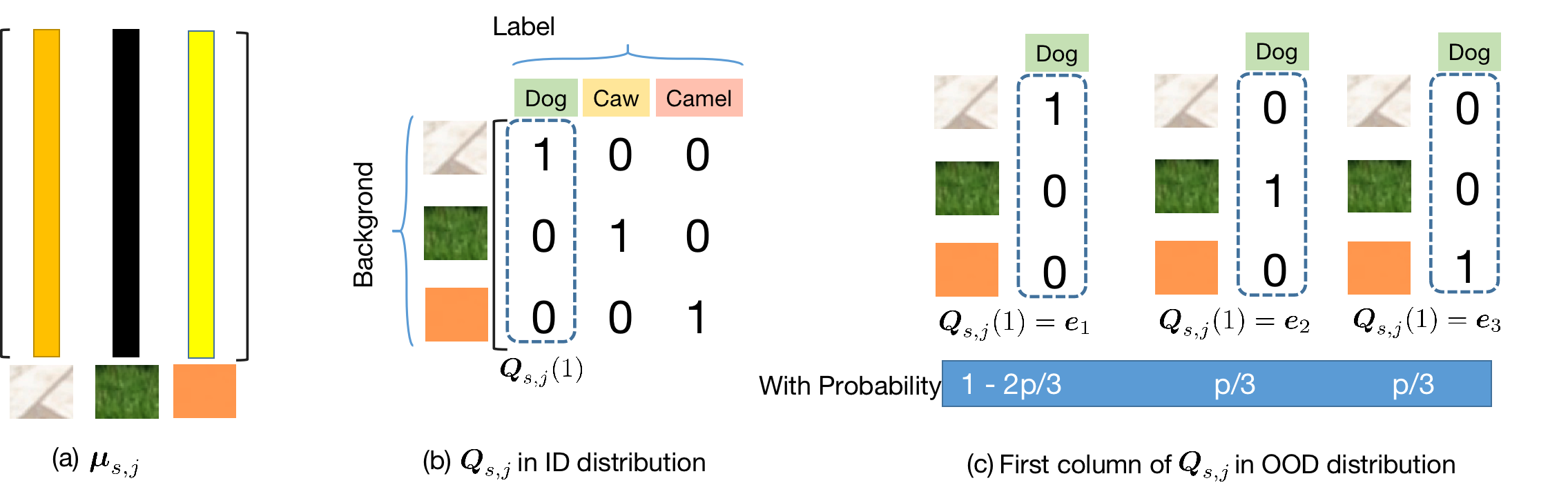}
    \caption{(a) $\bmu_{s, j} \in \bbR^{ d \times 3}$ represents a spurious feature, e.g., the background. Each column of $\bmu_{s, j}$ is an attribute of the spurious feature, e.g., $\bmu_{s, j}(1)$, $\bmu_{s, j}(2)$ and $\bmu_{s, j}(3)$ are the floor, grass, and sand, respectively. (b) $\bQ_{s, j} \in \{0, 1\}^{3 \times 3}$ represents the relationship between labels and spurious features. In the ID distribution, $\bQ_{s, j}$ equals $\bI$, indicating that each spurious feature is perfectly correlated with the corresponding class. (c) In the OOD distribution, spurious correlation can fail, e.g.,  $\bQ_{s,j}(1)$ equals $\be_2$ with probability $p/3$, indicating the background of the dog is the grass. }
    \label{fig:illustration_of_mu_q}
    \vspace{-4mm}
\end{figure}

\textbf{The intuition of the data generation process}. We consider the example in Figure~\ref{fig:three_class_example}. Figure~\ref{fig:illustration_of_mu_q} shows the intuition of $\bmu_{s, j}$ and $\bQ_{s, j}$. Suppose the spurious feature $\bmu_{s, j}$ is the background in  Figure~\ref{fig:three_class_example}.  Here $\bmu_{s, j} =  [\bmu_{s, j}(1), \bmu_{s, j}(2), \bmu_{s, j}(3)]\in \bbR^{d \times 3}$ and each column $\bmu_{s, j}(k)$ for $k=1, 2, 3$ represents a specific attribute that is associated with class $k$ in the training set. In other words, $\bmu_{s, j}(1), \bmu_{s, j}(2)$, and $\bmu_{s, j}(3)$ represent 3 attributes of background, namely, floor, grass, and sand, which are correlated with dog, cow, and camel, respectively. Consider a dog image (i.e., $\by = \be_1 = [1, 0, 0]$ ). We have $\bmu_{s, j} \bQ \by|_{\by=\be_1} = \bmu_{s, j} \bQ_{s, j}(1) $  and \footnote{Specifically, $\bQ \by|_{\by=\be_1} = \bQ [1, 0, 0]^\top = \bQ_{s, j}(1)$, where $\bQ_{s, j}(1)$ is the first column of $\bQ_{s, j}$.} further
\begin{itemize}
    \item[(a)] In the ID distribution $\cD_{\mbox{id}}$, $\bQ_{s, j}(1) = \be_1$ and
    $\bmu_{s, j} \bQ_{s, j} \by|_{\by=\be_1}  = \bmu_{s, j} \be_1 = \bmu_{s, j} (1).$ 
    Then $\bx_{s, j} = \cN( \bmu_{s, j} (1), \sigma \bI)$, indicating that in $\cD_{\mbox{id}}$ the  background of the dog (i.e., $\by=\be_1$) is the floor (i.e., $\bmu_{s, j} (1)$).
    \item[(b)] In the OOD distribution $\cD_{\mbox{ood}}$, $\bQ_{s, j}(1)=\be_1$ with probability $1-p$ and $\bQ_{s, j}(1) \sim \mbox{Unif}\{\be_1, \be_2, \be_3\}$ with probability $p$. Then we have the following:
    \begin{align*}
        \bmu_{s, j} \bQ_{s, j} \by|_{\by=\be_1} = \begin{cases}
            \bmu_{s, j} (1), \mbox{ with probability } 1-p \\
            \mbox{Unif}\{\bmu_{s, j} (1), \bmu_{s, j} (2), \bmu_{s, j} (3)\}, \mbox{ with probability } p,
        \end{cases}
    \end{align*}
    indicating that in the OOD distribution the  background of the dog (i.e., $\by=\be_1$) is the floor (i.e., $\bmu_{s, j} (1)$) with probability $1-p$ and is randomly drawn from floor, grass, and sand (i.e., $\bmu_{s, j} (1)$, $\bmu_{s, j} (2)$, and $\bmu_{s, j} (3)$) with $p$. In other words, $p$ is the probability that spurious correlation no-longer holds and a larger $p$ indicates larger distributional shift. 
\end{itemize}

\textbf{Remark}. 
Our data generation process extends the setting of \citep{wald2022malign, rosenfeld2020risks} to a 3-class classification problem with multiple features. This extension aligns with the intuition behind popular multi-class datasets used in empirical studies on OOD generalization, such as FullColorMNIST, ColoredObject, and CifarMNIST \citep{zhang2021can, lin2022bayesian, zhou2022sparse, zhou2022model, ahmed2021systematic}. Take  ColoredObject for example, correlations between classes and background colors exist in the training dataset but fail with a certain probability in OOD.

\begin{defi}[Individual models]
\label{defi:individual_model}
    Denote the whole invariant feature set as $\cV:=\{\bx_{v,i}\}_{i=1}^{d_v}$ and spurious feature set $\cS:=\{\bx_{s,j}\}_{j=1}^{d_s}$. Consider $\bar f = (\bar \Phi, \bar \bw)$ and $\tilde f = (\tilde \Phi, \tilde \bw)$.  Suppose $\bar \Phi$ learns $\bar \cV \subset \cV$ and $\bar \cS \subset \cS$, and $\tilde \Phi$ learns $\tilde \cV \subset \cV$ and $\tilde \cS \subset \cS$. Denote $|\tilde\cV|=\tilde n_v$, $|\tilde\cS|=\tilde n_s$, $|\bar\cV|=\bar n_v$, $|\bar\cS|=\bar n_s$, $|\tilde\cV \cap \bar\cV|=n_{vo}$, and $|\tilde\cS \cap \bar\cS|=n_{so}$. Specifically, we have 
    $ \bx \bar\Phi = \sum_{\bx_{v} \in \bar \cV} \bx_{v}  + \sum_{\bx_{s} \in \bar \cS} \bx_{s},  \bar \bw = \argmin_{\bv \in  \bbR^{d \times 3} } \cR_{\mbox{id}}(\bv, \bar \Phi)$, and   
        $ \bx \tilde \Phi = \sum_{\bx_{v} \in \tilde \cV} \bx_{v}  + \sum_{\bx_{s} \in \tilde \cS} \bx_{s}, \tilde \bw = \argmin_{\bv \in  \bbR^{d \times 3} } \cR_{\mbox{id}}(\bv, \tilde\Phi).$
\end{defi}

\begin{defi}[Output space ensemble (OSE)]
     \label{defi:model_ensemble}
     Given the two individual models defined in Definition \ref{defi:individual_model},   the prediction of the 
  the output space ensemble is $
    f_{\ose}(\bx) = \frac{1}{2}(\bar \bw^\top(\bx \bar \Phi)  +  \tilde \bw^\top(\bx\tilde \Phi)). 
    $ 
\end{defi}

The predicted class of the sample $(\bx, \by)$ is the class with the maximum logit. Specifically, denote the logit as $\hat l = f(\bx)$. The predicted class is $\hat k = \argmax_{h \in \{1,2,3\} } \hat l (h) $ where $\hat l (h)$ of the $h$th dimension of the logit $\hat l$.  The model makes correct prediction if $\bbI(e_{\hat k} = \by)$ holds where $\bbI$ is the indicator function.  The accuracy is $\cA(f) = \bbE_{\bx, \by}[ \bbI(  \be_{\hat k} = \by)]$. We denote the OOD accuracy as
$
\small
    \cA_{\mbox{ood}}(f) = \bbE_{\bQ_{s}}\left[\bbE_{\bx, \by}[ \bbI( \be_{\hat k} = \by)|\bQ_{s}]\right],
$
where we use $\bQ_{s}$ as a short hand for $ \bQ_{s, 1}, \dots, \bQ_{s, d_s}$. We discuss the metric in Appendix~\ref{sect:worst_case_discussion}. We defer the analysis of ID accuracy to Appendix~\ref{app:id_performance} since we consider infinite samples and the ID accuracy of all considered models are all close to 1. 
\begin{ass}[Small Noise]
\label{ass:small_noise}
Denote  $n_v'$ and $n_s'$ as the the maximum number of invariant features and  spurious features that a model can learn, respectively.
We need the overall noise to be small to satisfy $\boldsymbol{F}^{K}(\frac{1}{\sigma(n_v' + n_s')}) \ge 1 - \epsilon,$
in which $\boldsymbol{F}$ is the cumulative distribution function of standard Gaussian random variable, and $K$ refers to the class number (here we analyze the case $K = 3$).
\end{ass}

\textbf{Remark}. Since we impose random noise on each feature, e.g., $\bx_{v, i} = \boldsymbol{\mu}_{v, i} + \bz$ where $\bz \sim \cN(0, \sigma^2 \boldsymbol{I}_d)$ where $\boldsymbol{I}_d$ is a d-dimensional identity matrix and $d \gg d_v + d_s$, it is natural to assume the overall noise is controlled, e.g., we have $\epsilon \leq 10^{-6}$ when $K=10$, $\sigma=1/100$, $n_v'+n_s' = 20$.  

\begin{ass}[Orthogonal features \citep{wald2022malign, allen2020towards}]
\label{ass:ortho_feature}
(1) $\| \boldsymbol{\mu}_{v,i}(k)\|_2 = 1$ and $\| \boldsymbol{\mu}_{s,j} (k)\|_2 = 1$ for $i=1,\cdots, d_v$, $j=1,\cdots,d_s$, $k=1, 2, 3$. (2) ${\bv}_{i}(k) \perp 
{\bv}_{i'}(k')$ for any $(i, k) \neq (i', k')$, $k, k'= 1, 2, 3, $  ${\bv}_{i}, {\bv}_{i'} \in \{\boldsymbol \mu_{v, 1}, \cdots, \boldsymbol \mu_{v, d_v},\boldsymbol \mu_{s, 1},\dots, \boldsymbol \mu_{s, d_s}\}$.
\end{ass}

\subsection{Theoretical Results}
\label{sect:simple_theretical}


We first show the intuition on the simple Example~\ref{exp:illustrative} and then extend to the general setting in Def.~\ref{defi:model_ensemble}:
\begin{example}[Illustrative examples]
    \label{exp:illustrative}
    Consider that there are totally 4 invariant features $\{ \bx_{v,i}\}_{i=1}^4$ and 6 spurious features $\{\bx_{s,j}\}_{j=1}^6$,  and two individual models $(\bar \bw, \bar \Phi)$ and $(\tilde \bw, \tilde \Phi)$ learn non-overlapped features as $\bx\bar \Phi =  \sum_{i=1,2} \bx_{v,i} +   \sum_{j=1,2,3}\bx_{s,j}$, and $\bx\tilde \Phi = \sum_{i=3,4} \bx_{v,i} +   \sum_{j=4,5,6}\bx_{s,j}$.
\end{example}

\begin{prop}[Illustrative examples]
\label{prop:simple_case_improvment}
    Consider  Example~\ref{exp:illustrative}, suppose Assumption \ref{ass:small_noise} and \ref{ass:ortho_feature} hold, and  there are infinite ID and OOD samples. Omitting small terms containing $\epsilon$, we have $\cA_{ood}(\bar f)= \cA_{ood}(\tilde f) = 1- \frac{1}{9}p^3$, and $\cA_{ood}(f_{\ose}) = 1 - \frac{2p^5}{81} - \frac{17p^6}{729}$.
\end{prop}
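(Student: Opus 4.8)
The plan is to reduce the entire problem to a transparent plurality-voting computation and then evaluate two small combinatorial sums. First I would pin down the learned classifiers. For a featurizer $\Phi$ selecting a set of $n$ features, the ID class-conditional mean of $\bx\Phi$ given $\by=\be_k$ is $\bm_k:=\sum_a \bmu_a(k)$, the sum of the $k$-th columns of the selected feature means; by Assumption~\ref{ass:ortho_feature} these columns are orthonormal, so $\bm_1,\bm_2,\bm_3$ are mutually orthogonal with equal norm $\sqrt{n}$, and the three class-conditional Gaussians share covariance $n\sigma^2\bI$. This is completely symmetric across the three classes, so I would argue that the population cross-entropy minimizer $\bw=\argmin_{\bv}\cR_{id}(\bv,\Phi)$ aligns each class weight with its own mean direction, $\bw(h)=\gamma\,\bm_h$ for a single scalar $\gamma>0$ fixed by the loss, via an exchangeability argument on the objective (invariance under simultaneously permuting the three labels and the matching orthonormal mean directions).

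Next I would turn logits into vote counts. Writing an OOD sample with true label $\be_k$ as $\bx\Phi=\sum_a\bmu_a(\kappa_a)+\sum_a\bz_a$, where $\kappa_a$ is the class feature $a$ currently points to ($\kappa_a=k$ for every invariant feature since $\bQ_{v,i}=\bI$, and shifted for spurious features), orthonormality collapses the $h$-th logit to
\[
\bw(h)^\top(\bx\Phi)=\gamma\,\#\{a:\kappa_a=h\}\;+\;\gamma\langle \bm_h,\textstyle\sum_a\bz_a\rangle .
\]
The three noise terms $\langle \bm_h,\sum_a\bz_a\rangle$ are jointly Gaussian, equal-variance, and pairwise uncorrelated (again by orthogonality of the $\bm_h$), hence i.i.d.; so whenever two classes are tied in vote count the symmetric noise splits them $1/2$--$1/2$, while Assumption~\ref{ass:small_noise} guarantees that a count gap of at least one vote is overturned only with probability $O(\epsilon)$. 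Up to the $\epsilon$-terms the proposition discards, classification is thus exact plurality voting: every invariant feature votes for the true class, and by Definition~\ref{defi:data_generation} each spurious feature votes for the true class with probability $1-\tfrac{2p}{3}$ and for each wrong class with probability $\tfrac{p}{3}$, independently, with ties broken uniformly.

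Finally I would evaluate the two voting problems. For an individual model (two invariant and three spurious features, e.g.\ $\bar f$) the true class holds two guaranteed votes, so it can only be tied or beaten when all three spurious features defect; summing the multinomial weights $(1-\tfrac{2p}{3},\tfrac p3,\tfrac p3)$ over those configurations with the $1/2$ tie rule gives an error of leading order $p^3$. For the OSE the logits add, so by the same reduction $f_{\ose}$ is a single plurality vote over the \emph{union} of features: four invariant (always correct) and six spurious. Now the true class carries four guaranteed votes and survives unless at least five of the six spurious features defect; enumerating the cases with zero or one correct spurious vote, with the same weights and tie rule, yields the stated $\cA_{ood}(f_{\ose})$, whose leading term is of order $p^5$. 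This jump from order $p^3$ to order $p^5$ is precisely the diversification effect: pooling more spurious features raises the number of simultaneous defections needed to flip a prediction.

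The main obstacle is the first two steps, not the arithmetic: rigorously justifying that the population cross-entropy solution is the aligned classifier $\bw(h)=\gamma\,\bm_h$ with a \emph{single} common scale $\gamma$ (which is what makes every feature contribute an equal unit vote), and that under Assumption~\ref{ass:small_noise} the OOD decision collapses to noiseless plurality voting with uniform tie-breaking. Establishing the uniform tie-break reduces to checking the cross-class noise terms are i.i.d., which follows from orthogonality together with the equal feature counts. Once the voting reduction is in place, the remaining work is a bounded enumeration of vote configurations, where the only subtlety is careful bookkeeping of two- and three-way ties and verifying that three-way ties cannot arise in either feature budget.
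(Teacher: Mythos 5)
Your proposal is correct and follows essentially the same route as the paper's own appendix proof: the population cross-entropy minimizer aligns each class weight with the class-conditional feature mean (the paper's Lemma~\ref{lemma:classifier}), orthogonality collapses each logit to a vote count plus i.i.d.\ Gaussian noise, Assumption~\ref{ass:small_noise} reduces the decision to plurality voting with uniform tie-breaking, and the accuracies follow from enumerating multinomial defection patterns (all three spurious features defecting for an individual model, at least five of six for the OSE). One remark: carrying out your enumeration for the individual model yields $\cA_{ood}(\bar f)=1-\tfrac{5}{27}p^3$, which agrees with the paper's own appendix derivation rather than with the $1-\tfrac{1}{9}p^3$ displayed in the proposition statement, so this numerical discrepancy is internal to the paper and not a gap in your argument.
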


We can see that OSE improves OOD by $\cA_{ood}(f_{\ose}) - \max\{\cA_{ood}(\bar f), \cA_{ood}(\tilde f)\} > {1}/{81}p^3$.

\textbf{Intuition of the proof} (Full proof in Appendix~\ref{app:proof_example_1_1}). Let's consider the samples of first class $\by = \be_1 =  [1, 0, 0]$. Model $(\bar \bw, \bar \Phi)$ has $\bx \bar \Phi |_{\by=\be_1} = \sum_{i=1}^2 \bmu_{v,i}\bQ_{v, i}(1)  + \sum_{j=1}^3 {\bmu}_{s,j}\bQ_{s, j}(1) + z$ where $z \sim \cN(0,  5\sigma^2 \bI_d)$. By Lemma \ref{lemma:classifier}, we have $\bar \bw(k) = \sum_{i=1}^2 \bmu_{v,i}(k)  + \sum_{j=1}^3 {\bmu}_{s,j}(k)$ for each class $k=1, 2, 3$. Omitting the small noise term, the predicted logit 
 for class $k$ is  $\bar \bw(k)^\top(\bx \bar \Phi)|_{\by=\be_1}  = \sum_{i=1}^2 \bmu_{v,i}(k)^\top(\bmu_{v,i}\bQ_{v, i}(1))  + \sum_{j=1}^3 {\bmu}_{s,j}(k)^\top({\bmu}_{s,j}\bQ_{s, j}(1))$ .
The model will mistakenly predict $\be_2$ on the samples with true label $\be_1$ when  $\bar \bw(1)^\top \bx \bar \Phi |_{y=\be_1}<\bar \bw(2)^\top \bx \bar \Phi |_{y=\be_1}$. 
This will happen when the three events $\{\bQ_{s, j}(1) = \be_2\}_{j=1}^3$ simultaneously happen in OOD (see Appendix~\ref{app:illustration_average_improvement} for detailed discussion).  Each event occurs with a probability of $p/3$, resulting in a combination probability of $p^3/27$.
This means that with a probability of $p^3/27$, we encounter an OOD scenario where the model $\bar f = (\bar \bw, \bar \Phi)$ incorrectly predicts almost all samples from the first class $\be_1$ as the second class $\be_2$. This failure occurs because all three spurious features happen to have values that are spuriously correlated with $\be_2$ in the training dataset. In other words, the three spurious features dominate the prediction of $\be_2$, overshadowing the two invariant features that predict the true label $\be_1$. For the OSE model, we have $\bar \bw(k)^\top(\bx \bar \Phi) + \tilde \bw(k)^\top(\bx \tilde \Phi)|_{\by=\be_1}  = \sum_{i=1}^4 \bmu_{v,i}(k)^\top(\bmu_{v,i}\bQ_{v, i}(1))  + \sum_{j=1}^6 {\bmu}_{s,j}(k)^\top({\bmu}_{s,j}\bQ_{s, j}(1))$. The model will mistakenly predict $\be_2$ on the samples with true label $\be_1$  when at least five of the six events $\{\bQ_{s, j}(1)=\be_2\}_{j=1}^6$  simultaneously happen in OOD (see Appendix~\ref{app:illustration_average_improvement_indi} for details), whose probability is much less than that of $\bar f$. Intuitively, the failure probability of the averaged model is smaller as it utilizes more spurious features, which are less likely to make the same mistakes.




\begin{prop}[General Results for OSE]\label{prop:general_results_new}
Consider Definition~\ref{defi:data_generation}-\ref{defi:model_ensemble}, Assumption \ref{ass:small_noise}-\ref{ass:ortho_feature} hold,  and infinite ID and OOD samples.  Omitting small constants involving $\epsilon$, we have $\cA_{\mbox{ood}}(\bar f)~=~F_p\left(\frac{(1-p)\bar n_s + \bar n_v }{\sqrt{\bar n_s}}\right)$, $\cA_{\mbox{ood}}(\tilde f) = F_p\left(\frac{(1-p)\tilde n_s + \tilde n_v }{\sqrt{\tilde n_s}}\right)$, and $\cA_{\mbox{ood}}(f_{\ose})~=~F_p\left(\frac{(1 - p) (\tilde{n}_s + \bar{n}_s) + (\tilde{n}_v + \bar{n}_v)}{\sqrt{\tilde{n}_s + \bar{n}_s + 2 n_{so}}}\right)$.
\end{prop}
\begin{wrapfigure}{t}{0.45\textwidth}
\vspace{-6mm}
    \centering
 \includegraphics[width=\linewidth]{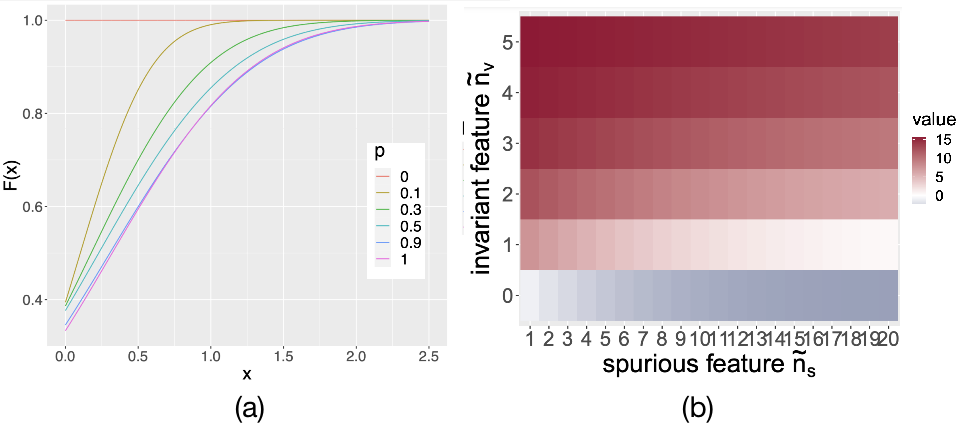}
    \caption{ (a) Illustration of of $F(x)$; (b) $\cA_{\mbox{ood}}(f_{\ose}) - \cA_{\mbox{ood}}(\bar f)$ in Example~\ref{example-2}; }
    \label{fig:illstration}
    \vspace{-6mm}
\end{wrapfigure}
Here $F_p(x)$ is a cumulative density function (CDF) parameterized by $p$ as defined in Appendix \ref{pf:prop2}, which is monotonically increasing with $x$ as shown in Figure~\ref{fig:illstration}(a). Suppose two individuals learns the same number of features with no-overlap, i.e., $\tilde n_v = \bar n_v = n_v$, $\bar n_s = \tilde n_s = n_s$, and $n_{vo} = n_{so} = 0$,  we have 
$\cA_{\ood}(f_{\ose}) = F_p\left( \sqrt{2}t\right)$ and $\cA_{\ood}(\bar f) = \cA_{\ood}(\bar f) =  F_p(t)$  where $t = (1 - p)\sqrt{n_s} + \frac{n_v}{\sqrt{n_s}}$, indicating that  $f_{\ose}$ is better than $\bar f$ since $F(\cdot)$ is monotonically increasing.
\begin{example} 
\label{example-2}
Consider  $p=0.9$ and two individual models learn none overlapped, i.e., $n_{vo} = n_{so}=0$, fixing $\bar n_v = 5, \bar n_s = 20$, and vary $\tilde n_v=0,1,..,5$ and $\tilde n_s = 0,1,...,20$. 
\vspace{-2mm}
\end{example}
Figure~\ref{fig:illstration}(b) illustrates $\cA_{\mbox{ood}}(f_{\ose}) - \cA_{\mbox{ood}}(\bar f) $ on Example~\ref{example-2}. $f_{\ose}$ achieves better OOD performance than $\bar f$ in most cases. One exception is that if $\tilde f$ is much weaker than $\bar f$, e.g., $\bar f$ learns 5 invariant features but $\tilde f$ learns 0 invariant features, the ensemble model $f_{\ose}$ is inferior than $\bar f$.

\subsection{The difference between the output and weight space ensemble}
It is an open problem on the difference between output space ensemble (OSE) and WSE (referred as OSE-WSE difference). Furthermore, the mysterious phenomenon of weight space ensembles outperforming output space ensembles in OOD scenarios has puzzled researchers  \citep{wortsman2022robust, wortsman2022model, rame2022diverse}. We shed light on this by our bilinear theoretical model $\bw^\top\bx\Phi$:
\begin{defi}[Weight space ensemble (WSE)]
     \label{defi:weight_model_ensemble}
     Given the two individual models defined in Definition \ref{defi:individual_model},   the prediction of the WSE  is $f_{\wse}(\bx) =  \frac{1}{4}(\bar \bw + \tilde \bw)^\top \left(\bx(\bar \Phi + \tilde \Phi)\right)$.
     \vspace{-3mm}
\end{defi}
In Appendix~\ref{app:model_comparsion_dnn}, we show that the OSE-WSE difference in a 2-layer DNN is closely connected with the OSE-WSE difference captured by our models in Definition~\ref{defi:model_ensemble}- \ref{defi:weight_model_ensemble}. \begin{prop}[General Results for WSE]\label{prop:general_results_wse}
Consider Definition~\ref{defi:data_generation}-\ref{defi:model_ensemble}, Assumption \ref{ass:small_noise}-\ref{ass:ortho_feature},  and infinite ID and OOD samples. Omittimg small constants involving $\epsilon$, we have $\cA_{\mbox{ood}}(f_{\wse}) = F_p(\frac{(1 - p) (\tilde{n}_s + \bar{n}_s + 2 n_{so}) + (\tilde{n}_v + \bar{n}_v + 2 n_{vo})}{\sqrt{\tilde{n}_s + \bar{n}_s + 14 n_{so}}})$.
\end{prop}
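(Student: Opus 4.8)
The plan is to mirror the proof of Proposition~\ref{prop:general_results_new} for OSE, changing only the bookkeeping of how each feature enters the logits. First I would invoke Lemma~\ref{lemma:classifier} to replace the ID-optimal classifiers by their closed forms $\bar\bw(k)=\sum_{\bx_{v,i}\in\bar\cV}\bmu_{v,i}(k)+\sum_{\bx_{s,j}\in\bar\cS}\bmu_{s,j}(k)$ and the analogous $\tilde\bw(k)$. Substituting these together with $\bx\bar\Phi=\sum_{\bar\cV}\bx_{v}+\sum_{\bar\cS}\bx_s$ and the corresponding $\bx\tilde\Phi$ into $f_{\wse}(\bx)=\tfrac14(\bar\bw+\tilde\bw)^\top\bx(\bar\Phi+\tilde\Phi)$, the WSE logit for class $k$ expands into four bilinear pieces $\bar\bw(k)^\top\bx\bar\Phi$, $\bar\bw(k)^\top\bx\tilde\Phi$, $\tilde\bw(k)^\top\bx\bar\Phi$, $\tilde\bw(k)^\top\bx\tilde\Phi$. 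By symmetry it suffices to treat samples with $\by=\be_1$, for which each selected spurious feature satisfies $\bx_{s,j}=\bmu_{s,j}(c_j)+\text{noise}$ with $c_j$ the realized index of $\bQ_{s,j}(1)$ (so $c_j=1$ w.p.\ $1-2p/3$ and $c_j\in\{2,3\}$ each w.p.\ $p/3$ under $\cD_{\ood}$), while invariant features are always pinned to attribute $1$.

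The key computation is to evaluate these four pieces using the orthonormality of Assumption~\ref{ass:ortho_feature}, under which $\bmu_{\cdot}(k)^\top\bmu_{\cdot'}(k')$ equals $1$ only when the two feature-attribute pairs coincide, is $0$ otherwise, and invariant/spurious means are mutually orthogonal. The two cross pieces survive only on the overlaps $\bar\cS\cap\tilde\cS$ and $\bar\cV\cap\tilde\cV$, whereas the diagonal pieces reproduce the OSE contributions. Collecting terms, I expect each overlap spurious feature to enter the class-$k$ logit with effective multiplicity $4$ (two from the doubled featurizer coefficient on the overlap times two from the doubled classifier coefficient) and each non-overlap feature with multiplicity $1$; by the same bookkeeping the invariant features contribute $\bar n_v+\tilde n_v+2n_{vo}$ to the class-$1$ logit. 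Writing $\tilde m_j\in\{1,4\}$ for this multiplicity and $N_k=\sum_j\tilde m_j\,\bbI(c_j=k)$, after dropping the $O(\epsilon)$ Gaussian-noise terms—negligible by Assumption~\ref{ass:small_noise}, which guarantees the per-feature noise cannot flip the prediction outside an $\epsilon$-event—the correct-prediction event reduces to $\{N_1+c>N_2\}\cap\{N_1+c>N_3\}$ with deterministic shift $c=\bar n_v+\tilde n_v+2n_{vo}$.

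It then remains to cast $\bbP(\hat D_2>0,\hat D_3>0)$, where $\hat D_k=(N_1+c)-N_k$, into the $F_p$ form. Taking expectations gives $\bbE[\hat D_k]=c+(1-p)\sum_j\tilde m_j=(1-p)(\bar n_s+\tilde n_s+2n_{so})+(\bar n_v+\tilde n_v+2n_{vo})$ since $\sum_j\tilde m_j=\bar n_s+\tilde n_s+2n_{so}$, matching the claimed numerator; and the fluctuation scale is set by $\sum_j\tilde m_j^2=\bar n_s+\tilde n_s+14n_{so}$, matching the claimed denominator $\sqrt{\bar n_s+\tilde n_s+14n_{so}}$. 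Crucially, the per-feature variance $\Var(\bbI(c_j{=}1)-\bbI(c_j{=}k))$ and the $\Cov$ between the two gaps depend only on $p$ (the weights $\tilde m_j$ cancel out of the normalized covariance), so the joint event is governed by exactly the same $F_p$ that appears for the individual and OSE cases, evaluated at the standardized mean $\bbE[\hat D_2]/\sqrt{\sum_j\tilde m_j^2}$; exchangeability of the three classes then promotes this to the full OOD accuracy. The main obstacle is precisely this last point: justifying that the heavier, non-uniform weighting of overlap features ($\tilde m_j=4$ versus $1$) enters the limiting law only through the mean and the $\ell_2$ normalization $\sum_j\tilde m_j^2$, leaving the $p$-dependent correlation structure of $F_p$ intact—this is what makes the OSE and WSE answers differ solely in which counts appear inside an identical $F_p$.
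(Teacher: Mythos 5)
Your proposal is correct and follows essentially the same route as the paper's proof: closed-form classifiers via Lemma~\ref{lemma:classifier}, bilinear expansion giving multiplicity $4$ to overlapped features and $1$ to the rest (hence $\bar n_s+\tilde n_s+2n_{so}$ in the mean and $\bar n_s+\tilde n_s+14n_{so}=\bar n_s+\tilde n_s-2n_{so}+16n_{so}$ in the variance), followed by a multivariate Gaussian approximation of the weighted indicator sums in which the $p$-dependent correlation structure of $F_p$ is unchanged because every feature's flip distribution is identical. The "main obstacle" you flag is resolved in the paper exactly as you anticipate, by writing $\boldsymbol{\eta}=(\boldsymbol{T},\boldsymbol{T},4\boldsymbol{T})(\bar\br,\tilde\br,\br^o)^\top+(\bar n_v+\tilde n_v+2n_{vo})\boldsymbol{1}$ and reading off its mean and covariance.
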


Comparing Proposition~\ref{prop:general_results_new} and ~\ref{prop:general_results_wse}, we can see that the only difference between $\cA_{\mbox{ood}}(f_{\wse})$ and $\cA_{\mbox{ood}}(f_{\ose})$ is the number of overlapped invariant and spurious features learned by individual models, i.e., $n_{vo}$ and $n_{so}$. Specifically, when $\bar \Phi$ and $\tilde \Phi$ selects no overlapped features, $f_{\wse}$ and $f_{\ose}$ makes the same prediction since  $\bx \bar \Phi \perp \tilde \bw$ and  $\bx \tilde \Phi \perp \bar \bw$ by Assumption~\ref{ass:ortho_feature} and further $(\bar \bw + \tilde \bw)^\top \left(\bx(\bar \Phi + \tilde \Phi)\right) \propto \bar \bw^\top \bx \bar \Phi + \tilde \bw^\top \bx \tilde \Phi$.  When there is overlapped features: (a) for WSE, the coefficient of overlapped features is  amplified by 2 in $\bar \Phi + \tilde \Phi$, and further amplified twice in $\bar \bw +\tilde \bw$. This results in coefficient of the overlapped feature becoming 4 in $(\bar \bw + \tilde \bw)^\top\bx(\bar \Phi + \tilde \Phi)$. (b) for  OSE, i.e., $\bar \bw^\top \bx \bar \Phi + \tilde \bw^\top \tilde \bx \Phi$, the coefficient of the overlapped feature is 2. See Appendix~\ref{app:overlap_intuition} for a detailed discussion. In  Appendix~\ref{app:when_and_why_average_better_ensemble}, we provide conditions when $f_{\wse}$ outperforms $f_{\ose}$, in addition with simulation results and supportive experiments. \textbf{Our findings provide the first-ever explanation for the mysterious phenomenon of weight space ensembles outperforming output space ensembles in OOD.}

\subsection{Experimental Verification on MultiColorMNIST} 
\label{sect:multicolor_exp}
Previous efforts in OOD community have focused on learning invariant features and discarding spurious features~\citep{arjovsky2019invariant}. However, these approaches have not performed well on real-world datasets ~\citep{rosenfeld2020risks}. This could be due to the requirements of invariant learning, such as the need for numerous domains~\citep{rosenfeld2020risks}, strong regularization~\citep{zhou2022sparse}, and the challenges posed by non-linearity, overparameterization, and optimization~\citep{rosenfeld2020risks, lin2022bayesian, chen2023pareto}. In contrast, our findings show that learning diverse spurious features also help with OOD generalization. This approach, as shown in ensemble-based models, is easily implementable and has shown remarkable empirical success.

To further verify our findings, we contruct MultiColorMNIST, a 10-class variant of CMNIST \citep{arjovsky2019invariant} with 32 spurious features, following Definition~\ref{defi:data_generation}.    As shown in Figure~\ref{fig:multicolorsample}, each sample in MultiColorMNIST consists of 32 color patches, each serving as a spurious feature. We train two neural networks, denoted as $f_{\theta_1}$ and $f_{\theta_2}$, with the same architecture but different initializations on MultiColorMNIST. The results in Table~\ref{tab:MultiColorMNIST_results_new} show that the OSE model ($f_{\theta_1}(\bx)$ + $f_{\theta_2}(\bx)$) improve OOD performance over individual models ($f_{\theta_1}(\bx)$ and $f_{\theta_2}(\bx)$). In Appendix~\ref{app:multicolormnist},  (1) we show that  each individual model learn a subset of spurious features in MultiColorMNIST and OSE utilizes more diverse spurious features (2) we construct SingleColorMNIST with only one spurious feature and show OSE yields little performance gain since both individual models learn the same spurious feature (similar to the results in \cite{rame2022diverse}).  

\begin{minipage}{0.23\linewidth}
    \centering
    \includegraphics[width=0.68\linewidth]{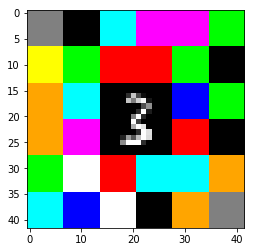}
    \captionsetup{font=small}    
    \captionof{figure}{A sample from MultiColorMNIST}
    \label{fig:multicolorsample}
\end{minipage}  
\begin{minipage}{0.75\linewidth}
\centering
\resizebox{\linewidth}{!}{
\footnotesize{
\begin{tabular}{cccccc}
\toprule
$p$ &            0.70 &            0.75 &            0.80 &            0.85 &            0.90 \\
\midrule
model 1         &  71.05$\pm$1.04 &  60.07$\pm$1.04 &  48.57$\pm$0.92 &  36.93$\pm$0.70 &  26.01$\pm$0.45 \\
model 2        &  71.77$\pm$0.94 &  60.75$\pm$0.91 &  49.26$\pm$0.83 &  37.74$\pm$0.66 &  26.63$\pm$0.42 \\
\midrule
model ensemble &  \textbf{78.64$\pm$0.73} &  \textbf{67.61$\pm$0.80} &  \textbf{55.25$\pm$0.75} &  \textbf{42.34$\pm$0.64} &  \textbf{29.28$\pm$0.40} \\
\bottomrule
\end{tabular}
}
}
\captionsetup{width=\linewidth}
\captionof{table}{OOD performance of (output space) model ensemble on MultiColorMNIST. The spurious correlation is $1$ and $1-p$ in the training and testing set, respectively. A larger $p$ indicates larger distributional shift}
\label{tab:MultiColorMNIST_results_new}
\end{minipage}

\section{BAlaNced averaGing (BANG)}
\label{sect:calibrated_ensemble}
\vspace{-1mm}
Our previous results show that EBM can boost the OOD performance. An implicit requirement is that the scaling of the two models should be roughly the same. If the two models have different scalings, e.g., one model is much more confident than the other, the EBM improvement is weakened.  
\begin{prop} [Imbalanced scaling weakens WSE] 
\label{prop:imbalanced_averaging}
Consider the Example~\ref{exp:illustrative}, Definition~\ref{defi:data_generation}-\ref{defi:weight_model_ensemble}, Assumption \ref{ass:small_noise}-\ref{ass:ortho_feature}. Consider an  WSE of two imbalanced models, $\bar f = (\bar \bw, \bar \Phi)$ and $\tilde f_\lambda = (\lambda \tilde \bw, \lambda \tilde \Phi)$, where $\lambda \geq 1$. Specifically,  $f_{\wse}(\bx) = 0.25(\bar \bw + \lambda \tilde \bw) \bx (\bar \Phi + \lambda \tilde \Phi)$. 
We have $\cA_{ood}(f_{\wse})|_{\lambda > \sqrt{5}}~-~\cA_{ood}(f_{\wse})|_{\lambda  = 1}~<~-34/729p^3$.
\end{prop}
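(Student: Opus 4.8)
The plan is to reduce the weight-space ensemble to a comparison that, once $\lambda>\sqrt5$, is governed almost entirely by the scaled-up model $\tilde f_\lambda$, and then to read off its OOD accuracy from the same counting argument that proves Proposition~\ref{prop:simple_case_improvment}. As there, I work conditionally on a class (say $\by=\be_1$) and on a realized OOD pattern $\bQ_s$, and average over classes and over $\bQ_s$ at the end; by the three-class symmetry of Definition~\ref{defi:data_generation} each class contributes equally, so the per-class error equals the overall error.

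First I would expand the WSE score. Writing the class-$k$ score as $\tfrac14(\bar\bw+\lambda\tilde\bw)(k)^\top\bx(\bar\Phi+\lambda\tilde\Phi)$ and using that in Example~\ref{exp:illustrative} the two featurizers select disjoint feature sets, Assumption~\ref{ass:ortho_feature} kills the two cross terms $\bar\bw^\top\bx\tilde\Phi$ and $\tilde\bw^\top\bx\bar\Phi$. Hence the score is proportional to $\bar\bw(k)^\top\bx\bar\Phi+\lambda^2\,\tilde\bw(k)^\top\bx\tilde\Phi$, i.e.\ to $\bar l(k)+\lambda^2\tilde l(k)$, where $\bar l,\tilde l$ are the noise-free per-class scores of the individual models from Proposition~\ref{prop:simple_case_improvment}. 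On a class-$1$ sample these equal $\bar l=(2+a_1,a_2,a_3)$ and $\tilde l=(2+b_1,b_2,b_3)$, where $a_k$ (resp.\ $b_k$) counts how many of the three spurious features of $\bar f$ (resp.\ $\tilde f$) are corrupted toward class $k$, each count having the multinomial law induced by Definition~\ref{defi:data_generation} (class $k\neq1$ with probability $p/3$ per feature).

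The crux is a domination/tie-break dichotomy. The entries of $\tilde l$ are integers, so distinct entries differ by at least $1$, whereas the pairwise gaps of $\bar l$ are at most $2+3=5$. Consequently, once $\lambda^2>5$, the sign of $(\bar l(k)+\lambda^2\tilde l(k))-(\bar l(k')+\lambda^2\tilde l(k'))$ is decided by $\tilde l(k)-\tilde l(k')$ whenever that is nonzero, and only by $\bar l(k)-\bar l(k')$ on an exact tie in $\tilde l$; residual double-ties are resolved symmetrically by the Gaussian noise (the $\tfrac12$-weighting already implicit in Proposition~\ref{prop:simple_case_improvment}). Thus $f_{\wse}|_{\lambda>\sqrt5}$ predicts the class that is lexicographically largest in $(\tilde l,\bar l)$, which is exactly where the threshold $\sqrt5$ comes from. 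Enumerating the $\tilde l$-patterns that are not a strict class-$1$ win: the two ``hard'' patterns $(b_1,b_2,b_3)\in\{(0,3,0),(0,0,3)\}$ (probability $(p/3)^3$ each) force an error regardless of $\bar f$, while the two ``tie'' patterns $\{(0,2,1),(0,1,2)\}$ leave a $1$-vs-$k$ tie that $\bar f$ now breaks correctly with probability $1-O(p^3)$. Carrying the enumeration to order $p^6$ gives $\cA_{\ood}(f_{\wse})|_{\lambda>\sqrt5}=1-\tfrac{2}{27}p^3-\tfrac{15}{729}p^6$. Since the featurizers are disjoint, $f_{\wse}|_{\lambda=1}$ coincides with $f_{\ose}$, so Proposition~\ref{prop:simple_case_improvment} yields $\cA_{\ood}(f_{\wse})|_{\lambda=1}=1-\tfrac{2}{81}p^5-\tfrac{17}{729}p^6$. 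Subtracting,
\begin{equation*}
\cA_{\ood}(f_{\wse})|_{\lambda>\sqrt5}-\cA_{\ood}(f_{\wse})|_{\lambda=1}=\frac{-54p^3+18p^5+2p^6}{729},
\end{equation*}
and the claimed bound $-\tfrac{34}{729}p^3$ is equivalent to $18p^5+2p^6<20p^3$, i.e.\ $9p^2+p^3<10$ on $[0,1]$, which holds for $p\in[0,1)$ and is tight at $p=1$.

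I expect the last enumeration to be the main obstacle. Because the constant $\tfrac{34}{729}$ is tight at $p=1$, it is not enough to track the leading $p^3$ error of the dominated model: one must also quantify, to order $p^6$, how the subordinate model $\bar f$ resolves the $\tilde f$-ties, since those corrections ($+18p^5+2p^6$) are precisely what close the gap between the naive leading term $-\tfrac{54}{729}p^3$ and the stated bound. A secondary subtlety is making the domination step rigorous under Assumption~\ref{ass:small_noise}: one must verify that the noise perturbs only exact double-ties and contributes terms absorbed into the omitted $\epsilon$, so that the discrete lexicographic rule is the correct idealization of the $\lambda>\sqrt5$ ensemble.
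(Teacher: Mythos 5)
Your proof is correct and follows essentially the same route as the paper's: a case analysis on the corruption pattern of $\tilde f_\lambda$'s spurious features in which $\lambda^2>5$ lets the scaled model dictate the prediction except on exact ties (which $\bar f$ then breaks), yielding $\cA_{ood}(f_{\wse})|_{\lambda>\sqrt{5}} = 1-\tfrac{2}{27}p^3-\tfrac{5}{243}p^6$ and the stated gap $(-54p^3+18p^5+2p^6)/729$. Your lexicographic packaging of the domination/tie-break dichotomy and your explicit check that $18p^5+2p^6<20p^3$ for $p\in(0,1)$ are a slightly cleaner presentation of the paper's case-by-case comparison, not a different argument.
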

See Appendix~\ref{app:proof_of_imbalance} for proofs and Appendix~\ref{app:illustration_overconfidence} for an illustration of the over-confidence characterized by $\lambda$. When $\lambda=1$, indicating similar confidence levels between $\bar f$ and $\tilde f_\lambda$, the WSE is balanced. However, when $\lambda>\sqrt{5}$ and $\tilde f_\lambda$ is significantly more confident than $\bar f$, $f_{\wse}$ becomes biased towards $\tilde f_\lambda$, resulting in a performance drop of over $34/729p^3$. Here we set $\lambda=\sqrt{5}$ for illustration purposes and similar results can be similarly obtained for other $\lambda > 1$.  Unfortunately, we find WiSE-FT, which is the WSE of the pre-trained model (PM) and fine-tuned  model (FM), suffers from the imbalanced confidence issue. Specifically, we compare the PM and FM  on their confidence and accuracy. The confidence is defined as the largest probability that a model assigns to a class (details in Appendix~\ref{app:details_confidence}).  Figure \ref{fig:calibration} shows that the fine-tuned model is highly over-confident, especially on OOD datasets, e.g., ImageNetA have only 0.37 accuracy while the average confidence is over 0.7. Such overconfidence magnifies the FM's incorrect prediction, leading to deteriorate OOD ensemble performance (details in Appendix~\ref{app:better_calibration}).

A direct fix to the issue of over-confidence is to tune the temperature of the softmax of the fine-tuned model \citep{kumar2022calibrated}. 
However, this method can not be directly applied to WiSE-FT since WiSE-FT ensemble model weights instead of the outputs. Moreover, the temperature scaling tuned on the ID dataset \citep{kumar2022calibrated} fails to calibrate the fine-tuned model on OOD datasets, where over-confidence is more severe (results of \citep{kumar2022calibrated} in Appendix~\ref{app:more_bang_results}-\ref{app:better_calibration}).
 Therefore, we propose BAlaNced averaGing (BANG), which adopt label smoothing or Mixup during fine-tuning to prevent overconfidence and then average the pre-trained model with such fine-tuned model.  (1) Label smoothing replaces the label of the true class (e.g., 1) with a positive value (e.g., 0.8), while distributing the smoothing parameter (e.g., 0.2) evenly among the other classes \citep{muller2019does}. 
(2) Mixup \citep{zhang2017mixup} generates new samples during fine-tuning by linearly mixing pairs of training  data and their labels. 
\begin{wrapfigure}{l}{0.4\textwidth}
\vspace{-5mm}
     \centering
    \includegraphics[width=1.0\linewidth]{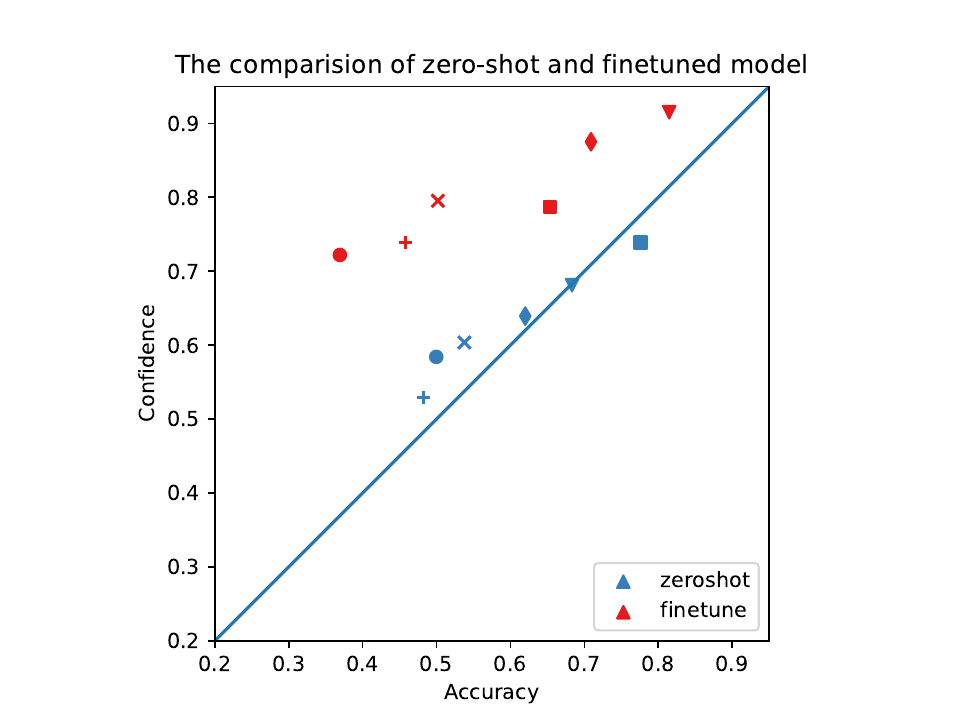}
         \caption{Comparison of confidence and accuracy between zero-shot and finetuned model. In the figure, $\triangledown$ refers to IN, $\circ$ for IN-A, $\square$ for IN-R, $+$ for IN-S, $\diamondsuit$ for IN-V2 and $\times$ for ObjNet.}
        \label{fig:calibration}
    \vspace{-5mm}
\end{wrapfigure}

We conduct experiments with CLIP ViT-B/16\citep{radford2021learning}. 
We impose Mixup or Label Smoothing during fine-tuning the pre-trained CLIP on ImageNet (IN), and test OOD performance on IN-V2, IN-R, IN-A, IN-S and ObjectNet. 
Following \cite{wortsman2022robust}, BANG averages the pre-trained CLIP model and the model finetuned with LS and MixUp (details in Appendix~\ref{app:experimental_details}). 
The results in Table \ref{tab:BANG_performance} show that BANG effectively improve the performance over WiSE-FT. Specifically,  BANG(LS+Mixup), where both LS and MixUp are adopted, achieves 1.9\% higher average OOD accuracy than WiSE-FT. Further experimental results in the appendix show that Mixup and Label Smoothing can effectively alleviate the over-confidence of the fine-tuned model on both ID and OOD datasets.     

Since Mixup and LS also improve the performance of the fine-tuned model, so a curious reader would wonder whether the improvement of BANG comes from better calibration or just due to the improvement in the fine-tuned model. We conduct further investigation in Appendix \ref{app:better_calibration} to confirm the contribution of better calibration: (1) Dividing the weight of the vanilla fine-tuned model by multiple scalars significantly enhances the performance of weight averaging, which nearly matches the performance of BANG. (2) BANG can correct substantially more samples that is mis-classified by the fine-tuned model. We also show that BANG's effectiveness can not be explained by other data augmentation methods in Appendix~\ref{app:more_bang_results}.

\begin{table}[]
    \centering
    \resizebox{0.8\linewidth}{!}{
    \begin{tabular}{c|c|c|ccccc|c}
    \toprule
        Methods & Model Averaging & IN &IN-V2& IN-R & IN-A &IN-S& ObjectNet & Avg OOD\\
        \midrule
        Zero-shot \citep{wortsman2022robust} & No & 68.3 & 61.9 & 77.6 & 49.8 & 48.2 & 53.0 & 58.1  \\
        Fine-tuning \citep{wortsman2022robust} &No & 81.3 & 70.9 & 65.6 & 36.7 & 46.3 & 49.6 & 53.8  \\
        Fine-tuning (LS) & No & 82.0 & 72.3 & 63.3 & 38.3 & 46.5 & 51.1 & 54.3 \\ 
        Fine-tuning (Mixup) & No & 83.0 & 72.7 & 66.4 & 43.7 & 48.8 & 52.4 & 56.8 \\ 
        Fine-tuning (Mixup + LS) & No & 82.9 & 72.7 & 65.8 & 43.6 & 48.5 & 52.2 & 56.6 \\
        \midrule 
        WiSE-FT \citep{wortsman2022robust} &Yes &  81.7	 &72.8	 &78.7 &	52.2 &	53.9	 &57.3	 &63.0 \\
        BANG (LS) & Yes & 82.1 & 73.3 & 78.2 & 55.2 & 53.7 & 58.9 & 63.9 \\
        BANG (Mixup) &Yes & 81.5 & 73.0 & 79.5 & 57.9 & 54.5 & 58.7 & 64.7 \\
        BANG (Mixup + LS) &Yes & 81.6 & 73.1 & 79.7 & 58.2 & 54.8 & 58.9 & \textbf{64.9} \\
        \bottomrule
    \end{tabular}
    }
    \caption{Results of fine-tuning CLIP VIT-B/16 on ImageNet. LS is short for Label Smoothing. The performance of the baseline methods are from the Table 8 of \citep{wortsman2022robust}.}
    \label{tab:BANG_performance}
    \vspace{-8mm}
\end{table}



\section*{Acknowledgement}
We are grateful for the insightful discussion with Yongqiang Chen, Damien Teney, Alexandra Rame, Pang Wei Koh, Mitchell Wortsman, Difan Zou, and Hao Wang. Thank you for your valuable inputs.

\bibliography{iclr2024_conference}
\bibliographystyle{iclr2024_conference}
\newpage
\appendix

\tableofcontents
\newpage

\section{Social Impact}
We investigate how to enable machine learning models to generalize in OOD scenarios, which makes machine learning models more reliable in real-world applications.

\section{Related Works} 
\label{sect:related_works}

\subsection{A review on the existing methods}
\label{app:reviewing_existing_works}
\paragraph{Out-of-distribution generalization} 
Machine learning models are based on the I.I.D. (independently and identically distribution) assumption. Whereas, the I.I.D. assumption can be easily violated since the model can easily encounter novel testing samples that are from distributions different  with the training distribution. This is also known as the out-of-distribution generalization (OOD) problem. Existing works find that the model performance deteriorates dramatically under distributional shift. This is especially the case when the model rely on spurious features that are unstable in a new domain \citep{geirhos2020shortcut, arjovsky2019invariant, deng2023robust}. OOD problem has attracted great attention in recent years and there are a rich line of works in this direction, such as Invariant Risk Minimization (IRM) \citep{arjovsky2019invariant, lin2022zin}, model averaging \citep{wortsman2022model, wortsman2022robust, rame2022recycling, cha2021swad}, feature alignment methods \citep{ganin2016domain, sun2016deep, li2018deep} and so on.

Among them,  Invariant Risk Minimization (IRM) has gained significant attention from the researchers \citep{arjovsky2019invariant}  and inspires a great line of works. Recall that there are two kinds of features for OOD generalization: invariant features that can stably predict the labels, and spurious features whose correlation with the labels is unstable. IRM tries to build robust models by extracting only invariant features.  IRM has strong theoretical guarantees in linear system and clear connection with causal theory. Nevertheless, IRM methods face challenges in dealing with large scale real-world datasets, as it has been repeatedly observed that IRM cannot outperform ERM on various datasets. Some works have provided explanation for this, e.g., \citep{rosenfeld2020risks} shows that IRM needs a very great number of domains, \citep{rosenfeld2020risks} shows IRM lacks theretical guarantees on non-linear models, \cite{lin2022zin} shows it is difficult to learn invariance without domain partition and \citep{lin2022bayesian, chen2023pareto} show the difficulty of optimizing IRM objects on deep neural networks.  In contrast, model averaging is exceptionally powerful and achieve SOTA performance in a lot of benchmark with various models and architecture \citep{cha2021swad, wortsman2022robust, wortsman2022model, rame2022diverse, chu2022dna, arpit2022ensemble}.

\paragraph{Output and weight space ensemble} The ensemble of multiple models is a powerful idea that often leads to stronger predictive performance \citep{caruana2004ensemble, dietterich2000ensemble, bauer1999empirical, breiman1996bagging}. Typically, conventional ensemble methods aggregate the outputs of models, as known as the output space ensemnle. The recent application usually average the parameters of models which is generated from the same pre-training model by finetuning \citep{wortsman2022model, wortsman2022robust, cha2021swad}, also known as the weight space ensemble. While averaging two models trained from scratch by different initialization often yields poor results (close to random guessing).  \citep{neyshabur2020being} finds that fine-tuning two models from the same pre-trained initialization results in two different models that were connected via a linear path in weight-space,  along which the performance remains high. This is also known as  linear mode connectivity \citep{frankle2020linear}.  A notable difference between ensemble in ID and OOD study is that the improvement of ensemble in OOD is much more significant than that in IID.  \citep{wortsman2022robust} shows that an ensemble the finetuned model with the pre-trained model improve near 1pp in ImageNet (the ID domain) and over 6-8pp on the variants of ImageNet (the OOD Domain).  Actually, model averaging is still among strongest methods for OOD generalization. It still remains mysterious on why averaging methods are so effective for OOD.


\paragraph{Theory of Out-of-distribution generalization.}
Existing theory mostly focus on the worst case metric on analyzing the OOD performance \citep{wald2022malign, arjovsky2019invariant, rosenfeld2020risks, puli2021out, zhou2022sparse}. The worst case metric requires a model to be robust at \textit{any} OOD testing distribution. Typically, a model that only uses invariant features can minimize the worst case metric. However, as we discuss above, invariance learning is hard in practice and performs ineffectively on real world datasets \citep{gulrajani2020search, rosenfeld2020risks, lin2022zin}.  The worst case metric based theory can not explain the success of model averaging methods. To be specific, the averaging of two models can use spurious features that learnt by each individual model due to its pessimism as described in Appendix \ref{sect:worst_case_discussion}.  In contrast, our theoretical results characterize the probability of the model failure due to distributional shift, which can successfully explain the experimental results. 

\subsection{On our difference with existing works}
\label{app:our_difference_related_works}
\paragraph{Difference with existing works on learning diverse features.} There have been some works that promote feature diversity to enhance empirical performance OOD generalization by weight average~\citep{chu2022dna,rame2022diverse, rame2023model}, feature concatenation~\citep{zhang2023learning}, boosted rich feature learning~\citep{zhang2022rich,chen2023towards, jain2022combining, teney2022evading, feng2023leveraging}, and utilizing model zoo \citep{dong2022zood, chen2023explore}. \cite{teney2022evading} train a set of diverse models and select the best one among them for OOD. \cite{feng2023leveraging} proposes to use an ensemble of prompts which contains diverse descriptions of a class to perform classification via CLIP. \cite{jain2022combining} train two models with different feature priors and then ensemble the predictions of these models. However, existing explanations either do not distinguish the invariant or spurious features~\citep{chu2022dna,rame2022diverse, rame2023model, zhang2023learning, dong2022zood, chen2023explore}, or focus only on learning the potentially missing invariant features~\citep{chen2023towards, feng2023leveraging}. In fact, according to existing invariance learning perspective \cite{arjovsky2019invariant} arguing that models relying on spurious features are prone to failure in OOD scenarios, these methods that learn diverse features while also incorporating spurious features may not be able to generalize effectively under distributional shift.  
In contrast, our spurious feature diversification viewpoint provides a explanation by characterizing why and when incorporating more diverse spurious feature diversification can improve OOD performance. 

\paragraph{Difference with the existing theoretical results on ensemble and boosting in IID settings.}
There are existing explanations for the effectiveness of model ensemble in the IID setting, which is mainly from the perspective of variance due to over-fitting the label noise in finite samples cases \citep{dietterich2002ensemble}. 
Specifically, model ensemble can have smaller variance in prediction compared with each single model.
Whereas, we consider infinite sample case, where the variance of the model due to fitting label noise is zero. 
So model ensemble can not bring significant  IID improvement in this case. However, the model trained on infinite samples can still fail due to distributional shift \citep{arjovsky2019invariant}. This is because the model utilizes the spurious features, which are also considered as a kind of bias \citep{wald2022malign}. Our results show that model ensemble can reduce the risk of model failure and lead to better expected performance under distributional shift by spurious feature diversification. In other words, model ensemble reduces the probability of the model failure due to the bias. This is a new result in the OOD problem as shown in Proposition~\ref{prop:simple_case_improvment} and \ref{prop:general_results_new}. Notably, \cite{allen2020towards} also considers ensemble in the IID setting, however, their theory can not explain the OOD performance improvement of ESM models on the data in Definition~\ref{defi:data_generation}, explain the FalseFalseTrue phenomenon, or explain the difference of weight and output space ensemble.

Another related area to this work is boosting. Boosting can benefit by training multiple models, where each model corrects the mistakes made by the previous ones and each model would possibly utilize on different subsets of features. While previous studies on boosting mainly focused on ID scenarios \citep{schapire1990strength,freund1997decision, schapire2013explaining, schapire2003boosting}, we show that in the context of OOD, the improvement in performance due to using diverse features can be even more significant. This is because different irrelevant features can cause different errors when the distribution changes, and diversifying the features helps reduce the impact of each individual feature (as shown in Figure 1). By utilizing a diverse set of models, boosting allows us to take advantage of a wider range of features and effectively deal with the challenges posed by OOD situations.

\paragraph{Difference with existing explanations on the OOD performance of ensemble-based methods (EBM).} There are some previous attempts that try to explain the effectiveness of EBM for OOD. \cite{cha2021swad} shows that the loss landscape changes under distributional shift and model averaging can lead to flatter minima. However, as discussed in \cite{rame2022diverse}, the upper bound of \cite{cha2021swad} is uncontrolled and their analysis based on flat minima fails to explain many experimental results. \cite{rame2022diverse} decomposes the OOD loss into  the bias, variance and covariance terms. They show that the variance term can benefit from EBM. Different from the results of \cite{rame2022diverse} that only tackles with the variance term,  our results provide a concise characterization on the overall OOD performance. Further, \cite{rame2022diverse}'s results can not differentiate between the weight and output space ensemble. 
\newpage

\section{Supportive Empirical Results for the Theory}

\subsection{FalseFalseTrue Phenomenon}
\label{sect:wrong_wrong_correct}
In this subsection, we take a deeper look at WiSE-FT \citep{wortsman2022robust}, a popular model averaging method that averages the weights of the pre-trained and fine-tuned model. \citep{wortsman2022robust} obtains the fine-tuned model by fine-tuning the pre-trained CLIP model on ImageNet. They have shown that the averaging of the pre-trained and the fine-tuned model can outweigh both of them on ImageNet (ID dataset) as well as five OOD datasets (ImageNetV2, ImageNetA, ImageNetR,  ImageNetSketch and ObjectNet). We denote the pre-trained model as PM, fine-tuned model as FM, and averaged model as AM.

To understand why model averaging is effective,  
we divide each dataset into eight groups of samples according to whether the PM, FM and AM make correct predictions, respectively. We further use T/F to denote whether a model makes correct predictions, i.e., T for True and F for False. 
For example, we use PM(T)-FM(F)-AM(T) to denote the group of samples on which the predictions of PM , FM and AM are correct, wrong, and correct, respectively. A simple explanation for the improvement of the averaging of two models is that when one model makes a mistake and the other one is correct, the correct model can rectify the mistakes made by the other model. So we evaluate the performance on the group of data where one model makes a wrong prediction while the other model makes a correct prediction, i.e., the group containing PM(T)-FM(F) and  PM(F)-FM(T). We refer to this group of data as TF+FT for short in the following discussion. We also look into another subset TT+FF which contains PM(T)-FM(T) and  PM(F)-FM(F).

Given a subset $\cG$ of a dataset $\cD$, we use $\mbox{CorrectNum}(\cG; f)$ to denote the number samples in $\cG$ that are correctly predicted by a model $f$, e.g., $\mbox{CorrectNum}(\mbox{TF+FT}; \mbox{PM})$ stands for the number of samples that are correctly classified by the pre-trained model PM. We propose the metric $\mbox{ImproveContri}(\cG)$ which estimates how much AM performs better than PM and FM on the group $\cG$ and how much the improvement on $\cG$ contributes to the overall accuracy improvement on the whole dataset $\cD$:
\begin{align}
\label{eqn:improve_contri}
\small
\mbox{ImproveContri}(\cG) = \frac{\mbox{CorrectNum}(\cG; \mbox{AM}) - \max \{\mbox{CorrectNum}(\cG; \mbox{PM}), \mbox{CorrectNum}(\cG; \mbox{FM})\} }{|\cD|}
\end{align}
For Example, suppose $\cD$ contains 1,000 samples and its subset $\cG$ contains 200 samples. PM, FM and AM correctly predict 120, 118, 130 samples in $\cG$, i.e., $\mbox{CorrectNum}(\cG; \mbox{PM}) = 120$, $\mbox{CorrectNum}(\cG; \mbox{FM}) = 118$, $\mbox{CorrectNum}(\cG; \mbox{PM}) = 130$. AM outperform PM and FM by making 10 more correct predictions on $\cG$, further these 10 samples contribute to $\frac{10}{1,000} \times 100\% = 1.0\%$ accuracy improvement on the whole dataset. Note that $\mbox{ImproveContri}(\cD)$ denotes the accuracy improvement of model averaging on the dataset $\cD$, which is also denoted as  $\mbox{ImproveContri(ALL)}$ in the following discussion. The results of $\mbox{ImproveContri(TT+FF)}$, $\mbox{ImproveContri(TF+FT)}$ and $\mbox{ImproveContri(ALL)}$ are illustrated in Figure \ref{fig:four_graphs}(a).

We surprisingly find that  $\mbox{ImproveContri}(\cG)$ is significant on TT+FF in all the datasets, which means the averaged model AM can exceed PM and FM on the groups where PM and FM are both right or wrong. Recall that the subset TT+FF contains four groups, PM(T)-FM(T)-AM(T), PM(T)-FM(T)-AM(F), PM(F)-FM(F)-FM(F), and PM(F)-FM(F)-FM(T). We further plot the ratio of the sample size in  PM(T)-FM(T)-AM(F) and PM(F)-FM(F)-FM(T) over $|\cG|$ in Figure \ref{fig:four_graphs}(b), respectively. We find that PM(T)-FM(T)-AM(F) is nearly the same (about 0.5\% ) in all datasets. The group PM(F)-FM(F)-AM(T) is much larger than PM(T)-FM(T)-AM(F), especially in OOD datasets. It indicates that AM can make correct predictions on many samples where the both PM and FM make wrong predictions when distributional shift occurs!   Interestingly, we find  $\mbox{ImproveContri}(TF+FT)$ is negative on some datasets, e.g, IN-R and IN-S. In Section~\ref{sect:calibrated_ensemble} and Appendix~\ref{app:better_calibration}, we find that this is because the fine-tuned model is highly over-confident and  the fine-tuned model dominate WiSE-FT even when it make mistakes. 

\begin{figure}
     \centering
     \begin{subfigure}[]{0.48\linewidth}
         \centering         \includegraphics[width=\linewidth]{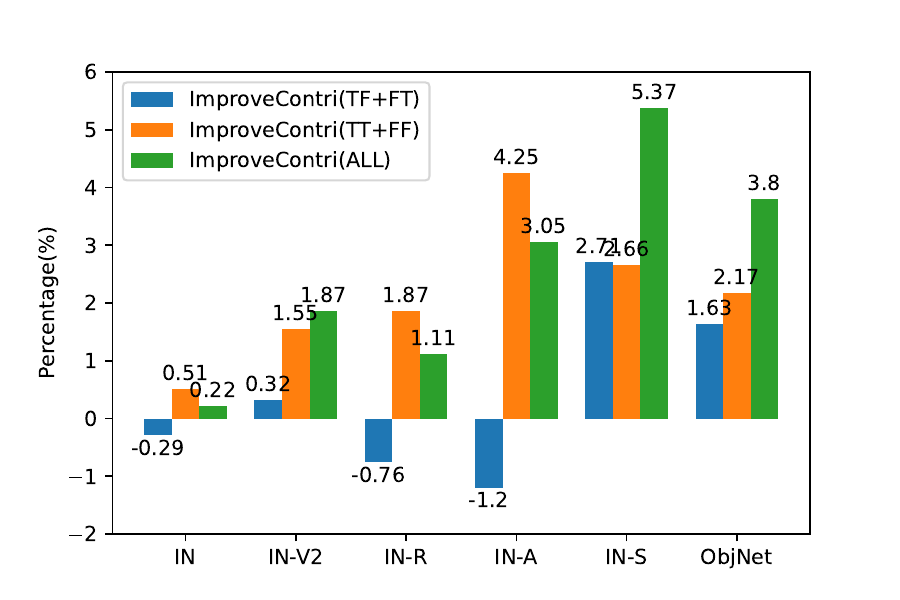}
      \label{fig:improvement_contribution}
     \end{subfigure}
     \hfill
     \begin{subfigure}[]{0.48\linewidth}
         \centering         \includegraphics[width=\linewidth]{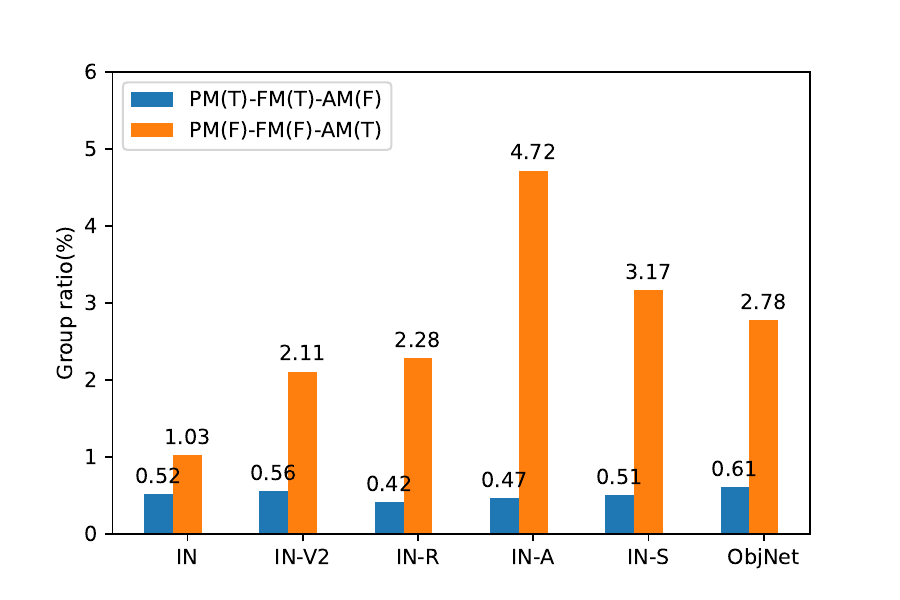}
        \label{fig:ttf_and_fft}
     \end{subfigure}
    \caption{A closer look at WiSE-FT, which averages the pre-trained CLIP and the model obtained by fine-tuning CLIP on ImageNet. Here ImageNet is regarded as ID domain and the other 5 ImageNet variants are OOD domains, i.e., IN-V2 (ImageNetV2), IN-R(ImageNetR), IN-A(ImageNetA), IN-S (ImageNetSketch), and  ObjNet (ObjectNet). (Left) $\mbox{ImproveContri}(\cG)$ is defined in Eqn.~ \eqref{eqn:improve_contri}, which estimates how the AM (averaged model)  performs better than the PM (pre-trained) and FM (fine-tuned model) on the group $\cG$ and how much the improvement on $\cG$ contributes to the overall accuracy improvement on the whole dataset $\cD$. (Right) The ratio of sample size in PM(T)-FM(T)-AM(F) and PM(F)-FM(F)-AM(T) over the same size of the whole dataset. Here PM(T)-FM(T)-AM(F) denotes the group where the AM make wrong predictions and the PM and FM models make correct predictions; PM(F)-FM(F)-AM(T) denotes the group where AM make correct predictions while both PM and FM make wrong predictions.   Putting these two figures together, we can see the AM can correct many samples on which PM and FM make wrong predictions in OOD.  }
    \label{fig:four_graphs}
\end{figure}

\textbf{Remark}: In Figure~\ref{fig:simple_falsefalse_correct_grad}(Left) of Section~\ref{sect:under_stand_avg}, we present the results of ImproveContri(TT+FF) to represent the samples where both individual models make incorrect predictions, but the averaged model makes correct predictions. ImproveContri(TT+FF) is calculated as the group ratio of PM(F)-FM(F)-AM(T) subtracted by PM(T)-FM(T)-AM(F). We use ImproveContri(TT+FF) instead of PM(F)-FM(F)-AM(T) because we believe that there is a certain proportion of samples in PM(F)-FM(F)-AM(T) where the averaged model corrects mistakes due to the randomness introduced by the non-linearity of deep neural networks (DNNs) during weight averaging. To approximate such randomness, we use the size of PM(T)-FM(T)-AM(F). This adjustment helps account for a more accurate approximation of the sample ratios where the averaged model corrects the samples due to its utilization of more diverse spurious features. 

\subsection{Deep neural networks learn different features}
\label{app:allen_zhu_results}
In Section~\ref{sect:under_stand_avg}, we have shown that the pre-trained and fine-tuned uses different features and the averaged model can utilize more diverse features. Actually, \citep{allen2020towards} provides empirical evidence (e.g., Figure 3 and 4 in \citep{allen2020towards}) supporting the use of diverse features by different deep neural networks with same architecture, even when trained on the same datasets (with different initialization). We add their empirical observations in Figure~\ref{fig:allen_zhu_image} for easy of reference. 
\begin{figure}
    \centering
    \includegraphics[width=0.8\linewidth]{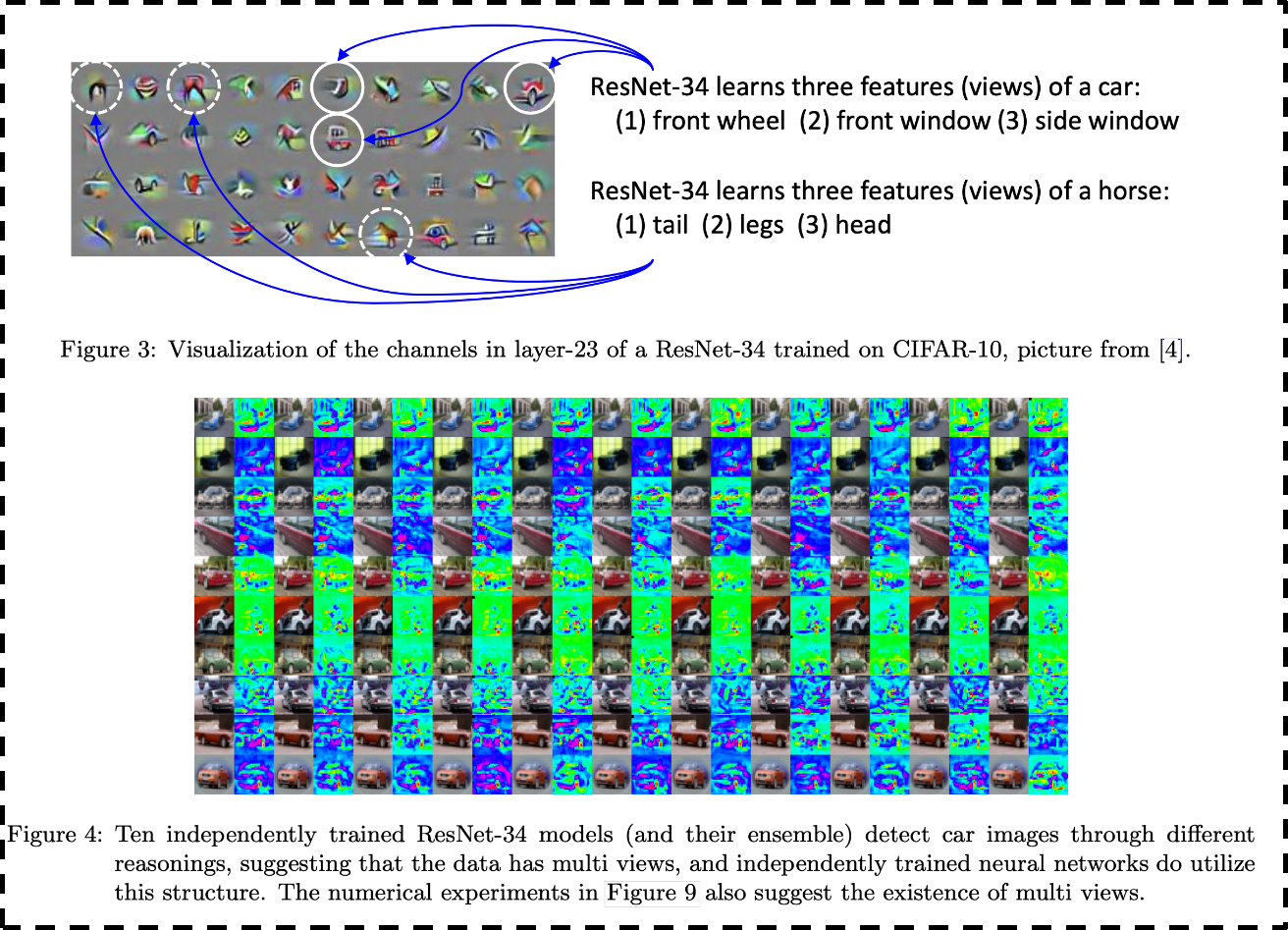}
    \caption{Figures taken from 
 \citep{allen2020towards} which show that different DNN (with the same architecture) learns different features even trained on the same dataset.}
    \label{fig:allen_zhu_image}
\end{figure}

\subsection{Experiments on MultiColorMNIST}
\label{app:multicolormnist}
\textbf{MultiColorMNIST}. We extend the CMNIST~\citep{arjovsky2019invariant} to MultiColorMNIST, which is constructed following  Definition~\ref{defi:data_generation}. MultiColorMNIST contains 10 classes with 32 spurious features. Each image has $42 \times 42 \times 3$ pixels. There are 32 patches in each image and each patch can take one of 10 colors. Figure~\ref{fig:two_samples_mcmnist} illustrates two samples from  MultiColorMNIST. Specifically, the label of the sample is generated from the shape of the digit and each color patch is perfected correlated with the label. Let $C_i$ denote $i$th color patch for $i=1, 2,...,32$.  Each $C_i$ takes one of the color which is perfectly correlated with $\by$. For example,  the $1$st patch, i.e., $C_1$, always takes `white' on samples with label $5$; the $2$nd patch, i.e., $C_2$, always takes `yellow' on samples with label $5$.  Each $C_i$ is independently generated from the label $\by$ and we have $C_i \perp C_j | \by$ for $i \neq j$. See Figure~\ref{fig:MultiColorMNIST_generation} for detailed illustration of the data generation process which follows the theoretical Definition~\ref{defi:data_generation}.  In the OOD testing distribution, the spurious correlation can fail with probability $p$. For example, samples with label $5$ can randomly pick any color with probability $p$ in OOD . The data generation process is analogous to the theoretical setting in Definition~\ref{defi:data_generation}, where each patch is a spurious feature and each color is an attribute that the spurious feature can take. 

\textbf{SingleColorMNIST}. We also introduce SingleColorMNIST for better comparision. SingleColorMNIST has 10 classes and each image has $42 \times 42 \times 3$ pixels, which is the same with MultiColorMNIST. However, SingleColorMNIST only contains 1 spurious features. In other words, the 32 patches in each image are the same. The spurious correlation is defined similarly with  MultiColorMNIST. Figure~\ref{fig:two_samples_singlecmnist} illustrates two samples from  SingleColorMNIST. 

\textbf{Experimental Details}. We use the following configuration for both SingleColorMNIST and MultiColorMNIST. We use an 2 layer MLP with 64 hidden units to perform classification. We adopt Adam with learning rate $10^{-3}$ and batch size 100. We train for 5000 steps and report the performance at the last step. We train two individual models $\bar f$ and $\tilde f$ with different random initialization on MultiColorMNIST. We also evaluate the ensemble of the two models, i.e., $f_{\ose}(\bx) = \bar f(\bx) + \tilde f(\bx)$. Each experiment is repeated for $n=20$ random seeds.   

\textbf{Results}. We vary $p$ in MultiColorMNIST and compare the performance of the ensemble model with each individual model. $p$ is the probability that spurious correlation no-longer holds in testing environment. A larger $p$ indicates larger distributional shift. The results of MultiColorMNIST are summarized in Table~\ref{tab:full_MultiColorMNIST}. We can see that model ensemble consistently improve the OOD performance. Figure~\ref{fig:cmnist_feature_visualization} visualizes how much each model relies on each patch. Specifically, Figure~\ref{fig:cmnist_feature_visualization}  shows how much the model changes its prediction when we replace a patch with black color. We can see each individual models uses different feature sets and model ensemble uses more diverse features. Table~\ref{tab:single_color_mnist} shows the results of SingleColorMNIST. We can see that model ensemble can not improve the OOD performance in SingleColorMNIST   (since there is only one spurious feature in SingleColorMNIST and model ensemble can not utilize more diverse spurious features). 

Comparing Table~\ref{tab:full_MultiColorMNIST} and Table~\ref{tab:single_color_mnist}, we can see that the performance of individual model in MultiColorMNIST is higher than that in SingleColorMNIST when the $p$ is the same. This is because the individual model already learns multiple spurious features (even though it is only a small subset of the whole feature set as shown in Figure~\ref{fig:cmnist_feature_visualization}). This is also consistent with our theoretical results that diverse spurious features leads to better OOD performance.

\begin{figure}
    \centering
    \includegraphics[width=0.25\linewidth]{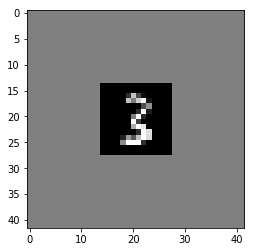}
    \includegraphics[width=0.25\linewidth]{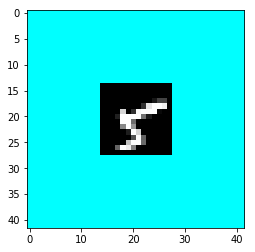}
    \caption{Two samples from SingleColorMNIST. SingleColorMNIST has 10 classes and each sample contains 1 spurious feature.}
\label{fig:two_samples_singlecmnist}
\end{figure}

\begin{figure}
    \centering
    \includegraphics[width=0.3\linewidth]{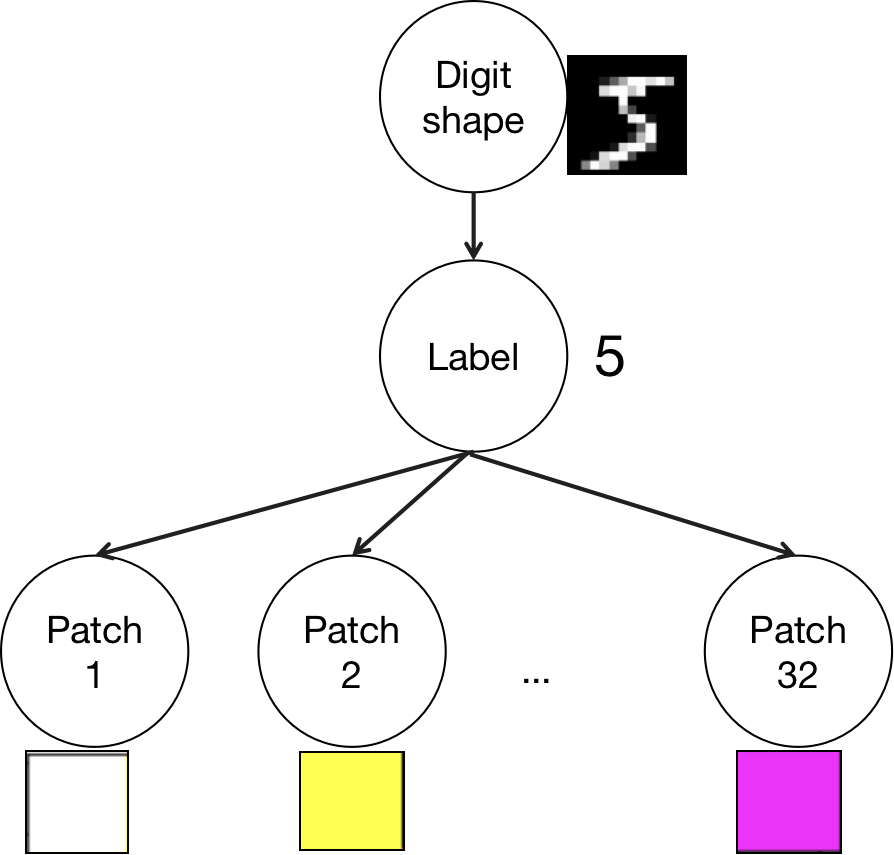}
    \caption{The data generation process of MultiColorMNIST (follows Definition~\ref{defi:data_generation})}
    \label{fig:MultiColorMNIST_generation}
\end{figure}
\begin{figure}
    \centering
    \includegraphics[width=0.6\linewidth]{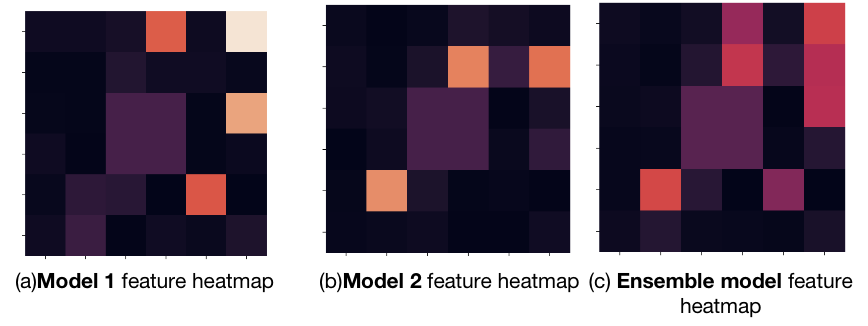}
    \caption{Visualization of the features uses by each model and model ensemble.}
    \label{fig:cmnist_feature_visualization}
\end{figure}

\begin{table}[]
    \centering
\begin{tabular}{clll}
\toprule
p &           model\_1 &           model\_2 &   model\_ensemble \\
\midrule
0.10                 &  100.00$\pm$0.00 &  100.00$\pm$0.00 &  100.00$\pm$0.00 \\
0.20                 &   99.99$\pm$0.00 &   99.99$\pm$0.00 &  100.00$\pm$0.00 \\
0.30                 &   99.91$\pm$0.01 &   99.89$\pm$0.04 &   99.99$\pm$0.01 \\
0.40                 &   99.16$\pm$0.11 &   99.19$\pm$0.13 &   \textbf{99.75$\pm$0.03} \\
0.50                 &   95.84$\pm$0.35 &   96.06$\pm$0.35 &   \textbf{98.13$\pm$0.14} \\
0.60                 &   87.15$\pm$0.74 &   87.56$\pm$0.69 &   \textbf{92.31$\pm$0.41} \\
0.70                 &   71.05$\pm$1.04 &   71.77$\pm$0.94 &   \textbf{78.64$\pm$0.73} \\
0.75                 &   60.07$\pm$1.04 &   60.75$\pm$0.91 &   \textbf{67.61$\pm$0.80} \\
0.80                 &   48.57$\pm$0.92 &   49.26$\pm$0.83 &   \textbf{55.25$\pm$0.75} \\
0.85                 &   36.93$\pm$0.70 &   37.74$\pm$0.66 &   \textbf{42.34$\pm$0.64} \\
0.90                 &   26.01$\pm$0.45 &   26.63$\pm$0.42 &   \textbf{29.28$\pm$0.40} \\
\bottomrule
\end{tabular}
    \caption{Results on MultiColorMNIST}
    \label{tab:full_MultiColorMNIST}
\end{table}

\begin{table}[]
    \centering
    {
\begin{tabular}{clll}
\toprule
p &          model\_1 &          model\_2 &  model\_ensemble \\
\midrule
0.1                  &  91.04$\pm$0.03 &  91.07$\pm$0.05 &  91.04$\pm$0.04 \\
0.2                  &  82.34$\pm$0.02 &  82.39$\pm$0.07 &  82.34$\pm$0.04 \\
0.3                  &  72.93$\pm$0.08 &  72.99$\pm$0.10 &  72.93$\pm$0.09 \\
0.4                  &  64.08$\pm$0.09 &  64.21$\pm$0.19 &  64.10$\pm$0.11 \\
0.5                  &  54.89$\pm$0.12 &  54.99$\pm$0.18 &  54.88$\pm$0.14 \\
0.6                  &  45.91$\pm$0.16 &  46.09$\pm$0.31 &  45.92$\pm$0.19 \\
0.7                  &  37.39$\pm$0.15 &  37.55$\pm$0.27 &  37.39$\pm$0.17 \\
0.8                  &  27.86$\pm$0.19 &  28.06$\pm$0.32 &  27.87$\pm$0.22 \\
0.9                  &  19.28$\pm$0.18 &  19.50$\pm$0.34 &  19.29$\pm$0.20 \\
\bottomrule
\end{tabular}
}
    \caption{Results on SingleColorMNIST}
    \label{tab:single_color_mnist}
\end{table}


\textbf{Remark}. Recall weight space ensemble (WSE) needs to be conducted between the pre-trained and fine-tuned models or different fine-tuned models starting from the same pre-trained model \citep{wortsman2022robust, frankle2020linear}. Since we have  suitable pre-trained model for the synthetic dataset, we leave the investigation of WSE on MultiColorMNIST to future work.

\subsubsection{Increasing the Number of Ensemble}

In Table~\ref{tab:MultiColorMNIST_results_new} and \ref{tab:full_MultiColorMNIST}, we show that the ensemble of two models improves significantly over each individual model on MultiColorMNIST. In this part, we are going to show that if increasing the number of models in the ensemble can even increases more significantly. 

Specifically, in Table~\ref{tab:multi_color_multi_ensemble}, we show the results of different model number in the ensemble. When the ensemble number is 1, it means that we consider a single model (in other words, not performing model ensemble). If ensemble number is 16, it indicates that we independently train 16 models with different initialization and use the ensemble of these 16 models to make predictions. We can see that increasing the ensemble number can signficantly boost the OOD performance. For example, when $p=0.8$, the OOD performance of single model (ensemble number equals 1) is 49.33\%. The ensemble of two models achieves 55.92\% OOD accuracy. The ensemble of 16 models can increases the OOD accuracy to 64.85\%!
This also gives us a hint on the effectiveness of model soup, which averages multiple checkpoints trained with different hyper-parameters.

\begin{table}[h]
    \centering
    \begin{tabular}{cccccc}
\toprule
$p$ &            0.70 &            0.75 &            0.80 &            0.85 &            0.90 \\
Ensemble Number &                 &                 &                 &                 &                 \\
\midrule
1       &  71.66$\pm$2.06 &  60.68$\pm$2.23 &  49.33$\pm$2.02 &  37.74$\pm$1.58 &  26.74$\pm$1.05 \\
2       &  78.88$\pm$1.24 &  68.34$\pm$0.89 &  55.96$\pm$0.77 &  42.91$\pm$0.64 &  29.89$\pm$0.63 \\
4       &  84.39$\pm$1.33 &  74.00$\pm$1.26 &  62.04$\pm$1.32 &  47.92$\pm$1.17 &  32.74$\pm$0.75 \\
8       &  85.64$\pm$1.22 &  75.73$\pm$1.62 &  63.52$\pm$1.61 &  49.15$\pm$1.23 &  33.67$\pm$0.93 \\
16      &  86.76$\pm$0.55 &  77.31$\pm$0.87 &  64.85$\pm$1.09 &  50.63$\pm$0.69 &  34.47$\pm$0.40 \\
\bottomrule
\end{tabular}
    \caption{Experiments on MultiColorMNIST. A larger $p$ indicates larger distributional shift.}
    \label{tab:multi_color_multi_ensemble}
\end{table}

On the other hand, if the dataset only contains a single spurious feature, e.g., the SingleColorMNIST, we find that increasing ensemble number does not help the OOD performance. These results are included in Table~\ref{tab:single_color_multi_ensemble}. 

\begin{table}[h]
    \centering
    \begin{tabular}{cccccc}
\toprule
$p$ &            0.70 &            0.75 &            0.80 &            0.85 &            0.90 \\
Ensemble Number &                 &                 &                 &                 &                 \\

\midrule
1       &  37.40 $\pm$ 0.11 &  32.64 $\pm$ 0.11 &  27.90 $\pm$ 0.14 &  23.32 $\pm$ 0.14 &  19.32 $\pm$ 0.14 \\
2       &  37.32 $\pm$ 0.03 &  32.54 $\pm$ 0.05 &  27.78 $\pm$ 0.04 &  23.20 $\pm$ 0.04 &  19.18 $\pm$ 0.05 \\
4       &  37.35 $\pm$ 0.09 &  32.57 $\pm$ 0.12 &  27.81 $\pm$ 0.13 &  23.22 $\pm$ 0.11 &  19.23 $\pm$ 0.12 \\
8       &  37.30 $\pm$ 0.01 &  32.50 $\pm$ 0.01 &  27.74 $\pm$ 0.02 &  23.16 $\pm$ 0.02 &  19.16 $\pm$ 0.00 \\
16      &  37.35 $\pm$ 0.08 &  32.56 $\pm$ 0.09 &  27.82 $\pm$ 0.12 &  23.22 $\pm$ 0.10 &  19.24 $\pm$ 0.11 \\
\bottomrule
\end{tabular}
    \caption{Experiments on SingleColorMNIST. A larger $p$ indicates larger distributional shift.}
    \label{tab:single_color_multi_ensemble}
\end{table}

\subsection{Simulation}

In this section, we take some simulations to investigate the performance of theoretical forecasting results of OOD accuracy. Following the data generation process in Definition~\ref{defi:data_generation}, here we consider four examples:
\begin{enumerate}
\item example 1-1: $\bar{n}_v = 2, \bar{n}_s = 3$ in model 1; $\tilde{n}_v = 2, \tilde{n}_s = 3$ in model 2; overlapped feature number $n_{vo} = n_{so} = 0$; noise variance $\sigma = 0.01$; distribution shift probability $p = 0.9$.
\item example 1-2: $\bar{n}_v = 2, \bar{n}_s = 3$ in model 1; $\tilde{n}_v = 2, \tilde{n}_s = 3$ in model 2; overlapped feature number $n_{vo} = n_{so} = 1$; noise variance $\sigma = 0.01$; distribution shift probability $p = 0.9$.
\item example 2-1: $\bar{n}_v = 5, \bar{n}_s = 20$ in model 1; $\tilde{n}_v = 4, \tilde{n}_s = 20$ in model 2; overlapped feature number $n_{vo} = n_{so} = 0$; noise variance $\sigma = 0.01$; distribution shift probability $p = 0.9$.
\item example 2-2: $\bar{n}_v = 5, \bar{n}_s = 20$ in model 1; $\tilde{n}_v = 5, \tilde{n}_s = 20$ in model 2; overlapped feature number $n_{vo} = 4, n_{so} = 1$; noise variance $\sigma = 0.01$; distribution shift probability $p = 0.9$.
\end{enumerate}
In each example, we take $1000$ simulations to report the mean OOD accuracy in Table~\ref{tab:simulation}. To be precise, the training data size is $20000$ and the test data size is $10000$ in each simulation.
\begin{table}[h]
    \centering
    \begin{tabular}{c|l|c|c|c|c}
    \toprule
    & & Model $1$ & Model $2$ & Model Average & Model Ensemble \\
    \midrule
    \multirow{2}*{Example 1-1} & Simulation Results & 0.866 & 0.866 & 0.974 & 0.974 \\
    ~ & Theoretical Results & 0.865 & 0.865 & 0.973 & 0.973 \\
    \midrule
    \multirow{2}*{Example 1-2} & Simulation Results & 0.866 & 0.861 & 0.943 & 0.940 \\
    ~ & Theoretical Results & 0.865 & 0.865 & 0.948 & 0.946\\
    \midrule
    \multirow{2}*{Example 2-1} & Simulation Results & 0.940 & 0.894 & 0.978 & 0.978 \\
    ~ & Theoretical Results & 0.941 & 0.910 & 0.980 & 0.980 \\
    \midrule
    \multirow{2}*{Example 2-2} & Simulation Results & 0.943 & 0.939 & 0.999 & 0.989 \\
    ~ & Theoretical Results & 0.943 & 0.943 & 0.992 & 0.983 \\
    \bottomrule     
    \end{tabular}
    \caption{Simulation for the OOD accuracy in different models}
    \label{tab:simulation}
\end{table}
 Then comparing the results of theoretical results and simulation results, it is safely to say that our theoretical analysis, as well as proper approximations, could take an effective estimation for OOD accuracy.
\newpage
\section{Discussions, illustrations, and supportive results for the theoretical parts.}
\subsection{Discussion on the theoretical models}
\label{app:discussion_of_theretical_models}

Our theoretical models in Section~\ref{sect:theretical_analysis}
is designed to mimic the modern deep learning architectures such as Vision Transforms (ViT) \citep{dosovitskiy2020image}.  Figure~\ref{fig:vit_comparision} provides a comparison between our theoretical models and Vision Transformers. 

Similar to ViT, we process images as patches, where each patch corresponds to a specific feature denoted as $\mbox{Patch}_i$. Each $\mbox{Patch}_i$ is represented by high-dimensional vectors $x_i \in \mathbb{R}^d$ in the embedding space. Consequently, the whole feature is obtained by concatenating the embeddings of each patch, resulting in ${\bx} = [x_1, x_2, ...]$. Assuming a total of $d_t$ features ($d_t = d_v+d_s$ in Section~\ref{sect:theretical_analysis}), we have ${\bx} \in \mathbb{R}^{d \times d_t}$. Notably, \citep{allen2020towards} also uses a similar theoretical data model that concatenates the patches to analyze the convolutional neural networks, e.g., Figure 5 in \citep{allen2020towards}. 

To simplify the model, we utilize a two-layer structure consisting of a binary feature mask $\Phi$ as the feature encoder and a linear classifier ${\bw}$, analogous to ViT which uses the transformer feature encoder with an MLP classifier. This two-layer simplification approach has been widely employed in OOD literature \citep{arjovsky2019invariant, rosenfeld2020risks, zhou2022sparse, peters2016causal, lin2022zin}. The difference between our theoretical model and ViT is that ViT process the features sequentially while we select the feature at once.

The binary feature mask $\Phi$ is represented as $\{0, 1\}^{d_t}$. For instance, if we have three features, i.e., ${\bx} = [x_1, x_2, x_3]$, and $\Phi = [1, 1, 0]$, the learned feature would be ${\bx}^\top \Phi = x_1 + x_2$. Considering a 3-class classification task, the linear classifier ${\bw} \in \mathbb{R}^{d \times 3}$ takes the learned feature ${\bx}^\top \Phi$ as input and produces a 3-dimensional vector whose elements represent the logits of the three classes. The classifier ${\bw}$ is optimized to minimize the in-distribution (ID) loss based on the learned feature. Therefore, we have:

\begin{align*}
    \bw = \arg\min_{\bv \in \bbR^{d \times 3}} \mathcal{R}_{id}({\bv}, \Phi),
\end{align*}

where $\mathcal{R}_{id}({\bv}, \Phi)$ represents the loss of $({\bv}, \Phi)$ in the ID distribution.

\begin{figure}[htb!]
    \centering
    \includegraphics[width=\linewidth]{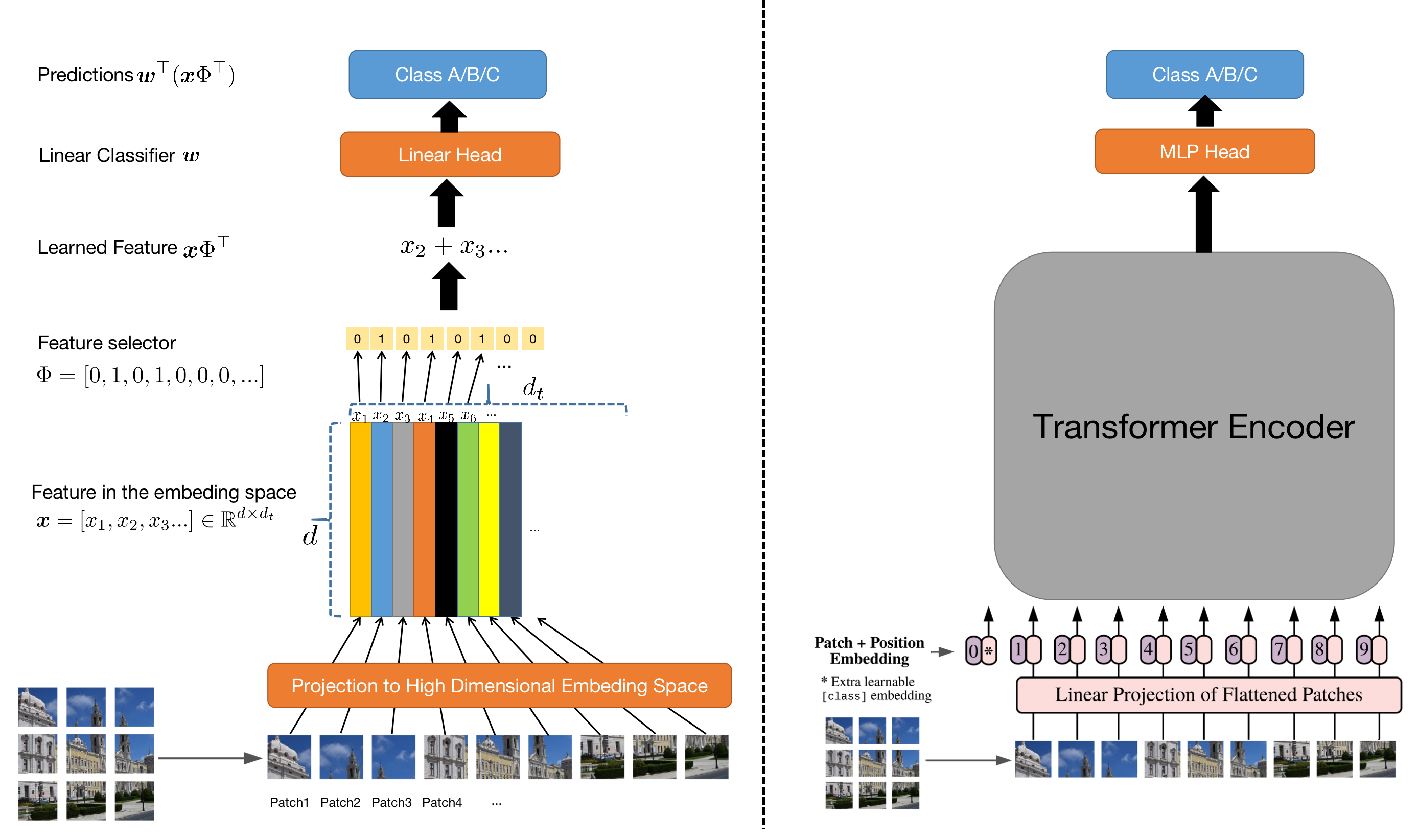}
    \caption{Comparison of our theoretical models in Section~\ref{sect:theretical_analysis} with Vision Transformers \citep{dosovitskiy2020image}. Some parts of the figures are adopted from \citep{dosovitskiy2020image}. }
    \label{fig:vit_comparision}
\end{figure}

\subsection{Comparison on our model with a 2-layer DNN}
\label{app:model_comparsion_dnn}
In this paper, we consider the model $\bw^\top \bx \Phi$, where $\bw \in \bbR^{d \times K}$ and $\Phi \in \{0, 1\}^{d_v + d_s}$ are paramters. Here the input $\bx \in \bbR^{d \times (d_v + d_s)}$ and see App~\ref{app:discussion_of_theretical_models} for detailed discussion. We then compare our model with a general 2-layer DNN to see why it can capture the difference between weight space ensemble (WSE) and output space ensemble (OSE) in DNN.

Consider a general 2-layer DNN parameterized by $(W_a \in \bbR^{d_1 \times d_2}, W_b \in \bbR^{d_2 \times K})$ with ReLU activation $\delta(\cdot)$ and output $f_{dnn}(X) = W_b^\top \delta(W_a^\top X)$ for $X \in \bbR^{d_1}$. Here we use uppercase $X$ to avoid confusion with our previous $\bx$ since they have slightly different dimensions (App~\ref{app:discussion_of_theretical_models}). Since WSE is conducted on the models that is close to a pre-trained model \citep{wortsman2022model}, e.g., $(W_{a0}, W_{b0})$, so we consider $f_{dnn}(X) = (W_{b0} + \Delta W_b)^\top \delta( (W_{a0} + \Delta W_a)^\top X)$ where $\Delta W_a$ and $\Delta W_b$ is small and trainable. By Taylor expansion, we have
\begin{align*}
\small
    f_{dnn}(X) =  \underbrace{W_{b0}^\top\delta(W_{a0}^\top X)}_{\mbox{(a)Fixed Term}} +  \underbrace{\Delta W_{b0}^\top\delta(W_{a0}^\top X) + W_{b0} \delta'(W_{a0}^\top X)(\Delta W_a^\top X)}_{\mbox{(b)Linear Term}} + \underbrace{\Delta W_{b} \delta'(W_{a0}^\top X)(\Delta W_a^\top X)}_{\mbox{(c)Bilinear Term}} + \xi
\end{align*}
Where $\delta'(Y)$ is $\frac{\partial\delta(Y)}{\partial Y}$. Further, we incorporate the fact that the second order derivative $\frac{\partial^2\delta(Y)}{\partial^2 Y}$ is zero almost everywhere for ReLU activation function (except at $Y = 0$). Then $\xi$ is the error term induced by the non-linearity of ReLU activation function (while $W_{a0}^T X$ has some zero elements). To be precise, as here we just focus on fine-tuning regime and $W_{a0}^T X$ is not sparse in general situations, it is safely to say that $\xi$ is small. 
WSE and OSE are exactly the same for the (a) fixed term and (b) linear term. We will show that WSE and OSE differs on the (c)bilinear term, which is captured by our model in Definition~\ref{defi:model_ensemble}-\ref{defi:weight_model_ensemble}. 

Consider two models, $\bar f_{dnn}$ and $\tilde f_{dnn}$, both close to the pre-trained model. Specifically, 
\begin{align*}
    \bar f_{dnn}(X) & = (W_{b0} + \Delta \bar W_b)^\top \delta( (W_{a0} + \Delta \bar W_a)^\top X); \\
    \tilde f_{dnn}(X) & = (W_{b0} + \Delta \tilde W_b)^\top \delta( (W_{a0} + \Delta \tilde W_a)^\top X); 
\end{align*}
Then the output space ensemble of $\bar f_{dnn}(X)$ and $\tilde f_{dnn}(X)$ is 
\begin{align*}
     f_{dnn, ose} & = 0.5\left((W_{b0} + \Delta \bar W_b)^\top \delta( (W_{a0} + \Delta \bar W_a)^\top X) +  (W_{b0} + \Delta \tilde W_b)^\top \delta( (W_{a0} + \Delta \tilde W_a)^\top X)\right) \\
    & = \underbrace{W_{b0}^\top\delta(W_{a0}^\top X)}_{\mbox{(a)Fixed Term}} +  \underbrace{ 0.5(\Delta \bar W_{b0} + \Delta \tilde W_{b0})^\top\delta(W_{a0}^\top X) + W_{b0} \delta'(W_{a0}^\top X)(0.5(\Delta \bar W_a + \Delta \tilde W_a) ^\top X)}_{\mbox{(b)Linear Term}} \\
    & + \underbrace{0.5\left(\Delta \bar W_{b} \delta'(W_{a0}^\top X)(\Delta \bar W_a^\top X) + \Delta \tilde W_{b} \delta'(W_{a0}^\top X)(\Delta \tilde W_a^\top X)\right)}_{\mbox{(c)Bilinear Term}}
\end{align*}

\begin{align*}
     f_{dnn, wse}  =& (W_{b0} + 0.5(\Delta \bar W_b + \Delta \tilde W_b))^\top \delta( (W_{a0} + 0.5(\Delta \bar W_a + \Delta \tilde W_a))^\top X) \\
      = &\underbrace{W_{b0}^\top\delta(W_{a0}^\top X)}_{\mbox{(a)Fixed Term}} +  \underbrace{ 0.5(\Delta \bar W_{b0} + \Delta \tilde W_{b0})^\top\delta(W_{a0}^\top X) + W_{b0} \delta'(W_{a0}^\top X)(0.5(\Delta \bar W_a + \Delta \tilde W_a) ^\top X)}_{\mbox{(b)Linear Term}} + \\
     & + \underbrace{0.25(\Delta \bar W_{b} + \Delta \tilde W_{b})\delta'(W_{a0}^\top X)((\Delta \bar W_a + \Delta \tilde W_a)^\top X)}_{\mbox{(c)Bilinear Term}} 
\end{align*}

Comparing $f_{dnn, ose}$ with $f_{dnn, wse}$ , we can see that the difference of them lies in the bilinear term:
\begin{align}
    \label{eqn:dnn_diff}
    f_{dnn, wse} - f_{dnn, ose}  = & \left(0.25(\Delta \bar W_{b} + \Delta \tilde W_{b})\delta'(W_{a0}^\top X)((\Delta \bar W_a + \Delta \tilde W_a)^\top X) \right) \nonumber\\
     & -  0.5\left(\Delta \bar W_{b} \delta'(W_{a0}^\top X)(\Delta \bar W_a^\top X) + \Delta \tilde W_{b} \delta'(W_{a0}^\top X)(\Delta \tilde W_a^\top X)\right) 
\end{align}
We can see the bilinear term difference has a clear analogy with our models in Definition~\ref{defi:model_ensemble}-\ref{defi:weight_model_ensemble}. Specifically, according to our Definition of OSE and WSE in Definition~\ref{defi:model_ensemble}-\ref{defi:weight_model_ensemble}, we have
\begin{align}
    \label{eqn:ours_diff}
    f_{\wse} - f_{\ose}= 0.25(\bar \bw + \tilde \bw)^\top \bx (\bar \Phi + \tilde \Phi) - 0.5(\bar \bw^\top \bx \bar \Phi + \tilde \bw^\top \bx \tilde \Phi).
\end{align}
Comparing \eqref{eqn:dnn_diff} and \eqref{eqn:ours_diff}, we can see that $\bw$ is analogous to $\Delta W_{b}$ and $\Phi$ is analogous to $\Delta W_{a}$. \eqref{eqn:dnn_diff} and \eqref{eqn:ours_diff} differ by a scaling $\delta'(W_{a0}^\top X)$, which is a fixed matrix independent of the trainable parameter  $(\Delta  W_a, \Delta  W_b)$ .   

\subsection{Illustration of the transformation matrix Q}
\label{app:illustrate_mu_q}
Consider the 3-class classification problem. In the ID distribution, we have,
\begin{align*}
    \bQ_{s, i} = [\be_1, \be_2, \be_3] = \bI_3 = \begin{bmatrix}
        1, 0, 0 \\
        0, 1, 0 \\
        0, 0, 0
    \end{bmatrix},
\end{align*}
This indicates that each spurious feature is perfectly correlated with the invariant feature, as illustrated in Figure~\ref{fig:illustration_Q} (left). For instance, $\bQ_{s, j} = \bI_3$ implies that the background of the dog, crow, and camel are floor, grass, and sand, respectively.

In the OOD distribution, $\bQ_{s, j}$ is no longer equal to $\bI$, indicating that the correlation between animals and the background may fail with a certain probability. Figure~\ref{fig:illustration_Q} (right) illustrates $\bQ_{s, j}(1)$, which represents the first column of $\bQ_{s, j}$. $\bQ_{s, j}(1)$ can take the value $\be_2$ with a probability of $p/3$, indicating that the background of the dog is grass in this case. Similarly, $\bQ_{s, j}(1)$ can take the value $\be_3$ with a probability of $p/3$, indicating that the background of the dog is sand with a probability of $p/3$.
\begin{figure}[htb!]
    \centering \includegraphics[width=0.9\linewidth]{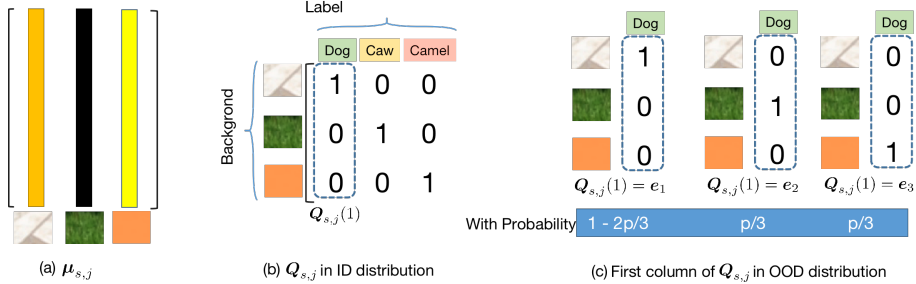}
    \caption{Illustrations of the matrix $\bQ_{s, j}$. }
    \label{fig:illustration_Q}
\end{figure}

\subsection{On the pessimism of worst-case theoretical analysis for OOD}
\label{sect:worst_case_discussion}
\vspace{-2mm}

\subsection{ID performance}
\label{app:id_performance}
Recall that in Section~\ref{sect:theretical_analysis} the OOD accuracy is defined by 
$$\cA_{\mbox{ood}}(f) = \bbE_{\bQ_{s}}\left[\bbE_{\bx, \by}[ \bbI( \be_{\hat k} = \by)|\bQ_{s}]\right].$$ 
The ID accuracy $\cA_{\id}(f)$ is defined similarly by fixing $[\bQ_{s, 1}, \dots, \bQ_{s, d_s}] = [\bI, \dots, \bI]$. 
According to Lemma~\ref{lemma:failure_probability}, we know that the ID accuracy of all models involved in Definition~\ref{defi:individual_model}, Example~\ref{exp:illustrative}-\ref{example-2} are larger than $1-\epsilon$. 

\subsection{Intuition of OOD Performance Improvement of OSE}
\label{app:illustration_average_improvement_indi}
We use Example~\ref{exp:illustrative} to show the main intuition of the output space ensemble (OSE).  In Example~\ref{exp:illustrative}, two individual models learn non-overlapped feature, so model ensemble and averaging are the same. According to the proof in Appendix~\ref{app:proof_example_1_1}, consider the samples from the first class, i.e., $\by=\be_1$, the predicted logit of the each class is
\begin{align*}
    \bw(1)^\top \bx\bar \Phi|_{\by=\be_1} = & \sum_{i=1}^2 \bmu_{v, i}(1)^\top \left(\bmu_{v, i} \bQ_{v, i}(1)\right)  + \sum_{j=1}^3 \bmu_{s, j}(1)^\top \left(\bmu_{s, j} \bQ_{s,j}(1)\right), \\
    \bw(2)^\top \bx\bar \Phi|_{\by=\be_1} = & \sum_{i=1}^2 \bmu_{v, i}(2)^\top \left(\bmu_{v, i} \bQ_{v, i}(1)\right)  + \sum_{j=1}^3 \bmu_{s, j}(2)^\top \left(\bmu_{s, j} \bQ_{s,j}(1)\right), \\
    \bw(3)^\top \bx\bar \Phi|_{\by=\be_1} = & \sum_{i=1}^2 \bmu_{v, i}(3)^\top \left(\bmu_{v, i} \bQ_{v, i}(1)\right)  + \sum_{j=1}^3 \bmu_{s, j}(3)^\top \left(\bmu_{s, j} \bQ_{s,j}(1)\right),
\end{align*}
where we omit the noise term whose impact on the accuracy is less than $\epsilon$ according to Lemma~\ref{lemma:failure_probability}. 
Further, let  $\bmu$ denote any $\bmu_{v,i}$ and $\bmu_{s,j}$ and $\bQ$ denote its corresponding transformation matrix. For example, $\bmu = \bmu_{s,j}$ and $\bQ=\bQ_{s,j}$. Suppose $\bQ(1) = \be_{k_2}$, we have
$$\bmu(k_1)^\top \left(\bmu \bQ(1)\right) = \bmu(k_1)^\top \left(\bmu \be_{k_2}\right) = \bmu(k_1)^\top \bmu(k_2) = \begin{cases}
    1, \mbox{ if }  k_1=k_2, \\
    0, \mbox{ otherwise.}
\end{cases}$$
For the invariant features, $\bQ_{v, i}(1)=\be_1$ always hold and for spurious features, $\bQ_{s, j}(1)$ takes $\be_1$ with probability $1-\frac{2p}{3}$, and takes $\be_2$ or $\be_3$ with $\frac{p}{3}$, respectively. So the predicted logit of the each class is simply 
\begin{align*}
    \bar \bw(1)^\top \bx\bar \Phi|_{\by=\be_1} = & 2  + \sum_{j=1}^3 \bbI(\bQ_{s,j}(1) = \be_1), \\
     \bar \bw(2)^\top \bx\bar \Phi|_{\by=\be_1} = & 0 + \sum_{j=1}^3 \bbI(\bQ_{s,j}(1) = \be_2), \\
     \bar \bw(3)^\top \bx\bar \Phi|_{\by=\be_1} = & 0 + \sum_{j=1}^3 \bbI(\bQ_{s,j}(1) = \be_3),
\end{align*}
Let us consider the probability of $\bar \bw(2)^\top \bx\bar \Phi|_{\by=\be_1} > \bar \bw(1)^\top \bx\bar \Phi|_{\by=\be_1}$, i.e., the model $(\bar \Phi, \bar w)$ mistakenly predicts the second the class $\be_2$ even if the true class is $\be_1$. This will happen when $\{\bbI(\bQ_{s,j}(1) = \be_2)\}_{j=1,2,3}$ holds simultaneously, whose probability would be $\frac{p^3}{27}$.  Intuitively, 3 spurious features takes the value in OOD that is correlated with the second class $\be_2$, overwhelming the two invariant features correlated with $\be_1$.

As for the averaged model, we have
\begin{align*}
    \bar \bw(1)^\top \bx\bar \Phi|_{\by=\be_1} = & 4  + \sum_{j=1}^6 \bbI(\bQ_{s,j}(1) = \be_1), \\
     \bar \bw(2)^\top \bx\bar \Phi|_{\by=\be_1} = & 0 + \sum_{j=1}^6 \bbI(\bQ_{s,j}(1) = \be_2), \\
     \bar \bw(3)^\top \bx\bar \Phi|_{\by=\be_1} = & 0 + \sum_{j=1}^6 \bbI(\bQ_{s,j}(1) = \be_3).
\end{align*}
We will have $\bar \bw(2)^\top \bx\bar \Phi|_{\by=\be_1} > \bar \bw(1)^\top \bx\bar \Phi|_{\by=\be_1}$ if either of the following occurs 
\begin{itemize}
    \item $\{\bbI(\bQ_{s,j}(1) = \be_2)\}_{j=1}^6$ holds simultaneously, whose probability would be $\frac{p^6}{729}$
    \item Five of $\{\bbI(\bQ_{s,j}(1)\}_{j=1}^6$ takes $\be_2$ and the remaining one takes $\be_3$, i.e., $\sum_{j=1}^6 \bbI(\bQ_{s,j}(1) = \be_2) = 5$ and $\sum_{j=1}^6 \bbI(\bQ_{s,j}(1) = \be_3) = 1$. Such probability is $\frac{6p^6}{729}$.
\end{itemize}
The total probability is then $\frac{7p^6}{729} \approx \frac{p^6}{104} \leq \frac{p^3}{104} < \frac{p^3}{27}$. See Figure~\ref{fig:intuition_model_averaging} for a visualization of the main intuition. 
\begin{figure}
    \centering
    \includegraphics[width=\linewidth]{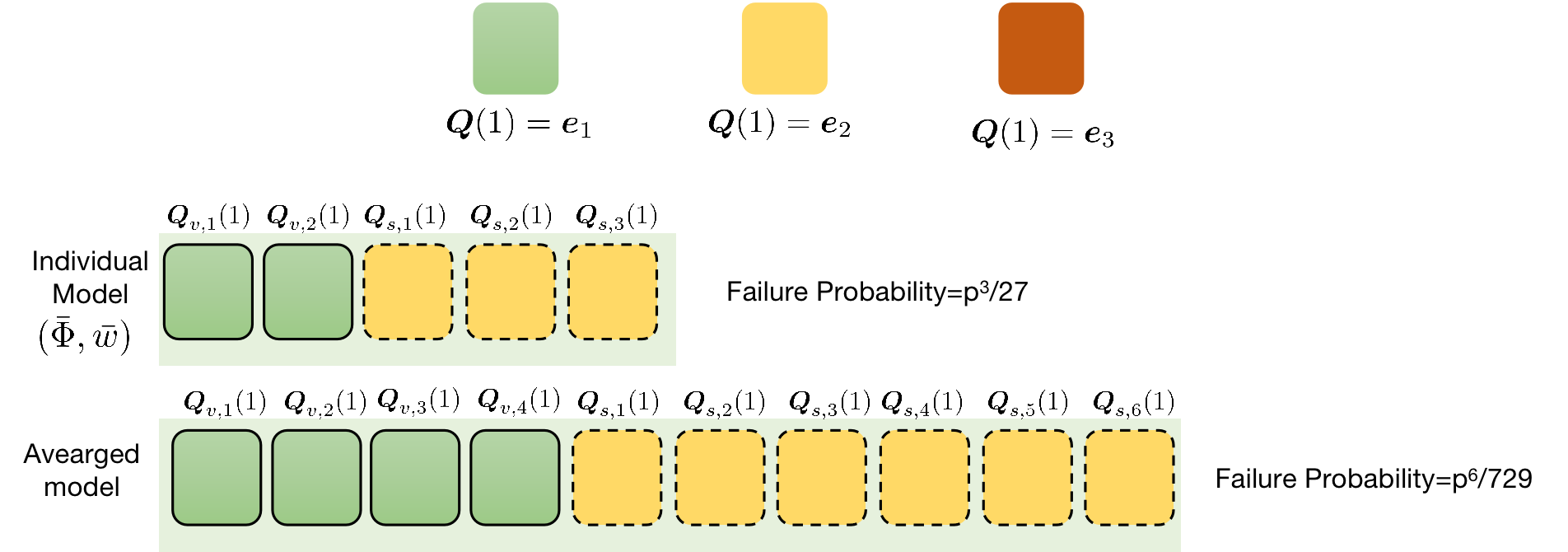}
    \caption{Comparison of the failure probability of individual model and averaged model. With probability $p^3/27$, the individual model $(\bar \Phi, \bar \bw)$ will encounter an OOD distribution where it mistakenly predicting the second class $\be_2$ on  the samples from the first class $\be_1$. For the averaged model, such probability would be roughly about $p^6/729$. Refer to Appendix~\ref{app:illustration_average_improvement_indi} for detailed explanation. }
    \label{fig:intuition_model_averaging}
\end{figure}
\subsection{The Difference Between WSE and OSE in OOD}
\label{app:illustration_average_improvement}
\subsubsection{Explaining the difference between WSE and OSE}
\label{app:overlap_intuition}
We use the following Example~\ref{exp:overlap_exp} to show the main intuition of the difference between  model averaging and ensemble.
\begin{example}
    \label{exp:overlap_exp} 
    Two individual models learn overlapped features $\bx_{v,2}$ and $\bx_{s,3}$ as 
        $$
             \bx\bar \Phi^\top =  \bx_{v,1} + \bx_{v,2}  +  \bx_{s,1} + \bx_{s,2} + \bx_{s,3}, \quad  \bx\tilde \Phi^\top = \bx_{v,2} + \bx_{v,3}  +  \bx_{s,3} + \bx_{s,4}  + \bx_{s,5},
        $$
\end{example}

\begin{prop}
\label{prop:overlap_case_improvment}
    Consider the  Example~\ref{exp:overlap_exp}, suppose Assumption \ref{ass:small_noise} and \ref{ass:ortho_feature} hold, and  there are infinite ID and OOD samples, the averaged and ensemble models are defined as Definition \ref{defi:model_ensemble}. Omitting small terms containing $\epsilon$, we have $\cA_{ood}(\bar f)= \cA_{ood}(\tilde f) = 1- \frac{1}{9}p^3$ 
    and $\cA_{ood}(f_{\ose}) = 1 - \frac{4p^4}{81} - \frac{8p^5}{243}$ and
    $\cA_{ood}(f_{\wse}) = 1 - \frac{4p^4}{81} - \frac{p^5}{27}$.
\end{prop}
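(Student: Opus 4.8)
The plan is to reduce the two individual-model accuracies to a result already proved and then to compute the two ensemble accuracies by an explicit finite enumeration over the spurious transformation matrices. In Example~\ref{exp:overlap_exp} each of $\bar f$ and $\tilde f$ learns exactly two invariant and three spurious features, so they are the same individual models (in terms of the pair of counts $(n_v,n_s)=(2,3)$) already analyzed in Proposition~\ref{prop:simple_case_improvment}; since the individual-model accuracy depends only on how many features are learned and not on how the two feature sets overlap, the value $\cA_{ood}(\bar f)=\cA_{ood}(\tilde f)=1-\tfrac19 p^3$ carries over verbatim. The real work is therefore the ensemble formulas.

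First I would write down the clean logits. By the class symmetry of the data-generating process it suffices to analyze samples with $\by=\be_1$, so that $\cA_{ood}$ equals the conditional accuracy on this class. Using $\bar\bw(k)=\sum_{\bx_v\in\bar\cV}\bmu_{v}(k)+\sum_{\bx_s\in\bar\cS}\bmu_{s}(k)$ from Lemma~\ref{lemma:classifier} together with the orthonormality of Assumption~\ref{ass:ortho_feature}, every inner product $\bmu_{s,j}(k)^\top(\bmu_{s,j}\bQ_{s,j}(1))$ collapses to the indicator $\bbI(\bQ_{s,j}(1)=\be_{k})$, and each invariant feature contributes $\bbI(k=1)$ because $\bQ_{v,i}(1)=\be_1$ always. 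Dropping the noise (controlled by Assumption~\ref{ass:small_noise}), the logit of class $k$ becomes an integer count in which each learned feature adds $+1$ to the class its column selects. The only subtlety is the shared features $\bx_{v,2},\bx_{s,3}$: in $f_{\ose}$ they are counted once in each summand, hence with coefficient $2$, whereas in $f_{\wse}=\tfrac14(\bar\bw+\tilde\bw)^\top\bx(\bar\Phi+\tilde\Phi)$ the doubling inside the featurizer sum and the doubling inside the classifier sum compound to coefficient $4$ (the global $\tfrac14$ is irrelevant for the argmax). Thus for $\by=\be_1$ the OSE has invariant margin $4$, spurious weights $1$ on $\bx_{s,1},\bx_{s,2},\bx_{s,4},\bx_{s,5}$ and weight $2$ on $\bx_{s,3}$, while the WSE has invariant margin $6$ and shared-spurious weight $4$.

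Next I would enumerate. Each column $\bQ_{s,j}(1)$ independently equals $\be_1$ with probability $1-\tfrac{2p}{3}$ and $\be_2,\be_3$ with probability $\tfrac p3$ each, so a configuration in which $\nu$ spurious columns miss $\be_1$ has probability of order $(p/3)^\nu$. For each ensemble I would list the configurations in which class $1$ fails to be the unique argmax, split them into strict losses (some wrong class strictly exceeds class $1$) and exact ties, and assign each tie weight $\tfrac12$: Assumption~\ref{ass:small_noise} makes every strictly-decided case noise-stable up to $\epsilon$, while an exact tie of the clean integer logits is broken by the symmetric Gaussian noise with probability $\tfrac12$ each way. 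Summing $\frac{4!}{a!\,m!\,r!}(1-\tfrac{2p}{3})^{a}(\tfrac p3)^{m+r}$ over the failing triples $(a,m,r)$ (the numbers of weight-one features pointing to $\be_1,\be_2,\be_3$) and over the destination of the shared feature, then expanding $(1-\tfrac{2p}{3})$ and retaining terms through $O(p^5)$, yields the two stated accuracies.

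The main obstacle is the $O(p^5)$ bookkeeping that separates the two ensembles. The leading $O(p^4)$ failure is identical for OSE and WSE, since it arises from the same near-tie configurations, so the entire distinction lives in the subleading coefficient, and there two effects act at once: the $(1-\tfrac{2p}{3})^a$ expansion converts each $O(p^4)$ tie configuration into an $O(p^5)$ correction, while genuinely five-feature configurations contribute at the same order. The delicate point is the ``split'' configurations in which the shared spurious feature $\bx_{s,3}$ and the unique spurious features favor different wrong classes: there the larger WSE invariant margin ($6$ versus $4$) and the heavier WSE weight on $\bx_{s,3}$ ($4$ versus $2$) change whether a borderline tie or loss occurs, and it is exactly these boundary configurations that produce the $O(p^5)$ gap between the two ensembles. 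Getting their inclusion/exclusion and tie-breaking weights right is where essentially all the care is needed; once they are catalogued, assembling the counts into $\cA_{ood}(f_{\ose})$ and $\cA_{ood}(f_{\wse})$ is routine.
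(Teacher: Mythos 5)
Your proposal is correct and follows essentially the same route as the paper's own Appendix proof: reduce the individual models to Proposition~\ref{prop:simple_case_improvment}, write the clean integer logits with the shared features carrying weight $2$ in $f_{\ose}$ versus $4$ in $f_{\wse}$ (invariant margins $4$ versus $6$), enumerate the configurations of $\{\bQ_{s,j}(1)\}$ that give strict losses and exact ties, and weight ties by $1/2$ before summing the multinomial probabilities. One caveat: carrying your enumeration through gives $1-\frac{4p^4}{81}-\frac{p^5}{27}$ for the output space ensemble and $1-\frac{4p^4}{81}-\frac{8p^5}{243}$ for the weight space ensemble (exactly as the paper's appendix computation does), so the two values end up attached to the opposite ensembles from the labeling in the proposition statement.
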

Full Proof in Appendix~\ref{app:proof_example_1_2}. In Example~\ref{exp:overlap_exp}, two individual models learn overlapped feature, $\bx_{v, i}(k)$ and $\bx_{s, 3}(k)$. By Lemma \ref{lemma:classifier} , for $k=1, 2, 3$, we have 
\begin{align*}
    \bar \bw(k) & =  \sum_{i=1}^2 \bmu_{v, i}(k) + \sum_{j=1}^3 \bmu_{s, j}(k), \\
    \tilde \bw(k) & =  \sum_{i=2}^3 \bmu_{v, i}(k) + \sum_{j=3}^5 \bmu_{s, j}(k), \\
\end{align*}
So we have 
\begin{align*}
    \bar \bw(k) + \tilde \bw(k)  =    \sum_{i=1, 3} \bmu_{v, i}(k) + 2 \bmu_{v, 2}(k) + \sum_{j=1,2,4,5} \bmu_{s, j}(k) + 2 \bmu_{s, 3}(k) 
\end{align*}
For samples from the first class, we also have 
$$\bx (\bar \Phi + \tilde \Phi)|_{\by=\be_1} =\sum_{i=1, 3} \bmu_{v, i} \bQ_{v, i}(1) + 2\bmu_{v, 2} \bQ_{v, 2}(1) + \sum_{j=1,2,4,5} \bmu_{s, j} \bQ_{s,j}(1) +  2\bmu_{s, 3} \bQ_{s,3}(1) + \sum_{i=1}^{10} \bz_i$$
where $\bz_i \sim \cN(0, \sigma^2 \boldsymbol{I}_d), \forall i$. We then have
\begin{align}
\label{eqn:averaging}
    & (\bar \bw(k) + \tilde \bw(k))^\top\bx (\bar \Phi + \tilde \Phi)|_{\by=\be_1} \nonumber \\
    = & \sum_{i=1, 3} \bmu_{v, i}(k)^\top \left( \bmu_{v, i} \bQ_{v, i}(1)\right) + 4 \bmu_{v, 2}(k)^\top\left(\bmu_{v, 2} \bQ_{v, 2}(1)\right) \nonumber \\
    & + \sum_{j=1,2,4,5} \bmu_{s, j}(k)^\top \left(\bmu_{s, j} \bQ_{s,j}(1)\right) +  4\bmu_{s, 3}(k)^\top \left( \bmu_{s, 3} \bQ_{s,3}(1) \right) + \xi 
\end{align}

As for model ensemble, we have
\begin{align}
\label{eqn:ensemble}
   &\bar \bw(k)^\top  \bx \bar \Phi  + \tilde \bw(k)^\top  \bx \tilde \Phi \nonumber\\
   = & \sum_{i=1, 2} \bmu_{v, i}(k)^\top \left( \bmu_{v, i} \bQ_{v, i}(1)\right) + \sum_{j=1,2,3} \bmu_{s, j}(k)^\top \left(\bmu_{s, j} \bQ_{s,j}(1)\right) \nonumber\\
   & +  \sum_{i=2, 3} \bmu_{v, i}(k)^\top \left( \bmu_{v, i} \bQ_{v, i}(1)\right) + \sum_{j=3,4,5} \bmu_{s, j}(k)^\top \left(\bmu_{s, j} \bQ_{s,j}(1)\right) \\
   = & \sum_{i=1, 3} \bmu_{v, i}(k)^\top \left( \bmu_{v, i} \bQ_{v, i}(1)\right) + 2\bmu_{v, 2}(k)^\top\left(\bmu_{v, 2} \bQ_{v, 2}(1)\right) \nonumber\\
    & + \sum_{j=1,2,4,5} \bmu_{s, j}(k)^\top \left(\bmu_{s, j} \bQ_{s,j}(1)\right) +  2\bmu_{s, 3}(k)^\top \left( \bmu_{s, 3} \bQ_{s,3}(1) \right) + \xi' 
\end{align}
Comparing \eqref{eqn:averaging} and \eqref{eqn:ensemble}, we can see that 
\begin{itemize}
    \item  For model averaging, the overlapped features $\bmu_{v, 2}$ and $\bmu_{s, 3}$ (corresponding to $\bx_{v, 2}$ and $\bx_{s, 3}$) have coefficients amplified by 2 in $\bar \Phi + \tilde \Phi$, and further amplified twice in $\bar \bw +\tilde \bw$. This results in coefficients of the overlapped feature becoming 4 in $(\bar \bw + \tilde \bw)^\top\bx(\bar \Phi + \tilde \Phi$. 
    \item For model ensemble, i.e., $\bar \bw^\top \bx \bar \Phi + \tilde \bw^\top \tilde \bx \Phi$, the coefficients of the overlapped feature are 2. 
\end{itemize}

\subsubsection{The theoretical condition of WSE outperforming OSE}
\label{app:when_and_why_average_better_ensemble}
Recall Proposition~\ref{prop:general_results_new} that we have
\begin{align*}
&\cA_{\mbox{ood}}(f_{\wse}) =  F_p\left(\frac{(1 - p) (\tilde{n}_s + \bar{n}_s + 2 n_{so}) + (\tilde{n}_v + \bar{n}_v + 2 n_{vo})}{\sqrt{\tilde{n}_s + \bar{n}_s + 14 n_{so}}}\right), \\
    & \cA_{\mbox{ood}}(f_{\ose}) =  F_p\left(\frac{(1 - p) (\tilde{n}_s + \bar{n}_s) + (\tilde{n}_v + \bar{n}_v)}{\sqrt{\tilde{n}_s + \bar{n}_s + 2 n_{so}}}\right).
\end{align*}   
A direct consequence of Proposition~\ref{prop:general_results_new} is as follows, which illustrates when model averaging can be more effective than model ensemble:
\begin{prop}
    \label{prop:overlap_proposition}
    Consider the models in Definition~\ref{defi:individual_model}, suppose Assumption \ref{ass:small_noise} and \ref{ass:ortho_feature} hold,  there are infinite ID and OOD samples.  Suppose the number of features that $\bar\Phi$ and $\tilde\Phi$ learn are the same, i.e.,  $\bar{n}_v = \tilde{n}_v \doteq n_v, \quad \bar{n}_s = \tilde{n}_s \doteq n_s$
and denote  $\rho_s \doteq n_{so} / n_s, \rho_v \doteq n_{vo} / n_v$. Omitting small constants involving $\epsilon$,  we have $\cA_{\mbox{ood}}(f_{\wse}) > \cA_{\mbox{ood}}(f_{\ose})$ when 
$
    \frac{\rho_v}{\rho_s} > \frac{3 (1 - p) n_s}{n_v},
$ and $\cA_{\mbox{ood}}(f_{\wse}) \leq \cA_{\mbox{ood}}(f_{\ose})$ when $
    \frac{\rho_v}{\rho_s} \leq \frac{3 (1 - p) n_s}{n_v}.
$
\end{prop}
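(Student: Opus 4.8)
The plan is to reduce everything to a comparison of the two CDF arguments and then a single scalar inequality. By Proposition~\ref{prop:general_results_new} and Proposition~\ref{prop:general_results_wse}, $\cA_{\ood}(f_{\ose})$ and $\cA_{\ood}(f_{\wse})$ are both $F_p$ evaluated at explicit arguments, and $F_p$ is monotonically increasing; hence $\cA_{\ood}(f_{\wse}) > \cA_{\ood}(f_{\ose})$ holds \emph{iff} the WSE argument exceeds the OSE argument. First I would substitute the symmetric-count hypothesis $\bar n_v = \tilde n_v = n_v$, $\bar n_s = \tilde n_s = n_s$ into both arguments. This collapses the OSE argument to $\sqrt{2}\,\frac{(1-p)n_s + n_v}{\sqrt{n_s + n_{so}}}$ and the WSE argument to $\sqrt{2}\,\frac{(1-p)(n_s+n_{so}) + (n_v+n_{vo})}{\sqrt{n_s + 7n_{so}}}$, so that the common factor $\sqrt 2$ cancels and the problem becomes a comparison of two ratios sharing the ``base'' numerator $u := (1-p)n_s + n_v$ and differing by the overlap increment $\delta := (1-p)n_{so} + n_{vo}$ in the numerator and by $7n_{so}$ versus $n_{so}$ under the root.

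Next I would clear denominators. Both numerators are strictly positive (nonnegative feature counts and $p<1$), and both radicands are positive, so the inequality $(u+\delta)\sqrt{n_s+n_{so}} > u\sqrt{n_s+7n_{so}}$ is equivalent, after squaring, to a polynomial inequality. Expanding $(u+\delta)^2(n_s+n_{so}) - u^2(n_s+7n_{so})$ and cancelling the $u^2 n_s$ terms leaves
\begin{equation*}
2u\delta\, n_s - 6u^2 n_{so} + 2u\delta\, n_{so} + \delta^2(n_s + n_{so}).
\end{equation*}
The last two summands are second order in the overlap quantities $(n_{so},n_{vo})$, whereas the first two are first order; retaining the dominant balance (consistent with ``omitting small constants'') reduces the condition to $2u\delta\, n_s > 6u^2 n_{so}$, i.e.\ $\delta\, n_s > 3u\, n_{so}$. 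This is the linear threshold: a tug-of-war between the numerator gain $\delta$ that WSE earns by doubling overlapped features and the denominator penalty carried by the coefficient $14n_{so}$ (versus $2n_{so}$ for OSE), the factor $3=\tfrac{14-2}{2\cdot 2}$ arising precisely from this contrast.

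Finally I would rewrite $\delta\, n_s > 3u\, n_{so}$ in terms of the overlap fractions. Dividing by $n_s$ and by the base magnitudes and using $n_{so} = \rho_s n_s$, $n_{vo} = \rho_v n_v$, the threshold rearranges to the stated form $\tfrac{\rho_v}{\rho_s} > \tfrac{3(1-p)n_s}{n_v}$, with the reverse (non-strict) inequality giving $\cA_{\ood}(f_{\wse}) \le \cA_{\ood}(f_{\ose})$ by the same monotonicity argument, the boundary case of equal arguments yielding equality of accuracies. The main obstacle I anticipate is the passage from the exact squared polynomial to the clean linear threshold: one must argue carefully which overlap terms are subdominant and how the base numerator $u=(1-p)n_s+n_v$ is handled so that the coefficient emerges as $3(1-p)$ rather than a messier expression, and must justify that dropping the second-order terms does not change the sign near the threshold. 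Verifying positivity of the numerators before squaring, and checking that the dichotomy is exhaustive, are routine but necessary bookkeeping steps.
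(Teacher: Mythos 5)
Your strategy is the right one, and it is the only route the paper itself offers: Proposition~\ref{prop:overlap_proposition} is presented as a direct consequence of Propositions~\ref{prop:general_results_new} and~\ref{prop:general_results_wse}, so everything reduces to comparing the two arguments of the monotone $F_p$. Your substitution, the cancellation of the common $\sqrt{2}$, the reduction to $(u+\delta)\sqrt{n_s+n_{so}} > u\sqrt{n_s+7n_{so}}$ with $u=(1-p)n_s+n_v$ and $\delta=(1-p)n_{so}+n_{vo}$, and the expansion after squaring are all correct, as is the dominant-balance condition $\delta\,n_s > 3u\,n_{so}$.

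The gap is exactly the step you flagged as the ``main obstacle,'' and it does not close: $\delta\,n_s > 3u\,n_{so}$ does \emph{not} rearrange to the stated threshold. Substituting $\delta$ and $u$ gives $(1-p)n_{so}n_s + n_{vo}n_s > 3(1-p)n_s n_{so} + 3 n_v n_{so}$, i.e.\ $n_{vo}n_s > 2(1-p)n_s n_{so} + 3n_v n_{so}$, which in ratio form is $\rho_v/\rho_s > 3 + 2(1-p)n_s/n_v$, not $3(1-p)n_s/n_v$. To land on the paper's constant one must additionally approximate $u\approx (1-p)n_s$ (i.e.\ $n_v \ll (1-p)n_s$) and $\delta\approx n_{vo}$; neither is covered by ``omitting small constants involving $\epsilon$,'' and you invoke neither. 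A concrete check: with $p=0.5$, $n_s=n_v=10$, $n_{so}=1$, $n_{vo}=3$ the stated condition holds ($\rho_v/\rho_s=3>1.5$), yet $(u+\delta)^2(n_s+n_{so}) = 18.5^2\cdot 11 = 3764.75 < 3825 = 15^2\cdot 17 = u^2(n_s+7n_{so})$, so the WSE argument is strictly smaller and $\cA_{\ood}(f_{\wse})<\cA_{\ood}(f_{\ose})$. So your derivation is sound up to $\delta n_s>3u n_{so}$, but the asserted final rearrangement is false, and as written the proof does not establish the proposition (indeed, taken literally against Propositions~\ref{prop:general_results_new}--\ref{prop:general_results_wse}, the stated threshold only emerges under the extra regime assumptions above). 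A secondary, smaller issue: the discarded terms $2u\delta n_{so}+\delta^2(n_s+n_{so})$ are nonnegative, so dropping them preserves only one direction of the comparison; the claimed dichotomy therefore needs an explicit smallness assumption on $n_{so},n_{vo}$ rather than a bare dominant-balance appeal.
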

As shown in Appendix~\ref{app:overlap_intuition}, the coefficient of an overlapped feature in model averaging is 4, and The coefficient of an overlapped feature in model ensemble is 2. If more $\bar \Phi$ and $\tilde \Phi$ learns more overlapped invariant  features, the model averaging would put more weight on the invariant features, leading to better OOD performance.

\begin{figure}
    \centering
    \includegraphics[width=0.5\linewidth]{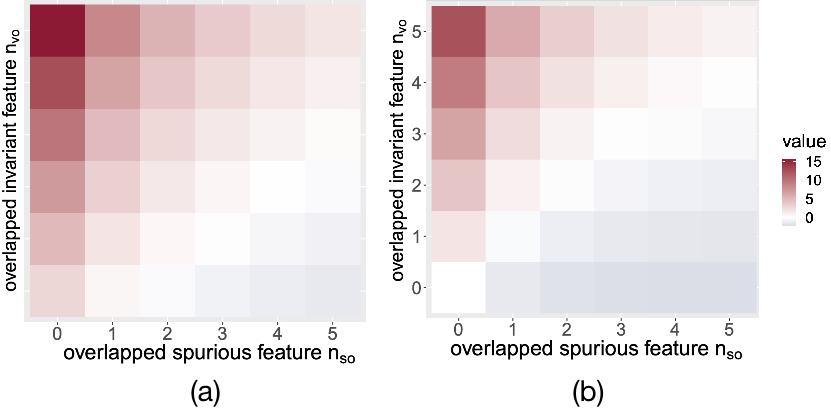}
    \caption{(a) $\cA_{\mbox{ood}}(f_{\wse}) - \cA_{\mbox{ood}}(\bar f)$ on Example~\ref{example-3}, (b) $\cA_{\mbox{ood}}(f_{\wse}) - \cA_{\mbox{ood}}(f_{\ose})$ on Example~\ref{example-3},}
    \label{fig:wse-ose}
\end{figure}
In Figure~\ref{fig:illstration}~(c) and (d), we illustrate $\cA_{\mbox{ood}}(f_{\wse}) - \cA_{\mbox{ood}}(\bar f)$ and $\cA_{\mbox{ood}}(f_{\wse}) - \cA_{\mbox{ood}}(f_{\ose})$ on the following example:
\begin{example} 
\label{example-3}
 Consider both models learn the same number of features, i.e., fixing $\bar n_v = \tilde n_v = 10$ and $\bar n_s = \tilde n_s = 20$, vary $n_{vo}=0,1,...,5$ and $n_{so}=0,1,...,5$.\vspace{-2mm}
\end{example}
We can see that $f_{\wse}$ achieves larger OOD improvement over $f_{\ose}$ when two individual models learns more overlapped invariant features (e.g., larger $n_{vo}$) and less overlapped spurious features (e.g., smaller $n_{so}$).
In  Appendix~\ref{app:when_and_why_average_better_ensemble}, we provide conditions when $f_{\wse}$ outperforms $f_{\ose}$, discuss why this can happen easily in real-world datasets, provide some primary experimental results.

\vspace{-2mm}

\textbf{Why does it easily happen in OOD on many real-world applications?} 
Recall that there are totally $d_v$ invariant features and $d_s$ spurious features. It is a common believe that spurious features are high-dimensional and invariant features are low-dimensional, i.e., $d_s \gg d_v$ \citep{arjovsky2019invariant, rosenfeld2020risks}. Since the spurious features are high dimensional and  \citep{allen2020towards, zhang2023learning} indicate that different models can learn different (limited size) subsets of features, the overlap ratio of spurious feature $\rho_s$ is relatively low. On the other hand, there are a small number of invariant features and recent studies \citep{rosenfeld2022domain, qiu2023simple, kirichenko2022last} show that models always learn some invariant features for the fine-tuned task during ERM fine-tuning regardless of the presence of spurious features, so we conjecture that the overlapped ratio of invariant feature $\rho_v$ is relatively higher.

However, we recognize that our discussion regarding the overlap ratio of invariant spurious features being larger than spurious features is not supported by rigorous proof, but rather it remains a conjecture. Further research in this area is necessary to provide more conclusive evidence and establish a solid foundation for this claim. In the next part, we will conduct experiments to provide some initial support for this conjecture.   

\subsubsection{Empirical Verification}
It is very difficult to directly empirically verified Proposition~\ref{prop:overlap_proposition} because
\begin{itemize}
    \item For real-world datasets, it is hard to identify whether and how much a model relies on invariant  or spurious features. Verifying Proposition~\ref{prop:overlap_proposition} needs to estimate how much two models relies on the same feature.  
    \item For synthetic datasets, such as CMNIST~\citep{arjovsky2019invariant}, there is no feasible pre-trained models available. On the other hand, weight space ensemble needs to be conducted on models close to pre-trained models. 
\end{itemize}
In this part, we design a primary experiment to get around the above obstacles.  Consider the ensemble of two models: pre-trained CLIP ($\bar f$) and the CLIP fine-tuned on ImageNet ($\tilde f$). 

First, we use ImageNet variants (ImageNet-V2, ImageNet-Sketch, ImageNet-A, ImageNet-R, ObjectNet) for OOD performance evaluation. Recall that ImageNet variants share the same invariant features with ImageNet.   Also recent studies \citep{rosenfeld2022domain, qiu2023simple, kirichenko2022last} show that ERM fine-tuned models  always learn some invariant features for the fine-tuned task regardless of the presence of spurious features. So $\tilde f$ learns the invariant features for ImageNet variants. At the same time, the pre-trained CLIP $\bar f$ can stably  perform zero-shot classification on ImageNet and its variants, indicating that $\bar f$ also learns good invariant features for ImageNet variants. According to the previous discussion, $\bar f$ and $\tilde f$ have some overlapped invariant features for ImageNet variants, leading to better weight space ensemble than output space ensemble on ImageNet variants (shown in Figure~\ref{fig:imagenet_variant}(Left)). 

We then evaluate the OSE and WSE on three other distinct datasets, i.e., Places365, StanfordCars, DTD and Food101 (refer as PSDF datasets).   These tasks have different label space with ImageNets, and contains different invariant features with ImageNet. Then in this case, the model $\tilde f$ fine-tuned on the ImageNet learns little invariant for PSDF datasets. So overlap invariant features used by the pre-trained model $\bar f$ and fine-tuned $\tilde f$ are rather limited, indicating $\rho_v$ is close to zero. Then according to Proposition~\ref{prop:overlap_proposition},   WSE would be no better than OSE. This is consistent with the results in Figure~\ref{fig:imagenet_variant}(right).

\begin{figure}[h]
    \centering
    \includegraphics[width=0.4\linewidth]{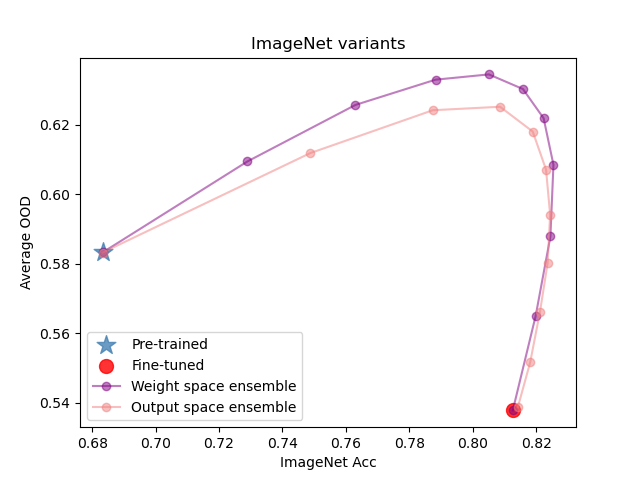}
    \includegraphics[width=0.4\linewidth]{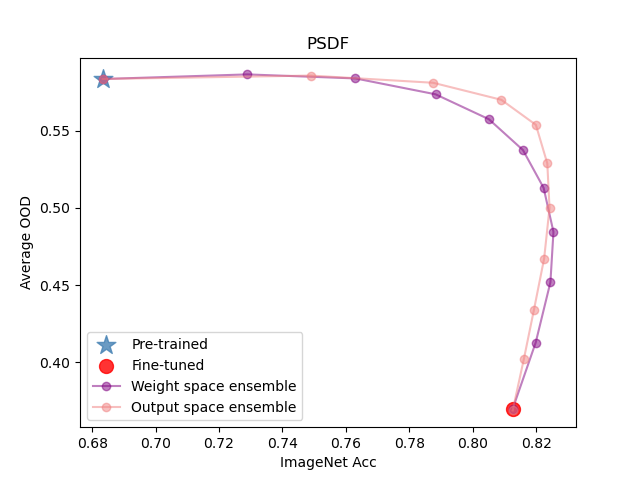}
    \caption{Comparison of model ensemble and averaging. Left) OOD performance on ImageNet variants, Right) OOD performance on PSDF (Places365, StanfordCars, DTD and Food101).}
    \label{fig:imagenet_variant}
\end{figure}

\subsection{Illustrating the over-confidence.}
\label{app:illustration_overconfidence}
In Section~\ref{sect:calibrated_ensemble}, we use $\lambda$ to characterize the over-confidence of $f_\lambda = (\lambda \bw, \lambda \Phi)$. Specifically, we have
\begin{align*}
    f_\lambda(\bx) = \lambda^2 \bw^\top \bx \Phi.
\end{align*}
Denote $q:=\bw^\top \bx \Phi$, which is a 3-dimensional vector for a 3-class classification problem.  Consider an example, i.e., $q = [2, 1, 1]$. Recall that $q(k)$ is the $k$-th element of $q$ for $k=1,2,3.$. The predicted probability for the first class when $\lambda = 1$ is
\begin{align*}
    \text{Probability of class 1} = \frac{\exp(q(1))}{\exp(q(1)) + \exp(q(2)) \exp(q(3))} = \frac{\exp(2)}{\exp(2) +  \exp(1) + \exp(1)} = 0.576.
\end{align*}
When the predicted class of $f_\lambda$ would be the same for $\lambda > 1$ and $\lambda = 1$. Whereas, when $\lambda > 1$, the predicted probability for the largest class would be amplified, e.g., when $\lambda = \sqrt{5}$
\begin{align*}
    \text{Probability of class 1} = & \frac{\exp(\lambda ^2q(1))}{\exp(\lambda^2 q(1)) + \exp(\lambda^2 q(2)) \exp(\lambda^2 q(3))} \\
    =& \frac{\exp(2 \lambda^2 )}{\exp(2 \lambda^2) +  \exp(\lambda^2) + \exp(\lambda^2)}\\ =&  0.99
\end{align*}
So we can see that a larger $\lambda$ won't change the predicted class, but would make $f_\lambda$ more confident.

\section{More experimental details and results on BANG}
\label{chap:Appendix_dataset}
\subsection{Details on ImageNet Variants}
\label{app:imagenet_variants}

\begin{figure}[htbp]
  \centering 
  \includegraphics[width=1.0\textwidth]{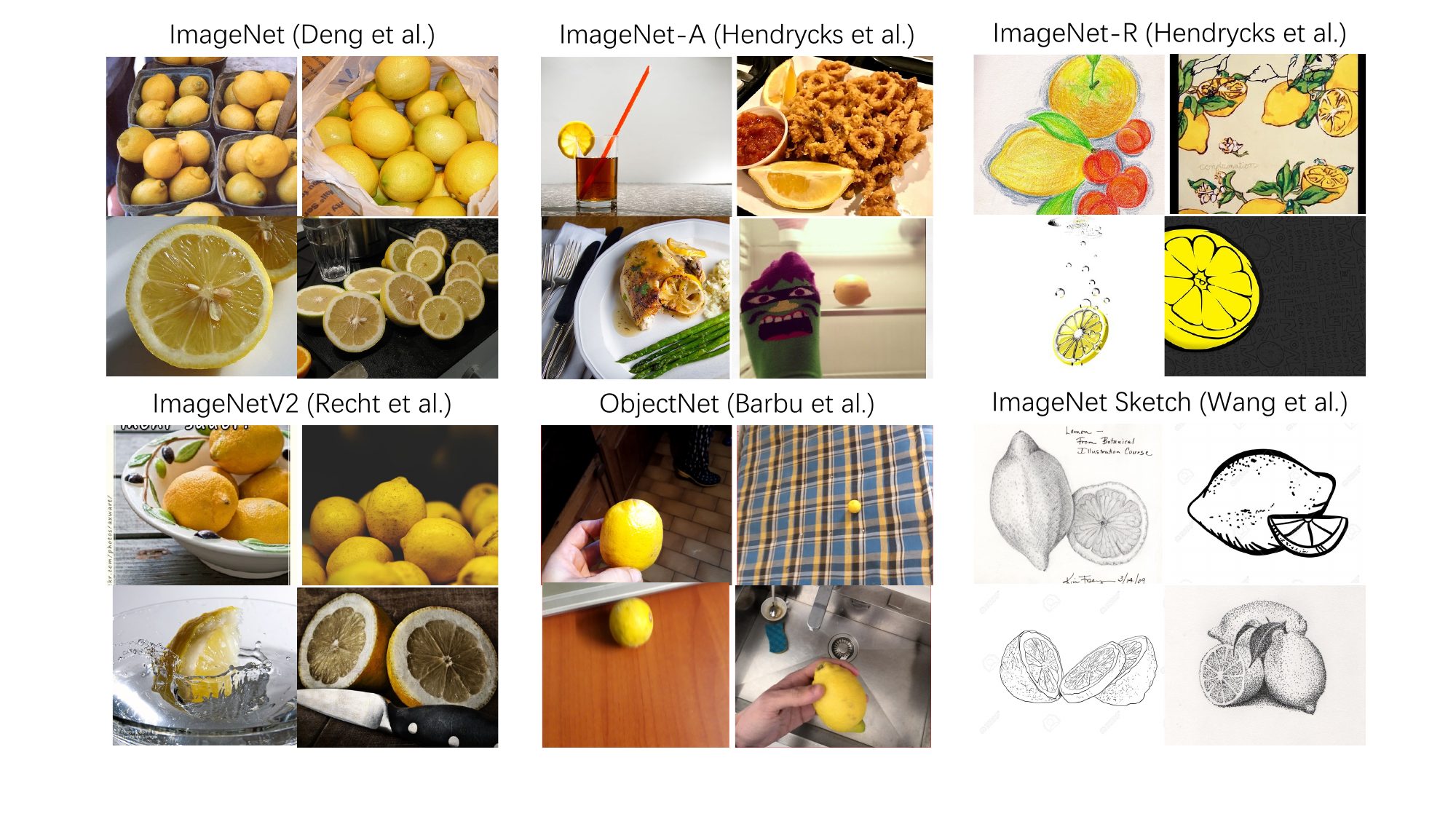}
  \caption{Datasets on ImageNet and its variants. For each dataset, we pick 4 samples of the class \textit{lemon} and show illustrative images from each dataset. The dataset descriptions are similar to that of \cite{wortsman2022robust}.}
  \label{fig:imagenet_variants}
\end{figure}

Details for ImageNet variants: 
\begin{itemize}
    \item ImageNet-V2(IN-V2): A recreated version of the ImageNet test set, but with a different set of data distribution.
    \item ImageNet-R(IN-R):  Renditions of 200 ImageNet classes resulting in 30,000 images. 
    \item ImageNet Sketch(IN-S): Sketch style images of the same categories as ImageNet, with a total of 50000 images.
    \item ObjectNet(ON): Objects in this dataset are captured in cluttered and natural environments at unusual poses. 
    \item ImageNet-A(IN-A): This dataset consists of naturally occurring images that are misclassified by a ResNet-50 model for 200 ImageNet classes.
\end{itemize}

\subsection{Details of Places365, StanfordCars, DTD and Food101 (PSDF)}
\begin{itemize}
    \item Places365 (\cite{zhou2017places}): A scene recognition dataset. In this paper, we use the validation set from the Places365-standard, which is composed of  36,000 validation images from 365 scene classes.
    \item StanfordCars (\cite{krause20133d}): This dataset contains 196 classes of cars. Classes are typically at the level of Make, Model, Year, ex. 2012 Tesla Model S or 2012 BMW M3 coupe. In this paper, we evaluate models on the test set, comprising 8,041 images. 
    \item Describable Textures Dataset (DTD) (\cite{cimpoi2014describing}): DTD is a texture database, organized according to a list of 47 categories inspired from human perception such as banded, dotted and gauzy. In the paper, we use the test set with 40 images per class.
    \item Food101 (\cite{bossard2014food}): This dataset consists of 101 food categories. In the paper, we use the test set with 250 test images for each class. 
\end{itemize}

\subsection{Details on calculating the confidence}
\label{app:details_confidence}
Consider a $K$-class classification problem. Denote the $l$th element of the output as $\mbox{Prob}_l$,  indicating the probability the model assigns to the $l$th class. 
We have $\sum_{l = 1}^K \mbox{Prob}_l = 1$. 
The confidence is defined as as: $$\mbox{Confidence} =  \max \left( \{ \mbox{Prob}_l\}_{l=1}^K \right).$$

\subsection{Experimental Details}
\label{app:experimental_details}
We use the CLIP model ViT-B/16\cite{radford2021learning}. We fine-tune the pre-trained model on ImageNet. We use the AdamW optimizer with the default PyTorch AdamW hyperparameters and choose 512 as batch size. We use a learning rate of $3 \times 10^{-5}$, gradient clipping at global norm 1 and fine-tune for a total of 10 epochs. The settings mentioned above are the same with \cite{wortsman2022robust}. For our method BANG, we try four smoothing for LS (label smoothing): 0.05, 0.10, 0.15 and 0.20. We adopt 0.10 in our reported results in Table \ref{tab:BANG_performance}. Further results in Table~\ref{tab:bang_hyperparamters} show that BANG is relatively insensitive to the hyper-parameter. We do not tune the hyper-parameters of Mixup \cite{zhang2017mixup}. We use the default hyperparamter as MMPreTrain\cite{2023mmpretrain}.

\begin{figure}
     \centering
     \begin{subfigure}[]{0.45\linewidth}
         \centering   \includegraphics[width=\linewidth]{OLD/calibration_finetuned.pdf}
         \caption{The vanilla fine-tuned model}     \label{fig:finetune_vanilla}
     \end{subfigure}
     \hfill
     \begin{subfigure}[]{0.45\linewidth}
         \centering        \includegraphics[width=\linewidth]{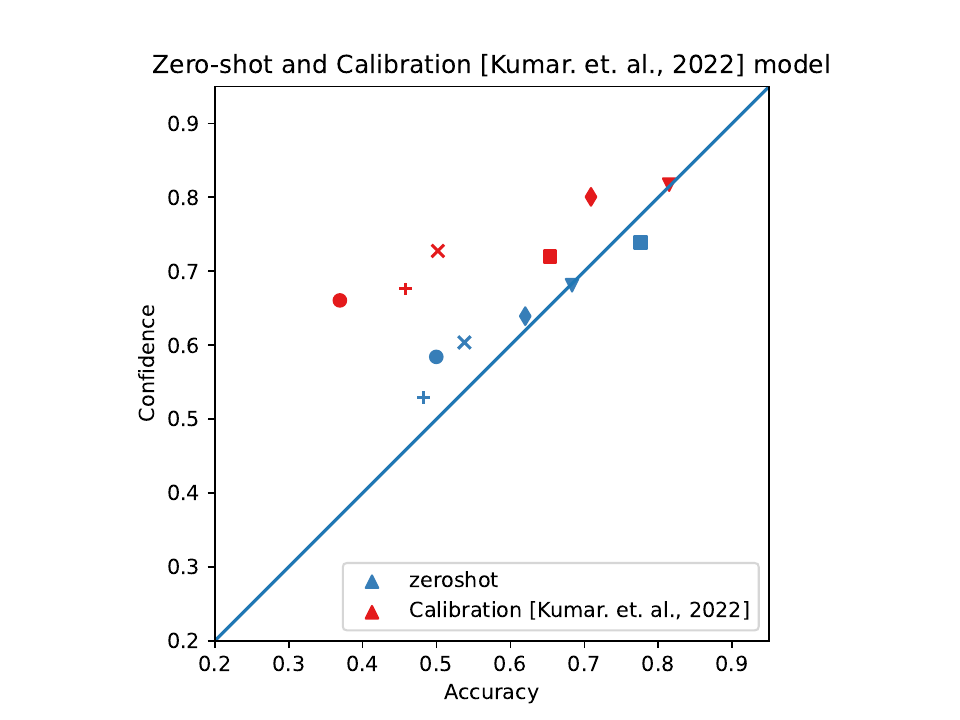}  \caption{The vanilla fine-tuned model calibrated on in the ID dataset \cite{kumar2022calibrated}.}  \label{fig:calibration_kumar}
     \end{subfigure}
     \hfill
     \begin{subfigure}[]{0.45\linewidth}
         \centering        \includegraphics[width=\linewidth]{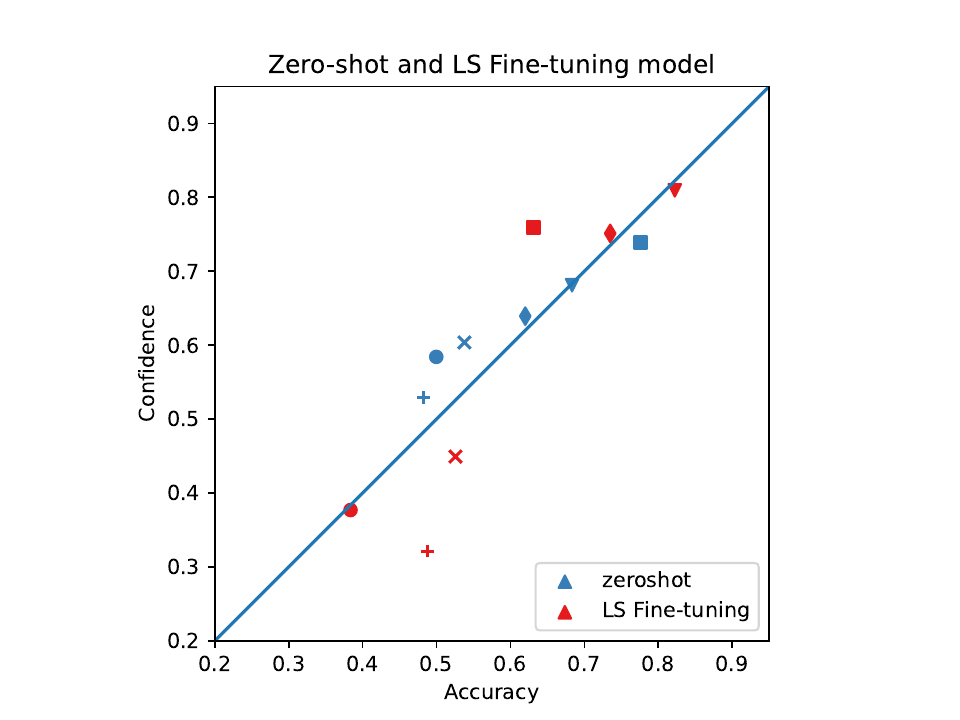}  \caption{The model fine-tuned with label smoothing}  \label{fig:finetune_ls}
     \end{subfigure}
    \hfill
     \begin{subfigure}[]{0.45\linewidth}
         \centering        \includegraphics[width=\linewidth]{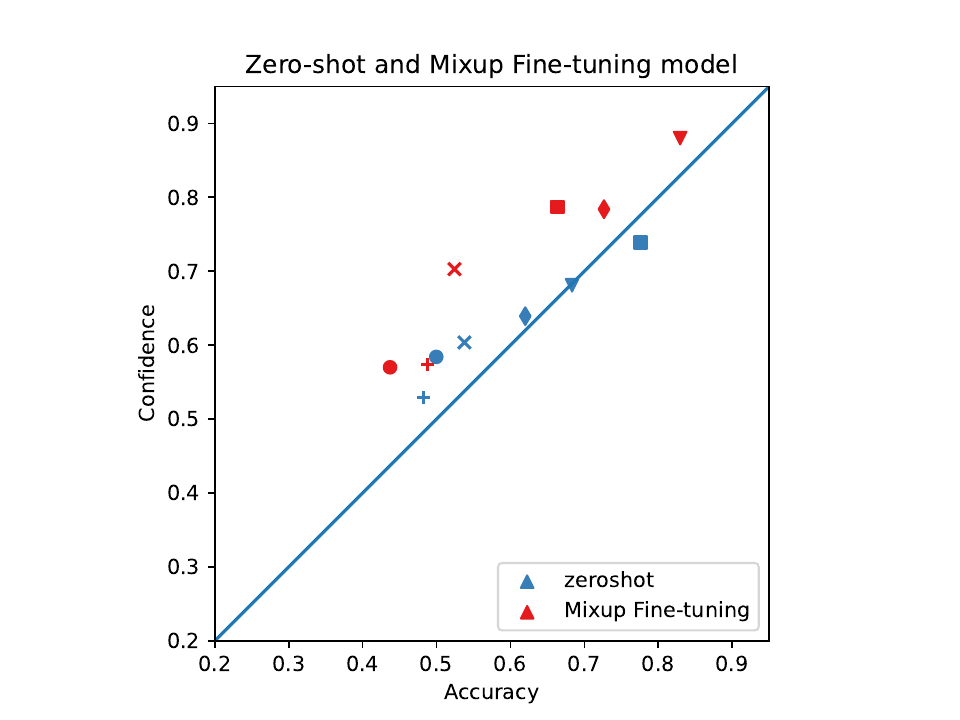}  \caption{The model fine-tuned with Mixup}  \label{fig:finetune_mixup}
     \end{subfigure} 
     \hfill
     \begin{subfigure}[]{0.45\linewidth}
         \centering        \includegraphics[width=\linewidth]{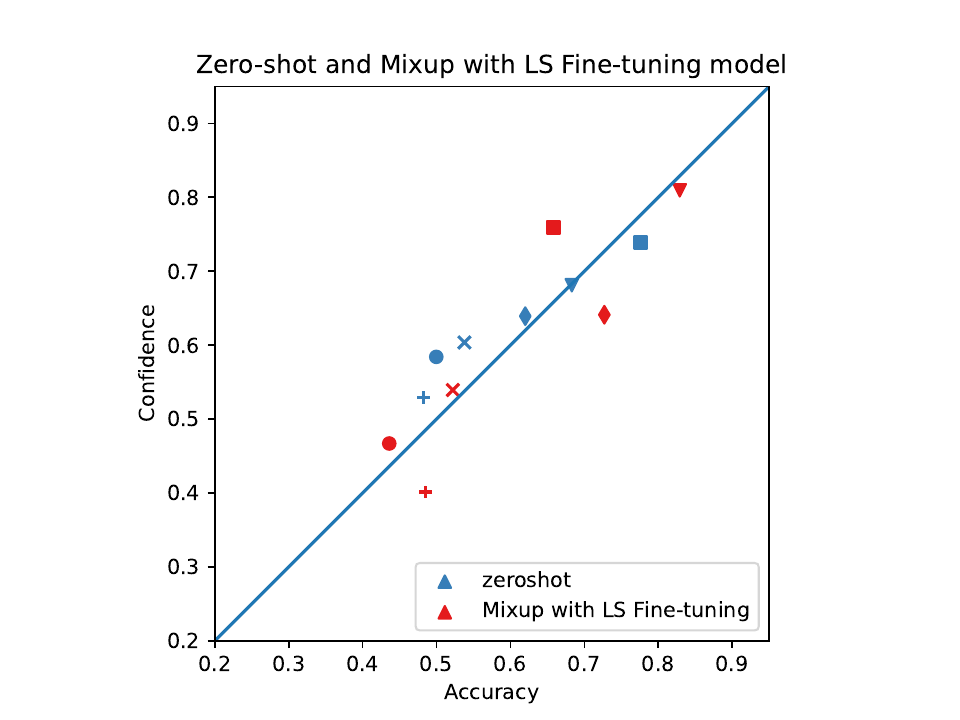}  \caption{The model fine-tuned with both Mixup and LS}  \label{fig:finetune_mixup_ls}
     \end{subfigure} 
     \caption{
     Comparison of confidence and accuracy between zero-shot and the model finetuned with different methods. In the figure, the ID dataset is the ImageNet dataset, which is represented by $\triangledown$. the five OOD datasets are: $\circ$ for ImageNetA, $\square$ for ImageNetR, $+$ for ImageNetSketch, $\diamondsuit$ for ImageNetV2 and $\times$ for ObjectNet.
     }   
    \label{fig:ls_and_mixup_calibration}
\vspace{-5mm}
\end{figure}

\subsection{More Results on BANG and Discussions}
\label{app:more_bang_results}
\textbf{Mixup and label smoothing can alleviate the over-confidence of the fine-tuned model} Compare Figure \ref{fig:finetune_ls}, \ref{fig:finetune_mixup}, \ref{fig:finetune_mixup_ls} with Figure \ref{fig:finetune_vanilla}, we can see that imposing Mixup and LS during fine-tuning can alleviate the over-confidence of the fine-tuned model on both ID (ImageNet, denoted by $\triangledown$ in the figure) and OOD datasets, which is consistent with existing results \cite{park2022calibration}. 

\textbf{Comparison with Calibrated Ensemble \cite{kumar2022calibrated}}. \cite{kumar2022calibrated} calibrates the fine-tuned model on the ID dataset by searching for a temperature $T$ of the softmax. Figure \ref{fig:calibration_kumar} shows that the confidence of the calibrated fine-tuned model approximately equals its accuracy on the ID dataset (ImageNet). However, such model is still highly over-confidence in OOD datasets, e.g., the confidence is over 0.6 while the accuracy is lower than 0.4 on ImageNetA (denoted by $\circ$), which is consistent with the findings in \cite{ovadia2019can} and also the discussions in the Section 4.2 of \cite{ovadia2019can}. So the scaling issue shown in Proposition~\ref{prop:imbalanced_averaging} still exists in OOD datasets. Notably, Calibrated Ensemble itself \cite{kumar2022calibrated} can not be directly applied on model averaging: Model averaging merges the parameters of each layer. However, calibrated Ensemble only tunes the temperature of the softmax, which does not affect the lower layers, indicating that the layers other than the output layer can still suffer from scaling issues. We try a direct adaptation of  \citep{kumar2022calibrated} to WiSE-FT: divide the weights in the last layer $\bw$ by a scalar (temperature) and then perform weight averaging. This also does not yields satisfactory results (Appendix~\ref{app:better_calibration}) and the reason is discussed above.  

\textbf{Comparison between Mixup with other data augmentations}
We also compare Mixup with other data augmentations. We fine-tune the CLIP on ImageNet with flip, rotate, and color augmentation, respectively. We then performance weight averaging on these fine-tuned model with the pre-trained model as \cite{wortsman2022robust} does. Table \ref{tab:data_aug} shows that flip, rotate, and color augmentation can not enhance the performance of model averaging. Figure \ref{fig:other_augs} also shows that these augmentation  methods can not alleviate the over-confidence of the fine-tuned model.

\textbf{BANG is relatively insensitive to hyper-parameters}. Table \ref{tab:bang_hyperparamters} shows the performance of BANG with different hyper-parameters of label smoothing. BANG is relatively insensitive to such hyper-parameters, e.g., the average OOD performance of BANG(Mixup+LS) all remains at about 64.9\% for the four hyper-parameters. 

\begin{table}[]
    \centering
    \resizebox{0.8\linewidth}{!}{
    \begin{tabular}{c|c|c|ccccc|c}
    \toprule
        Methods & Model Averaging & IN &IN-V2& IN-R & IN-A &IN-S& ObjectNet & Avg OOD\\
        \midrule
        Zero-shot \cite{wortsman2022robust} & No & 68.3 & 61.9 & 77.6 & 49.8 & 48.2 & 53.0 & 58.1  \\
        Fine-tuning \cite{wortsman2022robust} &No & 81.3 & 70.9 & 65.6 & 36.7 & 46.3 & 49.6 & 53.8  \\
        Flip & No & 81.3 & 70.5 & 63.1 & 36.8 & 44.6 & 51.4 & 53.3 \\
        Rotate & No & 81.4 & 70.7 & 65.2 & 35.6 & 45.3 & 49.5 & 53.3 \\
        Color & No & 81.4 & 71.5 & 65.3 & 37.3 & 46.7 & 50.4 & 54.2 \\
        \midrule
        Mixup & No & 83.0 & 72.7 & 66.4 & 43.7 & 48.8 & 52.4 & 56.8 \\
        \midrule 
        Flip & Yes & 81.8 & 72.7 & 78.2 & 52.9 & 53.6 & 58.4 & 63.1 \\
        Rotate & Yes & 81.7 & 72.8 & 78.8 & 52.7 & 53.7 & 57.3 & 63.1 \\
        Color & Yes & 81.7 & 72.9 & 78.5 & 53.2 & 54.2 & 58.2 & 63.4 \\
        \midrule 
        Mixup&Yes & 81.5 & 73.0 & 79.5 & 57.9 & 54.5 & 58.7 & 64.7 \\
        \bottomrule
    \end{tabular}
    }
    \caption{Results of fine-tuning CLIP  VIT-B/16 with flip, color, and rotation data augmentation on ImageNet.}
    \label{tab:data_aug}
    \vspace{-4mm}
\end{table}

\begin{figure}
     \centering
     \begin{subfigure}[]{0.3\linewidth}
         \centering   \includegraphics[width=\linewidth]{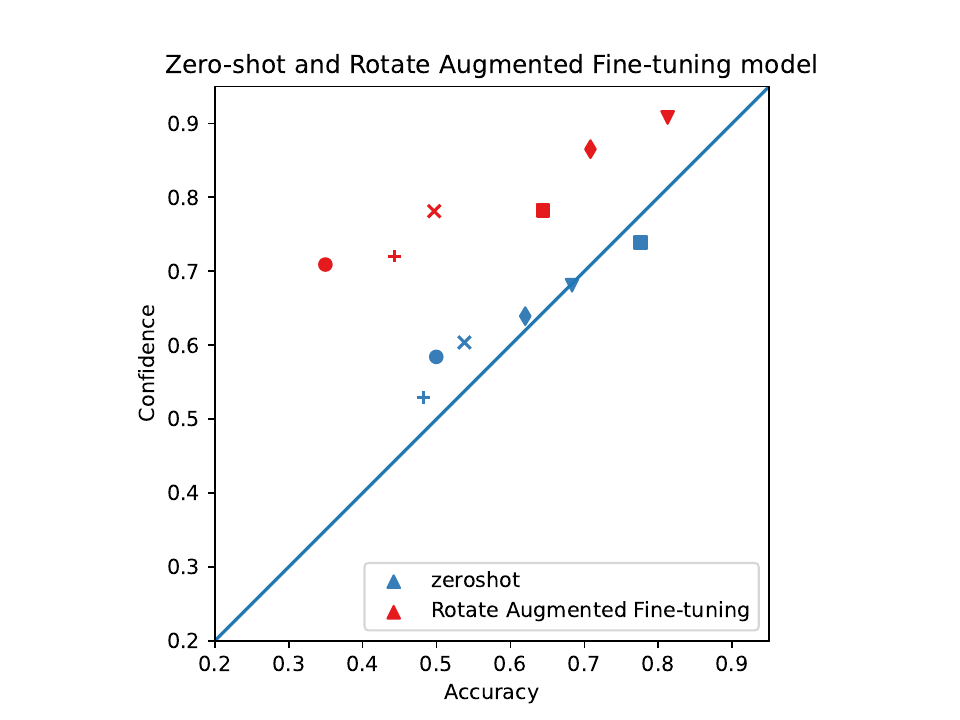}
         \caption{With rotate augmentation}     \label{fig:rotate_aug}
     \end{subfigure}
     \begin{subfigure}[]{0.3\linewidth}
         \centering        \includegraphics[width=\linewidth]{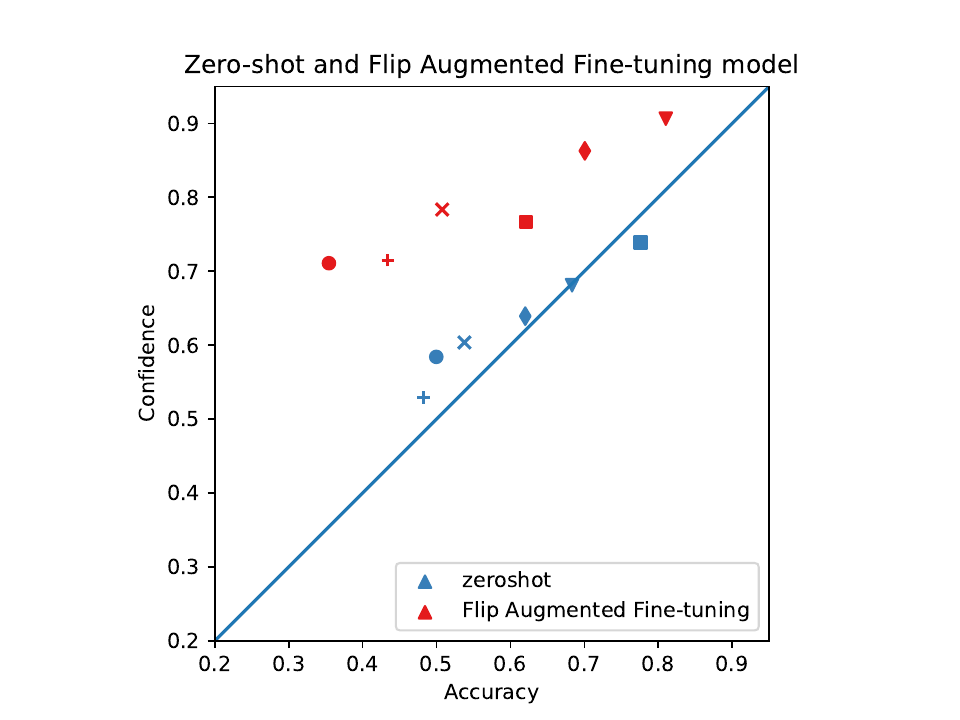}  \caption{With flip augmentation}  \label{fig:flip_aug}
     \end{subfigure}
     \begin{subfigure}[]{0.3\linewidth}
         \centering        \includegraphics[width=\linewidth]{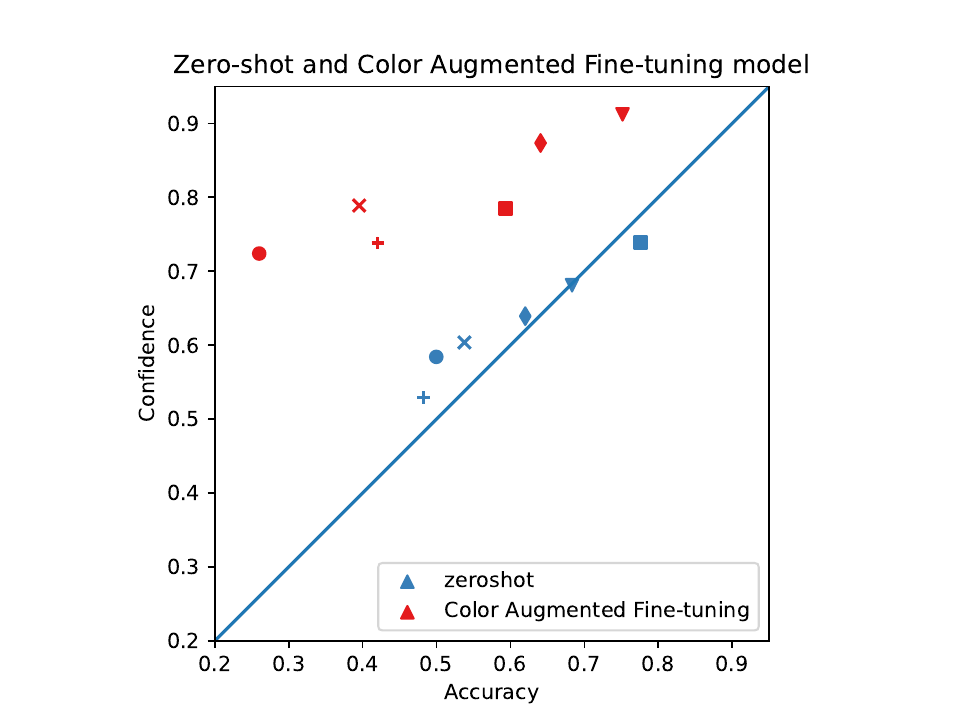}  \caption{With color augmentation}  \label{fig:color_aug}
     \end{subfigure}
     \caption{
     Comparison of confidence and accuracy between zero-shot and the model finetuned with different data augmentation. In the figure, $\triangledown$ refers to ImageNet dataset, $\circ$ for ImageNetA, $\square$ for ImageNetR, $+$ for ImageNetSketch, $\diamondsuit$ for ImageNetV2 and $\times$ for ObjectNet.
     }   
    \label{fig:other_augs}
\end{figure}

\begin{table}[]
    \centering
    \resizebox{\linewidth}{!}{
    \begin{tabular}{c|c|c|ccccc|c}
    \toprule
        Methods & Model Averaging & IN(ImageNet)&IN-V2& IN-R & IN-A &IN-Sketch& ObjectNet & Avg OOD\\
        \midrule
        Zero-shot \cite{wortsman2022robust} & No & 68.3 & 61.9 & 77.6 & 49.8 & 48.2 & 53.0 & 58.1  \\
        Fine-tuning \cite{wortsman2022robust} &No & 81.3 & 70.9 & 65.6 & 36.7 & 46.3 & 49.6 & 53.8  \\
        Fine-tuning(LS(0.05)) & No & 82.0 & 71.5 & 62.8 & 37.7 & 45.5 & 50.6 & 53.6 \\ 
        Fine-tuning(LS(0.10)) & No & 82.0 & 72.3 & 63.3 & 38.3 & 46.5 & 51.1 & 54.3 \\
        Fine-tuning(LS(0.15)) & No & 82.1 & 72.1 & 63.3 & 38.0 & 46.6 & 50.7 & 54.1 \\
        Fine-tuning(LS(0.20)) & No & 82.1 & 72.1 & 62.8 & 36.9 & 46.2 & 50.5 & 53.7\\ 
        
        Fine-tuning(Mixup) & No & 83.0 & 72.7 & 66.4 & 43.7 & 48.8 & 52.4 & 56.8 \\
        Fine-tuning(Mixup + LS(0.05)) & No & 83.0 & 73.2 & 65.9 & 43.9 & 48.5 & 52.3 & 56.7 \\
        Fine-tuning(Mixup + LS(0.10)) & No & 82.7 & 73.0 & 66.4 & 43.3 & 48.6 & 52.4 & 56.8 \\
        Fine-tuning(Mixup + LS(0.15)) & No & 82.9 & 72.7 & 65.8 & 43.6 & 48.5 & 52.2 & 56.6 \\
        Fine-tuning(Mixup + LS(0.20)) & No & 82.9 & 73.2 & 66.4 & 44.6 & 48.5 & 52.4 & 57.0 \\

        \midrule 
        WiSE-FT \cite{wortsman2022robust} &Yes &  81.7	 &72.8	 &78.7 &	52.2 &	53.9	 &57.3	 &63.0 \\
        BANG(LS(0.05))& Yes &82.2 & 73.0 & 78.1 & 54.7 & 53.8 & 58.3 & 63.6 \\ 
        BANG(LS(0.10)) & Yes & 82.1 & 73.3 & 78.2 & 55.2 & 53.7 & 58.9 & 63.9 \\
        BANG(LS(0.15)) & Yes & 82.0 & 73.2 & 78.1 & 55.0 & 53.4 & 58.9 & 63.7 \\
        BANG(LS(0.20)) & Yes & 81.7 & 73.1 & 77.9 & 54.2 & 53.6 & 58.6 & 63.4 \\ 
        BANG(Mixup)&Yes & 81.5 & 73.0 & 79.5 & 57.9 & 54.5 & 58.7 & 64.7 \\
        BANG(Mixup + LS(0.05)) &Yes & 81.6 & 73.1 & 79.7 & 58.2 & 54.8 & 58.9 & 64.9 \\
        BANG(Mixup + LS(0.10)) &Yes & 81.5 & 73.0 & 79.8 & 57.9 & 54.8 & 59.0 & 64.9 \\
        BANG(Mixup + LS(0.15)) &Yes & 81.7 & 72.9 & 79.6 & 57.7 & 54.6 & 59.1 & 64.8 \\
        BANG(Mixup + LS(0.20)) &Yes & 81.6 & 73.1 & 79.9 & 57.8 & 54.8 & 59.0 & 64.9 \\
        \bottomrule
    \end{tabular}
    }
    \caption{Results of BANG with CLIP-B/16. We show different hyper-parameters of label smoothing. Mixup use the default hyper-parameter of MMPreTrain\cite{2023mmpretrain}.}
    \label{tab:bang_hyperparamters}
\end{table}

\subsection{WiSE-FT benefits significantly from better calibration}
\label{app:better_calibration}
In Section~\ref{sect:calibrated_ensemble}, we theoretically show that model WSE can suffer from the imbalance issue where two individual models have different scaling. This can happen if one model is much more confident than the other. Unfortunately, we observe that the popular method, WiSE-FT suffers from this issue. Specifically, WiSE-FT averages the pre-trained model with the fine-tuned model. In Section~\ref{sect:calibrated_ensemble}, we show that the fine-tuned model is high-overconfident compared with the pre-trained model. We propose BANG, which averages the pre-trained model with the model fine-tuned with Label Smoothing (LS) or MixUp.  Since LS and MixUp can also improve the fine-tuned performance, we conduct the following experiment to isolate the effect of better calibration from better fine-tuned performance.

\textbf{Scale the fine-tuned model during weight space ensemble}. A straightforward method  to alleviate over-confidence is to tune the temperature of the softmax of the fine-tuned model \citep{kumar2022calibrated}. 
However, this method can not be directly applied to WiSE-FT since WiSE-FT averages model weights instead of ensemble the outputs. We first apply a direct adaptation of  \citep{kumar2022calibrated} to WiSE-FT: divide the weights in the last layer $\bw$ by a scalar (temperature), which is equivalent to softmax tempering. However, recall the model averaging $(\bar \bw + \tilde \bw)^\top\bx(\bar \Phi + \tilde \Phi)$ also suffer from the imbalance issue of $\Phi$. Specifically, Proposition~\ref{prop:imbalanced_averaging} shows that the averaged feature can be biased towards $\tilde \Phi$ if the scaling of $\tilde \Phi$ is larger than $\bar \Phi$. So merely adjusting the weight of the classifier $\bw$
 cannot alleviate this bias. The experiment  result (Exp 1) in Table~\ref{tab:wise-calibrate} also shows that merely re-scaling the classifier can hardly improve WiSE-FT. In practice, we use a transformer (VIT-B/16) with 12 block layers and 1 linear layer. 
  We obtain the averaged model $(\hat \theta, \hat w)$ as follows
 \begin{itemize}
     \item (Exp 1) Re-scale the classifier of FM (fine-tuned, $\tilde \theta$, $\tilde w$) model during averaging, i.e., $\hat \theta = 0.5 (\bar \theta + \tilde \theta)$ and $\hat w =({1-\alpha})\bar w + {\alpha} \tilde w$.
     \item (Exp 2) Re-scale whole network of FM as, $\hat \theta = ({1-\alpha})\bar \theta + {\alpha} \tilde \theta$ and $\hat w = ({1-\alpha})\bar w + {\alpha} \tilde w$.
      
 \end{itemize}
We search for the best $\alpha$
 among  0.2-0.5 (with interval 0.1) for each ood dataset. The results in Table~\ref{tab:wise-calibrate} shows can merely scaling the fine-tuned model to alleviate its over-confidence can significantly improvement the performance of WiSE-FT.
 
 \begin{table}[t]
     \centering
     \begin{tabular}{c|c|c|c}
     \toprule
         WiSE-FT  & Exp 1 & Exp 2 & BANG \\
         \midrule
         63.0\% & 63.0\% & 64.1\%& 64.9\%\\
         \bottomrule
     \end{tabular}
     \caption{WiSE-FT can benefit significantly from better calibration by scaling the fine-tuned model. (Exp 1)-(Exp 2) are described in Appendix~\ref{app:better_calibration}.}
     \label{tab:wise-calibrate}
 \end{table}

\textbf{BANG can correct more samples on which the fine-tuned model make mistakes.} In this part, We
denote the pre-trained model as PM, fine-tuned model as FM, and averaged model as AM. We divide each dataset into four groups of samples
according to whether the PM and FM  make correct predictions, respectively. We further use T/F
to denote whether a model makes correct predictions, i.e., T for True and F for False. For example,
we use PM(T)-FM(F) to denote the group of samples on which the predictions of PM and FM
 AM are correct and wrong, respectively. We visualize the average margin of fine-tuned and pre-trained models on four groups, i.e., PM(T)-FM(T), PM(T)-FM(F), PM(F)-FM(T), and PM(F)-FM(F).   The margin is the difference between the probability assigned to the correct class and the maximum probability among the wrong classes, i.e., 
 \begin{align*}
     \mbox{Margin} = \mbox{Prob}_{l} - \max_{k \neq l} \mbox{Prob}(k) 
 \end{align*}
 where $l$ is the it the true class, $k=1,..K$, and $\mbox{Prob}(k)$ is the probability that a assign to the class $k$. Averaging models with negative and positive margins can potentially correct mistakes. Figure~\ref{fig:margins} (Left) and (Right) visualize the the margins of pre-trained and fine-tuned models on each group of each datasets for Wise-ft and BANG. In Wise-ft, the fine-tuned model exhibits significantly negative margins in the PM(T)-FM(F) group. Specifically, $\mbox{Margin}(\mbox{PM})+\mbox{Margin}(\mbox{FM})$ on the group PM(T)-FM(F) is negative for WiSE-FT on Figure~\ref{fig:margins}(Left), indicating dominance of fine-tuned models in WiSE-FT even fine-tuned make mistakes. This also explains that why in some datasets, e.g., IN-R and IN-A, ImproveContri(TF+FT) is negative as shown in Figure~\ref{fig:four_graphs}. However, in BANG,  $\mbox{Margin}(\mbox{PM})+\mbox{Margin}(\mbox{FM})$ on the group PM(T)-FM(F) is positive on average as shown in Figure~\ref{fig:margins}(Right), suggesting that BANG is capable of correcting more mistakes within the PM(T)-FM(F) group.  Table~\ref{tab:correct_finetune_ratio} shows that  ratio (versus the entire dataset) of samples where model averaging can correct the prediction in the PM(T)-FM(F) group. Specifically, let $N_1$ denote the number of samples where WiSE-FT can correct the prediction in the PM(T)-FM(F) group and $N_2$ denote the total sample size in the dataset, Table ~\ref{tab:correct_finetune_ratio} compares the $N_1/N_2$ (averaged over 5 OOD datasets)  of WiSE-FT and BANG. Table ~\ref{tab:correct_finetune_ratio} shows BANG can correct  substantially more mistakes made by the fine-tuned model.

 \begin{figure}
     \centering
     \includegraphics[width=0.8\linewidth]{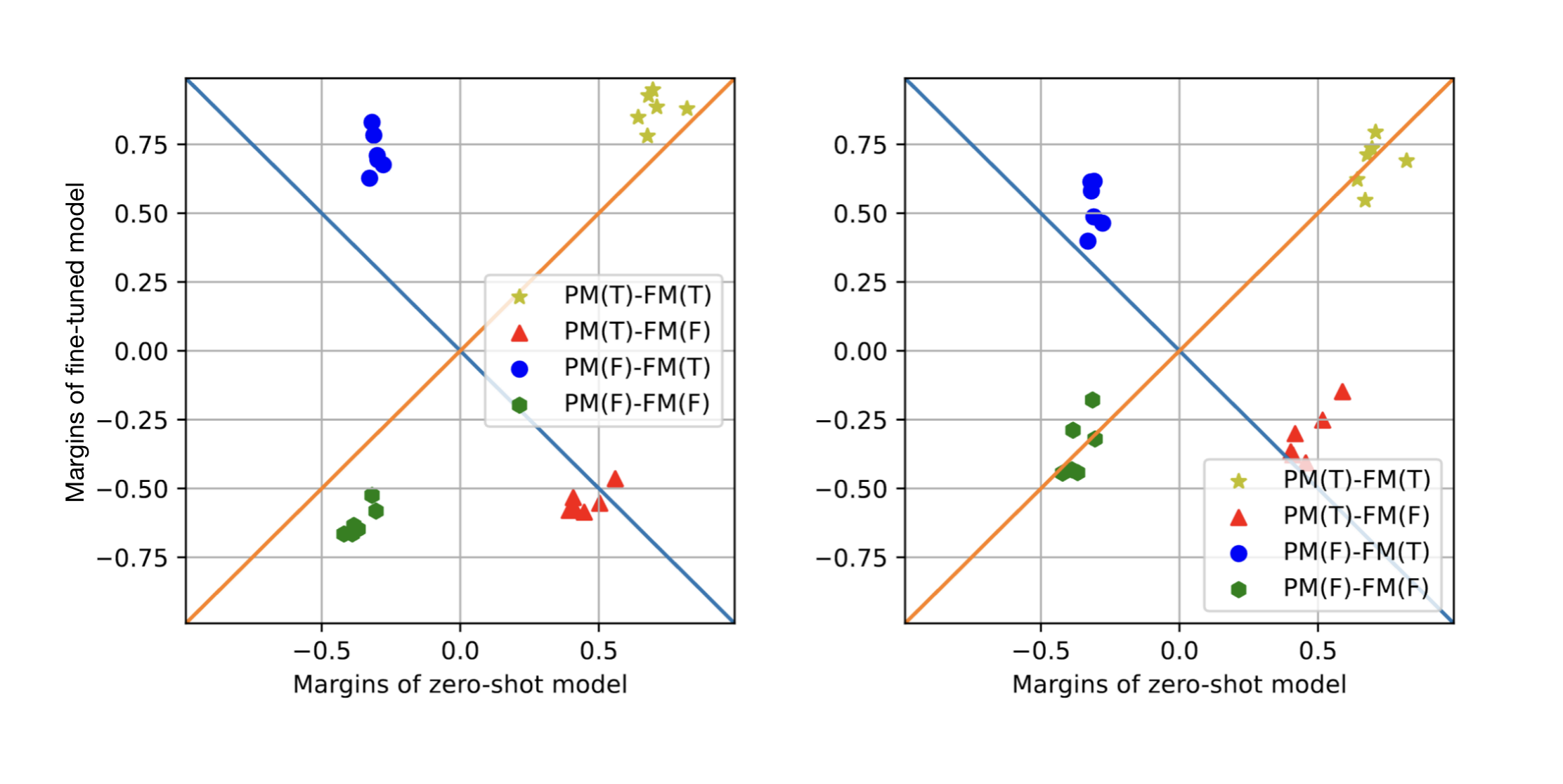}
     \caption{(Left) Margins of WiSE-FT, where the fine-tuned model is obtained through vanilla fine-tuning. (Right)  Margins of BANG, where the fine-tuned model is obtained through fine-tuning with MixUP+LS.}
     \label{fig:margins}
 \end{figure}

\begin{table}[]
    \centering
    \begin{tabular}{c|c}
    \toprule
         & $N_1/N_2$ \\
         \midrule
       WiSE-FT  & 10.6\%\\
       BANG & 13.4\% \\
\bottomrule
    \end{tabular}
    \caption{The ratio (versus the entire dataset) of samples where model averaging can correct the prediction in the PM(T)-FM(F) group. $N_1$ denote the number of samples where model averaging can correct the prediction in the PM(T)-FM(F) group and $N_2$ denote the total sample size in the dataset.}
    \label{tab:correct_finetune_ratio}
\end{table}

\newpage
 
\section{Proofs}
\subsection{Proof of  Proposition~\ref{prop:simple_case_improvment}}
\label{app:proof_example_1_1}

\begin{proof}
(a) \textbf{Two individual models}.
Recall that in the 3-class classification problem, $\bw=[\bw(1), \bw(2), \bw(3)] \in \bbR^{d \times 3}$. We first solve the $\bar \bw$ on the infinite ID samples. Lemma \ref{lemma:classifier} , for $k=1, 2, 3$, we have 
\begin{align*}
    \bar \bw(k) & = \sum_{i=1}^2 \bmu_{v, i}\bQ_{v, i}(k) + \sum_{j=1}^3 \bmu_{s, j} \bQ_{s, j}(k) =  \sum_{i=1}^2 \bmu_{v, i}(k) + \sum_{j=1}^3 \bmu_{s, j}(k), \\
\end{align*}
where the last inequality is because: $\bQ_{v, i} = \bI$ always hold and $\bQ_{s, j} = \bI$ in the ID distribution. Then $\bQ_{v, i}(k) = \be_k$ and $\bQ_{s, j}(k) = \be_k$ (recall that $\bQ(k)$ is the $k$th column of the $3\times3$ matrix $\bQ$). Then for each $\bmu\bQ(k) = \bmu \be_k = \bmu(k)$ and $\bmu(k)$ is the $k$th column of the $d \times 3$ matrix $\bmu$. Similarly, we  have 
\begin{align*}
\tilde \bw(k) & = \sum_{i=3}^4 \bmu_{v, i}(k) + \sum_{j=4}^6 \bmu_{s, j}(k).
\end{align*} 
We first look at the model $(\bar \bw, \bar \Phi)$ and consider the OOD accuracy of the samples from first class $k=1$.  For each sample from the first class in OOD, we have 
$$\bx \bar \Phi|_{\by=\be_1} =\sum_{i=1}^2 \bmu_{v, i} \bQ_{v, i}(1) + \sum_{j=1}^3 \bmu_{s, j} \bQ_{s,j}(1)  + \sum_{i=1}^5 \bz_i$$
where $\bz_i \sim \cN(0, \sigma^2 \boldsymbol{I}_d), \forall i$. The model $(\bar \bw, \bar \Phi)$ makes correct prediction on the samples from $\by=\be_1$ if the following holds
\begin{align*}
     \bw(1)^\top \bx\bar \Phi|_{\by=\be_1}   > \bw(2)^\top \bx\bar \Phi)|_{\by=\be_1}, \mbox{ and } \bw(1)^\top \bx\bar \Phi|_{\by=\be_1} > \bw(3)^\top \bx\bar \Phi|_{\by=\be_1}
\end{align*}
So for each OOD sample, we have
\begin{align*}
    & \bw(1)^\top \bx\bar \Phi)|_{\by=\be_1} \\
    = & \left(\sum_{i=1}^2 \bmu_{v, i}(1) + \sum_{j=1}^3 \bmu_{s, j}(1)\right)^\top\left(\sum_{i=1}^2 \bmu_{v, i} \bQ_{v, i}(1) + \sum_{j=1}^3 \bmu_{s, j} \bQ_{s,j}(1)  + \sum_{i=1}^5 \bz_i\right), \\
    = & \sum_{i=1}^2 \bmu_{v, i}(1)^\top \left(\bmu_{v, i} \bQ_{v, i}(1)\right)  + \sum_{j=1}^3 \bmu_{s, j}(1)^\top \left(\bmu_{s, j} \bQ_{s,j}(1)\right) +\xi \\
    = & 2 + \sum_{j=1}^3 \underbrace{\bmu_{s, j}(1)^\top \left(\bmu_{s, j} \bQ_{s,j}(1)\right)}_{A_j} + \xi
\end{align*}
where the second equality is  by the Assumption \ref{ass:ortho_feature} that different $\bmu$ are all orthogonal to each other; the last equality is because $\bQ_{v, i}(1) = \be_1$ always hold and further by Assumption \ref{ass:ortho_feature} we have 
$$\bmu_{v, i}(1)^\top \left(\bmu_{v, i} \bQ_{v, i}(1)\right) = \bmu_{v, i}(1)^\top \left(\bmu_{v, i} \be_1\right) = \bmu_{v, i}(1)^\top \bmu_{v, i}(1) = 1.$$
Similarly we have 
\begin{align*}
    & \bw(2)^\top \bx\bar \Phi|_{\by=\be_1} \\
    = & \left(\sum_{i=1}^2 \bmu_{v, i}(2) + \sum_{j=1}^3 \bmu_{s, j}(2)\right)^\top\left(\sum_{i=1}^2 \bmu_{v, i} \bQ_{v, i}(1) + \sum_{j=1}^3 \bmu_{s, j} \bQ_{s,j}(1)  + \sum_{i=1}^5 \bz_i\right), \\
    = & \sum_{i=1}^2 \bmu_{v, i}(2)^\top \left(\bmu_{v, i} \bQ_{v, i}(1)\right)  + \sum_{j=1}^3 \bmu_{s, j}(2)^\top \left(\bmu_{s, j} \bQ_{s,j}(1)\right) +\xi \\
    = & 0 + \sum_{j=1}^3 \underbrace{\bmu_{s, j}(2)^\top \left(\bmu_{s, j} \bQ_{s,j}(1)\right)}_{B_j} +\xi,
\end{align*}
where the last equality is because
$\bmu_{v, i}(2)^\top \left(\bmu_{v, i} \bQ_{v, i}(1)\right) = \bmu_{v, i}(2)^\top \bmu_{v, i}(1) = 0.$
Similarly, we also have 
$$\bw(3)^\top \bx\bar \Phi|_{\by=\be_1} = \sum_{j=1}^3 \underbrace{\bmu_{s, j}(3)^\top \left(\bmu_{s, j} \bQ_{s,j}(1)\right)}_{C_j} +\xi.$$

It is easy to see that, for $k_1, k_2 = 1, 2, 3$,   
\begin{align*}
    \bmu_{s, j}(k_1)^\top \left(\bmu_{s, j} \be_{k2} \right)
 = \begin{cases}
     0, \mbox{ if } k_1 = k_2, \\
     1, \mbox{ otherwise}.
 \end{cases}
 \end{align*}
Since in the OOD distribution, $\bQ_{s, j}(1)$ can be any of $\be_1$, $\be_2$ and $\be_3$,  we have $A_j, B_j, C_j \in \{0, 1\}$ and $A_j + B_j + C_j = 1$ for $j = 1, 2, 3$. Specifically, $A_j=1, B_j=0, C_j=0$ if $\bQ_{s, j}=\be_1$, $A_j=0, B_j=1, C_j=0$ if $\bQ_{s, j}=\be_2$, and $A_j=0, B_j=0, C_j=1$ if $\bQ_{s, j}=\be_3$. We then have
\begin{align*}
    \left(\bw(1)^\top \bx\bar \Phi - \bw(2)^\top \bx\bar \Phi\right)|_{\by=\be_1} = \begin{cases}
        -1, \mbox{ if } \sum_{j=1}^3\bbI(\bQ_{s, j}(1) = \be_2) = 3, \\
        0, \mbox{ if } \sum_{j=1}^3\bbI(\bQ_{s, j}(1) = \be_2)=2 \mbox{ and } \sum_{j=1}^3\bbI(\bQ_{s, j}(1) = \be_3)=1,  \\
        \geq  1 \mbox{ otherwise}.
    \end{cases}
\end{align*}
Recall Definition~\ref{defi:data_generation}, in the OOD distribution, we have
$$\bQ_{s, j}(1) = \begin{cases}
    \be_1, \mbox{ with probability } 1 - \frac{2}{3} p,\\
    \be_2, \mbox{ with probability }  \frac{p}{3},\\
    \be_3, \mbox{ with probability }  \frac{p}{3}.
\end{cases}$$
Combing Lemma~\ref{lemma:failure_probability} with the results above we have
\begin{itemize}
    \item $\cA_{\mbox{ood}}(\bar f) \in [0, \epsilon]$ when $\sum_{j=1}^3\bbI(\bQ_{s, j}(1) = \be_2) = 3$ (equivalent to  $\{\bbI(\bQ_{s, j}(1) = \be_2)\}_{j=1}^3$ holds simultaneously) or $\sum_{j=1}^3\bbI(\bQ_{s, j}(1) = \be_3) = 3$, the probability is $2p^3/27$.
    \item  $\cA_{\mbox{ood}}(\bar f) \in [1/2 -\epsilon, 1/2 + \epsilon]$ when $\sum_{j=1}^3\bbI(\bQ_{s, j}(1) = \be_2)=2 \mbox{ and } \sum_{j=1}^3\bbI(\bQ_{s, j}(1) = \be_3)=1$ or $(\sum_{j=1}^3\bbI(\bQ_{s, j}(1) = \be_3)=2 \mbox{ and } \sum_{j=1}^3\bbI(\bQ_{s, j}(1) = \be_2)=1)$ , the probability of which is $2 \cdot C_3^1p^3/27=2p^3/9$. 
    \item $\cA_{\mbox{ood}}(\bar f) \in [1 -\epsilon, 1]$ otherwise, the probability of which is $1 - 8p^3/27$.
\end{itemize}
So the overall expected OOD acuracy is 
$\cA_{\mbox{ood}}(\bar f) = (2p^3/9 \cdot 1/2 + (1-8p^3/27) \cdot 1) \pm \varepsilon \in [1 - 5p^3/27 - \varepsilon, 1 - 5p^3/27 + \varepsilon]$. We have $\cA_{\mbox{ood}}(\tilde f) \in [1-5p^3/27 - \epsilon, 1-5p^3/27 +\epsilon]$ following the same proof.  

(b) \textbf{Output space ensemble and weight space ensemble}.\\ 
Similar to the proof above, for weight space ensemble we have
\begin{align*}
    & \left(\bar \bw(1) + \tilde \bw(1)\right)^\top \bx (\bar \Phi + \tilde \Phi)|_{\by=\be_1} \\
    = & \left(\sum_{i=1}^4 \bmu_{v, i}(1) + \sum_{j=1}^6 \bmu_{s, j}(1)\right)^\top\left(\sum_{i=1}^4 \bmu_{v, i} \bQ_{v, i}(1) + \sum_{j=1}^6 \bmu_{s, j} \bQ_{s,j}(1)  + \sum_{i=1}^5 \bz_i\right), \\
    = & \sum_{i=1}^4 \bmu_{v, i}(1)^\top \left(\bmu_{v, i} \bQ_{v, i}(1)\right)  + \sum_{j=1}^6 \bmu_{s, j}(1)^\top \left(\bmu_{s, j} \bQ_{s,j}(1)\right) +\xi \\
    = & 4 + \sum_{j=1}^6 \underbrace{\bmu_{s, j}(1)^\top \left(\bmu_{s, j} \bQ_{s,j}(1)\right)}_{A_j} +\xi
\end{align*}
We also have
\begin{align*}
    & \left(\bar \bw(2) + \tilde \bw(2)\right)^\top \bx (\bar \Phi + \tilde \Phi)|_{\by=\be_1} =  \sum_{j=1}^6 \underbrace{\bmu_{s, j}(2)^\top \left(\bmu_{s, j} \bQ_{s,j}(1)\right)}_{B_j} +\xi, \\
    & \left(\bar \bw(3) + \tilde \bw(3)\right)^\top \bx (\bar \Phi + \tilde \Phi)|_{\by=\be_1} =  \sum_{j=1}^6 \underbrace{\bmu_{s, j}(3)^\top \left(\bmu_{s, j} \bQ_{s,j}(1)\right)}_{C_j} +\xi
\end{align*}
Then
\begin{align*}
    \left(\bw(1)^\top \bx \Phi - \bw(2)^\top \bx \Phi\right)|_{\by=\be_1} = \begin{cases}
        -2, \mbox{ if } \sum_{j=1}^6\bbI(\bQ_{s, j}(1) = \be_2) = 6, \\
        -1, \mbox{ if } \sum_{j=1}^6\bbI(\bQ_{s, j}(1) = \be_2) = 5 \mbox{ and } \mbox{ if } \sum_{j=1}^6\bbI(\bQ_{s, j}(1) = \be_3) = 1, \\
        0, \mbox{ if } \left(\sum_{j=1}^6\bbI(\bQ_{s, j}(1) = \be_2)=5 \mbox{ and } \sum_{j=1}^6\bbI(\bQ_{s, j}(1) = \be_1)=1\right) \mbox{ or },  \\
        \quad   \left(\sum_{j=1}^6\bbI(\bQ_{s, j}(1) = \be_2)=4 \mbox{ and } \sum_{j=1}^6\bbI(\bQ_{s, j}(1) = \be_3)=2\right). \\
        \geq  1 \mbox{ otherwise}.
    \end{cases}.
\end{align*}
Then by Lemma~\ref{lemma:failure_probability}, we have
\begin{align*}
    \cA_{\mbox{ood}}(f_{\wse}) \in \begin{cases}
        [0, \epsilon], \mbox{ with probability } 2((p/3)^6 + 6 \cdot (p/3)^6) = 14p^6/729, \\
        [\frac{1}{2} - \epsilon, \frac{1}{2} + \epsilon], \mbox{ with probability } \\
         2( 6 \cdot (p/3)^5 \cdot (1-2p/3) + 6C2 \cdot (p/3)^6) = 2 p^6/243 + 4 p^5/81\\
        [1-\epsilon, 1], \mbox{ with probability }  1-20p^6/729 - 4 p^5/81. 
    \end{cases}
\end{align*}
Then the overall expected OOD accuracy $\cA_{\mbox{ood}}(f_{\wse})$ is in \\
$[1 - 2 p^5/81 - 17 p^6/729 - \varepsilon, 1 - 2 p^5/81 - 17 p^6/729 + \varepsilon]$. \\

The accuracy of the model ensemble and model averaging are the same in Example~\ref{exp:illustrative} since
\begin{align*}
        & \left(\bar \bw(k) + \tilde \bw(k)\right)^\top \bx (\bar \Phi + \tilde \Phi)|_{\by=\be_1} 
        = & \sum_{i=1}^4 \bmu_{v, i}(k)^\top \left(\bmu_{v, i} \bQ_{v, i}(1)\right)  + \sum_{j=1}^6 \bmu_{s, j}(k)^\top \left(\bmu_{s, j} \bQ_{s,j}(1)\right) +\xi 
\end{align*}
and 
\begin{align*}
        & \bar \bw(k)^\top(\bx\bar \Phi) + \tilde \bw(k)^\top(\bx\tilde \Phi) |_{\by=\be_1} 
        = & \sum_{i=1}^4 \bmu_{v, i}(k)^\top \left(\bmu_{v, i} \bQ_{v, i}(1)\right)  + \sum_{j=1}^6 \bmu_{s, j}(k)^\top \left(\bmu_{s, j} \bQ_{s,j}(1)\right) +\xi.
\end{align*}
\end{proof}

\subsection{Proof of Proposition~\ref{prop:general_results_new}}\label{pf:prop2}
Before starting the proof process, we restate Proposition~\ref{prop:general_results_new} and Definition~\ref{defi:data_generation} for $K$ ($K \ge 3$) class situation as follows:
\begin{defi}[Data Generation Process]
The whole data generation process are as follows:
\vspace{-2mm}
\begin{align}
\small
    & \boldsymbol{y} \sim \text{Unif} \left\{ \be_1, \be_2, \dots, \be_K \right\}, \bx = \mbox{Concat} \Big(\{\bx_{v, i}\}_{i=1}^{d_v} \cup \{\bx_{s, j}\}_{j=1}^{d_s} \Big), \nonumber \\
    &\mathbb{P}_{\theta} (\boldsymbol{x}_{v, i} \mid \boldsymbol{y}) = \mathcal{N} \left( {\boldsymbol{\mu}}_{v,i}\bQ_{v, i}\by, \sigma^2 \boldsymbol{I}_d \right), \mathbb{P}_{\theta} (\boldsymbol{x}_{s, j} \mid \boldsymbol{y}) = \mathcal{N} \left( {\boldsymbol{\mu}}_{s,j}\bQ_{s, j}\by, \sigma^2 \boldsymbol{I}_d \right), \forall i, j.
\end{align}
where $\bQ_{v, i}, \bQ_{s, j} \in \{0, 1\}^{K \times K}$. Further, $\bQ_{v, i}=\bI_3 = [\be_1, \be_2, \dots, \be_K]$ always hold.  In the ID distribution $\cD_{\mbox{id}}$,  $\bQ_{s, j}=\bI_K$; and in OOD $\cD_{\mbox{ood}}$, the $k$th column of $\bQ$, i.e., $\bQ_{s, j}(k)$, is as follows for $k=1,2,\dots, K$: 
$$\bQ_{s, j}(k) = \begin{cases}
    \be_k,  \mbox{ with probability } 1-p\\ 
    \mbox{Unif} \{\be_1, \be_2, \dots, \be_K\}, \mbox{ with probability } p.
\end{cases}$$ 
\end{defi}

\begin{prop}[General Results for OSE]
Consider Definition~\ref{defi:data_generation}-\ref{defi:model_ensemble}, Assumption \ref{ass:small_noise}-\ref{ass:ortho_feature} hold,  and infinite ID and OOD samples.  Omitting small constants involving $\epsilon$, we have
\begin{align*}
& \cA_{\mbox{ood}}(\bar f)~=~F_p\left(\frac{(1-p)\bar n_s + \bar n_v }{\sqrt{\bar n_s}}\right),\\
& \cA_{\mbox{ood}}(\tilde f) = F_p\left(\frac{(1-p)\tilde n_s + \tilde n_v }{\sqrt{\tilde n_s}}\right),\\
& \cA_{\mbox{ood}}(f_{\ose})~=~F_p\left(\frac{(1 - p) (\tilde{n}_s + \bar{n}_s) + (\tilde{n}_v + \bar{n}_v)}{\sqrt{\tilde{n}_s + \bar{n}_s + 2 n_{so}}}\right).
\end{align*}
\end{prop}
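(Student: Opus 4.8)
The plan is to reduce the accuracy to a combinatorial analysis of the ``votes'' cast by the selected spurious features, exploiting the permutation symmetry of the three (resp.\ $K$) classes. First I would fix the reference class to $\by=\be_1$: since the label is uniform and the classes enter Definition~\ref{defi:data_generation} symmetrically, the conditional OOD accuracy is identical for each class, so $\cA_{\ood}$ equals the probability that class $1$ receives the largest logit on a class-$1$ sample. Applying Lemma~\ref{lemma:classifier} gives the ID-optimal classifiers $\bar\bw(k)=\sum_{\bx_{v,i}\in\bar\cV}\bmu_{v,i}(k)+\sum_{\bx_{s,j}\in\bar\cS}\bmu_{s,j}(k)$ and likewise for $\tilde\bw$. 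Substituting $\bx\bar\Phi|_{\by=\be_1}$ and invoking the orthonormality of Assumption~\ref{ass:ortho_feature}, every cross term vanishes: each invariant feature contributes exactly $1$ to the true-class logit and $0$ elsewhere (as $\bQ_{v,i}(1)=\be_1$), while each spurious feature $j$ contributes $1$ to the logit of the unique class $k$ with $\bQ_{s,j}(1)=\be_k$. Thus, up to the noise that Assumption~\ref{ass:small_noise} with Lemma~\ref{lemma:failure_probability} confines to an $\epsilon$-fraction, the logit vector is governed by the vote counts $N_k=|\{\text{selected spurious }j:\bQ_{s,j}(1)=\be_k\}|$, and a model is correct iff $n_v+N_1>N_k$ for all $k\neq 1$.

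Second, I would assemble the OSE logits by summing the two individual logit vectors. The crucial bookkeeping is that a feature in $\bar\cS\cap\tilde\cS$ appears in both summands with the \emph{same} realization of $\bQ_{s,j}(1)$, so the $n_{so}$ overlapping features act as weight-$2$ votes while the other $\bar n_s+\tilde n_s-2n_{so}$ features act as weight-$1$ votes. Writing the margin against a wrong class $k$ as $M_k=(\bar n_v+\tilde n_v)+\sum_j w_j D_j^{(k)}$, with $D_j^{(k)}=\bbI(\bQ_{s,j}(1)=\be_1)-\bbI(\bQ_{s,j}(1)=\be_k)$ and $w_j\in\{1,2\}$, a direct moment computation under Definition~\ref{defi:data_generation} gives $\bbE[M_k]=(1-p)(\bar n_s+\tilde n_s)+(\bar n_v+\tilde n_v)$, the claimed numerator; and since weight-$2$ votes contribute four times the variance, $\Var(M_k)=(\bar n_s+\tilde n_s+2n_{so})\,\Var(D)$, producing the effective count $\bar n_s+\tilde n_s+2n_{so}$ under the square root. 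Setting $\tilde n_s=\tilde n_v=n_{so}=0$ recovers the two single-model formulas.

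Third, I would show that the whole accuracy depends on the parameters only through the single normalized margin $x$. The key observation is that the margin vector $(M_2,\dots,M_K)$ has, by class symmetry, equal marginal variances and an equicorrelation equal to $\frac{a-(1-p)^2}{a+b-(1-p)^2}$ with $a=1-p+p/K$ and $b=p/K$, which depends only on $p$ (and $K$), never on the feature counts or on $n_{so}$. Hence after standardizing, the margin vector follows a fixed equicorrelated law, and $\cA_{\ood}=\bbP(M_k>0\ \forall k\neq 1)$ varies across the three models only through the standardized mean. Defining $F_p$ to be precisely this orthant probability viewed as a function of the scalar $\frac{(1-p)n_s^{\mathrm{eff}}+n_v}{\sqrt{n_s^{\mathrm{eff}}}}$ (absorbing the $p$-dependent per-vote deviation $\sqrt{\Var(D)}$) makes $F_p$ a well-defined increasing function, and plugging in the three $(\text{mean},\text{eff.\ count})$ pairs from Step two yields the three displayed identities.

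I expect the main obstacle to be Step three: rigorously collapsing the multi-class ``beat every wrong class simultaneously'' event to a single scalar argument. This needs the variance-and-equicorrelation invariance (so that only the standardized mean changes between $\bar f$, $\tilde f$, and $f_{\ose}$) together with a clean Gaussian/CLT control — or an exact multinomial identity — guaranteeing that $F_p$ is well defined and monotone. Secondary care is required for the overlap bookkeeping that distinguishes the $2n_{so}$ appearing here from the $14n_{so}$ of the WSE analysis, and for discharging the Gaussian noise uniformly over samples via Assumption~\ref{ass:small_noise} and Lemma~\ref{lemma:failure_probability}.
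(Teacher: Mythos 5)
Your proposal is correct and follows essentially the same route as the paper's proof: reduce the logits to spurious-feature ``vote counts'' via Lemma~\ref{lemma:classifier} and Assumption~\ref{ass:ortho_feature}, track the overlapped features as weight-$2$ votes (giving the $+2n_{so}$ in the variance and the unchanged mean), and pass to a multivariate Gaussian approximation of the margin vector whose equicorrelation depends only on $p$ and $K$, so that the orthant probability $F_p$ is a function of the standardized mean alone. Your moment computations (mean $(1-p)(\bar n_s+\tilde n_s)+(\bar n_v+\tilde n_v)$, per-vote variance $\frac{p(K+2-pK)}{K}$, covariance $\frac{p(K+1-pK)}{K}$) match the paper's exactly, and the paper likewise treats the CLT step as an approximation rather than a fully rigorous limit.
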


In the proof process, here we take a notation first:
\begin{equation*}
    \boldsymbol{L}(t_1, \dots, t_{K-1}) = \mathbb{P}_{\bz \sim \mathcal{N}(\boldsymbol{0}, \sigma^2 \boldsymbol{I}_{K-1})} \left( \boldsymbol{a}_i^T \bz + t_i > 0, \forall i = 1, \dots, K-1 \right)
\end{equation*}
in which 
\begin{equation*}
    \parallel \boldsymbol{a}_i \parallel_2^2 = 2, \quad \boldsymbol{a}_i^T \boldsymbol{a}_j = 1,
\end{equation*}
for any $i \ne j$.

Consider the extracted features in both models as 
\begin{equation*}
    \{ \bx_{v, \bar{i}} \}_{\bar{i} = 1}^{\bar{n}_v - n_{vo}} \cup \{ \bx_{s, \bar{j}} \}_{\bar{j} =1}^{\bar{n}_s - n_{so}} \cup \{ \bx_{v, \tilde{i}} \}_{\tilde{i} = 1}^{\tilde{n}_v - n_{vo}} \cup \{ \bx_{s, \tilde{i}} \}_{\tilde{i}=1}^{\tilde{n}_s - n_{so}} \cup \{ \bx_{v,i} \}_{i=1}^{n_{vo}} \cup \{ \bx_{s, i} \}_{i=1}^{n_{so}},
\end{equation*}
in which $n_{vo}, n_{so}$ are the numbers of overlapped invariant features and spurious features respectively.

Then for each single model, we have
\begin{equation*}
\begin{aligned}
& \bx \bar{\Phi} = \sum_{\bar{i} = 1}^{\bar{n}_v - n_{vo}} \bx_{v, \bar{i}} +  \sum_{\bar{j} =1}^{\bar{n}_s - n_{so}} \bx_{s, \bar{j}} + \sum_{i=1}^{n_{vo}} \bx_{v,i} + \sum_{i=1}^{n_{so}} \bx_{s, i},\\
& \bar{\bw}(k) = \frac{\sum_{\bar{i} = 1}^{\bar{n}_v - n_{vo}} \bmu_{v, \bar{i}}(k)  + \sum_{\bar{j} =1}^{\bar{n}_s - n_{so}} \bmu_{s, \bar{j}}(k) +  \sum_{i=1}^{n_{vo}} \bmu_{v,i}(k) + \sum_{i=1}^{n_{so}} \bmu_{s, i}(k)}{\sqrt{\bar{n}_v + \bar{n}_s }}, \quad k = 1, \dots, K\\
& \bx \tilde{\Phi} = \sum_{\tilde{i} = 1}^{\tilde{n}_v - n_{vo}} \bx_{v, \tilde{i}} +  \sum_{\tilde{i}=1}^{\tilde{n}_s - n_{so}} \bx_{s, \tilde{i}} + \sum_{i=1}^{n_{vo}} \bx_{v,i} + \sum_{i=1}^{n_{so}} \bx_{s, i},\\
& \tilde{\bw}(k) = \frac{ \sum_{\tilde{i} = 1}^{\tilde{n}_v - n_{vo}} \bmu_{v, \tilde{i}}(k) + 
\sum_{\tilde{i}=1}^{\tilde{n}_s - n_{so}} \bmu_{s, \tilde{i}}(k) +  \sum_{i=1}^{n_{vo}} \bmu_{v,i}(k) +  \sum_{i=1}^{n_{so}} \bmu_{s, i}(k)}{\sqrt{\tilde{n}_v + \tilde{n}_s }}, \quad k = 1, \dots, K.
\end{aligned}
\end{equation*}
Then we can analysis the forecasting accuracy for both averaging model and ensemble model respectively. 

\subsubsection{Proof for Single Model}
Considering the extracted features in Algorithm $1$ as 
\begin{equation*}
    \{ \bx_{v, \bar{i}} \}_{\bar{i} = 1}^{\bar{n}_v} \cup \{ \bx_{s, \bar{j}} \}_{\bar{j} =1}^{\bar{n}_s},
\end{equation*}
for convenience, we denote
\begin{equation*}
    \bar{\bx}:= \bx \bar{\Phi} = \sum_{\bar{i} = 1}^{\bar{n}_v} \bx_{v, \bar{i}} + \sum_{\bar{j} = 1}^{\bar{n}_s} \bx_{s, \bar{j}},
\end{equation*}
then according to Lemma \ref{lemma:classifier}, we can obtain the estimated classifier on label $\be_k$:
\begin{equation*}
    \bar{\bw}(k) = \sum_{\bar{i} = 1}^{\bar{n}_v} \frac{1}{\sqrt{\bar{n}_v + \bar{n}_s}} \bmu_{v, \bar{i}} + \sum_{\bar{j} = 1}^{\bar{n}_s} \frac{1}{\sqrt{\bar{n}_v + \bar{n}_s}} \bmu_{s, \bar{j}}.
\end{equation*}
Based on this classifier, the forecasting accuracy on ID case is
\begin{equation*}
\begin{aligned}
\mathbb{P}(\hat{y} = y) &= \frac{1}{K} \sum_{k=1}^K \mathbb{E}_{\bx \mid y = e_k} \{ \boldsymbol{1}(\bar{\bx} ^\top \bar{\bw}(k)  > \bar{\bx} ^\top \bar{\bw}(k'), \forall k^{\prime} \neq k )\}\\
&= \frac{1}{K} \sum_{k=1}^K \mathbb{P}_{\boldsymbol{z} \sim \mathcal{N}(\boldsymbol{0}, (\bar{n}_v + \bar{n}_s)\sigma^2 \boldsymbol{I}_d)} \left(  (\bar{\boldsymbol{w}}(k) - \bar{\boldsymbol{w}}(k'))^T \boldsymbol{z} + (\bar{\boldsymbol{w}}(k) - \bar{\boldsymbol{w}}(k'))^T \mathbb{E}(\bar{\bx} \mid \by = \be_k) > 0, \forall k^{\prime} \ne k \right)\\
&= \frac{1}{K} \sum_{k=1}^K \mathbb{P}_{\boldsymbol{z} \sim \mathcal{N}(\boldsymbol{0}, \sigma^2 \boldsymbol{I}_d)} \left(  (\bar{\boldsymbol{w}}(k) - \bar{\boldsymbol{w}}(k'))^T \boldsymbol{z} + \bar{\delta}_{k, k^{\prime}} > 0, \forall k^{\prime} \ne k \right),
\end{aligned}
\end{equation*}
in which we denote that
\begin{equation*}
\bar{\delta}_{k,k^{\prime}} = \frac{1}{\bar{n}_v + \bar{n}_s} \left( \sum_{\bar{i} = 1}^{\bar{n}_v} 1 + \sum_{\bar{j} = 1}^{\bar{n}_s} 1 \right) = 1,
\end{equation*}
for any $k^{\prime} \ne k$. And considering Assumption \ref{ass:ortho_feature}, we have
\begin{equation*}
    (\bar{\boldsymbol{w}}(k) - \bar{\boldsymbol{w}}(k'))^T (\bar{\boldsymbol{w}}(k) - \bar{\boldsymbol{w}}(k'')) = 1, \quad \parallel \bar{\boldsymbol{w}}(k) - \bar{\boldsymbol{w}}(k') \parallel_2^2 = 2,
\end{equation*}
for any $k \ne k^{\prime} \ne k^{\prime \prime}$. Then with Lemma \ref{lem_gaussian_multi}, the IID forecasting accuracy can be expressed as
\begin{equation*}
 \mathbb{P}(\by = \hat{\by}) = \boldsymbol{L}(1,\dots,1),
\end{equation*}
which can not be influenced by $\bar{n}_v, \bar{n}_{s}$.

Then we turn to the OOD forecasting accuracy. For class $k$, we suppose there are $r_k$ spurious features maintaining their parameters, and $r_{k \to k^{\prime}}$ refer to the number of spurious features flipping to the class $k^{\prime}$, the corresponding probability is
\begin{equation*}
    \mathbb{P} ([r_k, [r_{k \to k^{\prime}}, \forall k^{\prime} \ne k ]]) = \frac{\bar{n}_s!}{r_k! \Pi_{k^{\prime} \ne k} r_{k \to k^{\prime}}!} (1 - p + \frac{p}{K})^{r_k} (\frac{K-1}{K}p)^{\bar{n}_s - r_k},
\end{equation*}
and the conditional OOD forecasting accuracy on label $\be_k$ is
\begin{equation*}
\begin{aligned}
   & \quad \mathbb{P} (\hat{\by} = \be_k \mid [r_k, [r_{k \to k^{\prime}}, \forall k^{\prime} \ne k ]], \by = \be_k)\\
   &= \mathbb{E}_{\bar{\bx} \mid [r_k, [ r_{k \to k^{\prime}}, \forall k^{\prime} \ne k ]], \by = \be_k} \left\{ \boldsymbol{1}(\bar{\bx}^T \bar{\bw}(k) > \bx^T \bar{\bw}(k'), \forall k^{\prime} \ne k) \right\}\\
   &= \mathbb{P}_{\boldsymbol{z} \sim \mathcal{N}(\boldsymbol{0}, ( \bar{n}_v + \bar{n}_s)\sigma^2 \boldsymbol{I}_d)} \left( (\bar{\boldsymbol{w}}(k) - \bar{\boldsymbol{w}}(k'))^T \boldsymbol{z} + \frac{\bar{n}_v + r_k - r_{k \to k^{\prime}}}{\sqrt{\bar{n}_v + \bar{n}_s}}, \forall k^{\prime} \ne k \right)\\
   &= \boldsymbol{L}( \frac{\bar{n}_v + r_k - r_{k \to k^{\prime}}}{\bar{n}_v + \bar{n}_s}, \forall k^{\prime} \ne k),
\end{aligned}
\end{equation*}
according to this, the OOD forecasting accuracy can be expressed as
\begin{equation*}
\begin{aligned}
\mathbb{P}(\hat{\by} = \by ) &= \mathbb{E}_{\by} [\mathbb{P}(\hat{\by} = \by \mid \by)] = \mathbb{P} (\hat{\by} = \be_1 \mid \by = \be_1)\\
&= \sum_{r_1, r_{1 \to k^{\prime}}, \forall k^{\prime} \ge 2} \mathbb{P} ( [r_1, [ r_{1 \to k^{\prime}}, \forall k^{\prime} \ne 1]] ) \boldsymbol{L}( \frac{\bar{n}_v + r_1 - r_{1 \to k^{\prime}}}{\bar{n}_v + \bar{n}_s}, \forall k^{\prime} \ge 1).
\end{aligned}
\end{equation*}
with Lemma \ref{lemma:failure_probability}, we can get related properties about $\boldsymbol{L}(\cdot)$, then take upper and lower bounds respectively. 

Considering the close form of $G(\cdot)$ in \eqref{eqn:G}, we denote $n_v = \bar{n}_v, n_s = \bar{n}_s, n_{vo} = n_{so} = 0, C = 0$, then the OOD forecasting accuracy can be lower bounded as
\begin{equation*}
\begin{aligned}
 \mathbb{P} (\hat{\by} = \by) &\ge \mathbb{P}(\mathcal{A}) (1 - \epsilon) + \sum_{N=1}^{K-1} \mathbb{P}(\mathcal{C}(N)) (h(N) - \epsilon )\\
 &\ge \mathbb{P}(\mathcal{A}) + \sum_{N=1}^{K-1} \mathbb{P}(\mathcal{C}(N)) h(N) - \epsilon = \boldsymbol{G}(\bar{n}_v, \bar{n}_s, 0, 0, 0) - \epsilon,
\end{aligned}
\end{equation*}
and on the other hand, it can also be upper bounded by
\begin{equation*}
\mathbb{P} (\hat{\by} = \by) \le \mathbb{P}(\mathcal{A})  + \sum_{N=1}^{K-1} \mathbb{P}(\mathcal{C}(N)) h(N) + \epsilon \mathbb{P}(\mathcal{B}) \le \boldsymbol{G}(\bar{n}_v, \bar{n}_s, 0, 0, 0) + \epsilon,
\end{equation*}
Similar with  Algorithm $1$, we can also get the ID and OOD forecasting accuracy in Algorithm $2$, which is related to another $\tilde{n}_v$ invariant features and $\tilde{n}_s$ spurious features.

For the OOD forecasting accuracy, we'd like to take some intuitive approximation for $G(n_v, n_s, 0,0,0)$. As the number $n_v, n_s$ are large enough, we can take approximation by multivariate Gaussian distribution. To be specific, we denote $\br = [r_1, r_{1 \to 2}, \dots, r_{1 \to K}]$, then can regard them as $\br \sim \mathcal{N}(\boldsymbol{\gamma}, \boldsymbol{\Sigma})$, in which
\begin{equation*}
\begin{aligned}
& \boldsymbol{\gamma} = [n_s (1 - p + p / K), n_s p / K, \dots, n_s p / K]^T,\\
& \boldsymbol{\Sigma}_{i,i} =  \frac{\gamma_i(n_s - \gamma_i)}{n_s}, \quad \boldsymbol{\Sigma}_{i,j} = \frac{- \gamma_i \gamma_j}{n_s}.
\end{aligned}
\end{equation*}
If we denote a new $(K - 1) \times K$ matrix as
\begin{equation*}
\boldsymbol{T} = 
\begin{pmatrix}
1 & -1 & 0 & \dots & 0\\
1 & 0 & -1 & \dots & 0\\
\vdots & \ddots & \ddots & \cdots & 0\\
1 & 0 & 0 & \dots & -1
\end{pmatrix}
\end{equation*}
And the new $(K-1)$-dim random variable, i.e,
\begin{equation*}
   \boldsymbol{\eta} \doteq \boldsymbol{T}^T \br + n_v \boldsymbol{1}
\end{equation*}
is still Gaussian, to be specific, if we denote its distribution as $\boldsymbol{\eta} \sim \mathcal{N}(\boldsymbol{\alpha}, \boldsymbol{M})$, then we have
\begin{equation*}
\begin{aligned}
& \boldsymbol{\alpha} = \left(n_s (1 - p) + n_v \right) \boldsymbol{1},\\
& \boldsymbol{M}_{i,i} = n_s \frac{p(K + 2 - pK)}{K}, \\
& \boldsymbol{M}_{i,j} = n_s \frac{p(K + 1 - pK)}{K},
\end{aligned}
\end{equation*}
$\boldsymbol{G}(n_v, n_s, 0, 0, 0)$ can be approximated as 
\begin{equation*}
    \mathbb{P} (\boldsymbol{\eta}_1 > 0, \dots, \boldsymbol{\eta}_{K-1} > 0),
\end{equation*}
which is equal to $F_p \left((n_s (1 - p) + n_v ) / \sqrt{n_s} \right)$, and $F_p(\cdot)$ is defined in Appendix~\ref{form:fp}.

\subsubsection{Proof for Weight Space Ensemble}

For averaging model, we denote
\begin{equation*}
   \hat{\bx} := \frac{1}{2} \bx (\bar{\Phi} + \tilde{\Phi}) = \frac{1}{2} \sum_{\bar{i} = 1}^{\bar{n}_v - n_{vo}} \bx_{v, \bar{i}} + 
\frac{1}{2} \sum_{\bar{j} =1}^{\bar{n}_s - n_{so}} \bx_{s, \bar{j}} + \frac{1}{2} \sum_{\tilde{i} = 1}^{\tilde{n}_v - n_{vo}} \bx_{v, \tilde{i}} + 
\frac{1}{2} \sum_{\tilde{i}=1}^{\tilde{n}_s - n_{so}} \bx_{s, \tilde{i}} + \sum_{i=1}^{n_{vo}} \bx_{v,i} + \sum_{i=1}^{n_{so}} \bx_{s, i},
\end{equation*}

then after scaling, the averaging classifier on label $\be_k$:
\begin{align*}
& \quad \hat{\bw} :=  \frac{1}{2} (\bar{\bw} + \tilde{\bw})\\
&= \frac{\sum_{\bar{i} = 1}^{\bar{n}_v - n_{vo}} \bmu_{v, \bar{i}}(k) + 
 \sum_{\bar{j} =1}^{\bar{n}_s - n_{so}} \bmu_{s, \bar{j}}(k) +  \sum_{\tilde{i} = 1}^{\tilde{n}_v - n_{vo}} \bmu_{v, \tilde{i}}(k) + 
\sum_{\tilde{i}=1}^{\tilde{n}_s - n_{so}} \bmu_{s, \tilde{i}}(k) + 2 \sum_{i=1}^{n_{vo}} \bmu_{v,i}(k) + 2 \sum_{i=1}^{n_{so}} \bmu_{s, i}(k)}{\sqrt{\bar{n}_v + \bar{n}_s + \tilde{n}_v + \tilde{n}_s + 2 n_{vo} + 2 n_{so}}}.    
\end{align*}

Based on this classifier, if we denote $\hat{n} = (\bar{n}_v + \bar{n}_s + \tilde{n}_v + \tilde{n}_s + 2 n_{vo} + 2 n_{so}) / 4$, the forecasting accuracy on ID case is
\begin{equation*}
\begin{aligned}
\mathbb{P}(\hat{y} = y) &= \frac{1}{K} \sum_{k=1}^K \mathbb{E}_{\bx \mid y = e_k} \{ \boldsymbol{1}(\hat{\bx} ^\top \hat{\bw}(k)  > \hat{\bx} ^\top \hat{\bw}(k')), \forall k^{\prime} \neq k )\}\\
&= \frac{1}{K} \sum_{k=1}^K \mathbb{P}_{\boldsymbol{z} \sim \mathcal{N}(\boldsymbol{0}, 
\hat{n}\sigma^2 \boldsymbol{I}_d)} \left(  (\hat{\boldsymbol{w}}(k) - \hat{\boldsymbol{w}}(k'))^T \boldsymbol{z} + (\hat{\boldsymbol{w}}(k) - \hat{\boldsymbol{w}}(k'))^T \mathbb{E}(\hat{\bx} \mid \by = \be_k) > 0, \forall k^{\prime} \ne k \right)\\
&= \frac{1}{K} \sum_{k=1}^K \mathbb{P}_{\boldsymbol{z} \sim \mathcal{N}(\boldsymbol{0}, \sigma^2 \boldsymbol{I}_d)} \left(  (\hat{\boldsymbol{w}}(k) - \hat{\boldsymbol{w}}(k'))^T \boldsymbol{z} + \hat{\delta}_{k, k^{\prime}} > 0, \forall k^{\prime} \ne k \right),
\end{aligned}
\end{equation*}
in which we denote that
\begin{equation*}
\hat{\delta}_{k,k^{\prime}} = \frac{\sum_{\bar{i} = 1}^{\bar{n}_v - n_{vo}} 1 + \sum_{\bar{j} = 1}^{\bar{n}_s - n_{so}} 1 + \sum_{\tilde{i} = 1}^{\tilde{n}_v - n_{vo}} 1 + \sum_{\tilde{j} = 1}^{\tilde{n}_s - n_{so}} 1 + \sum_{i=1}^{n_{vo}} 4 + \sum_{i=1}^{n_{so}} 4}{\bar{n}_v + \bar{n}_s + \tilde{n}_v + \tilde{n}_s + 2 n_{vo} + 2 n_{so}} = 1,
\end{equation*}
for any $k^{\prime} \ne k$. And considering Assumption \ref{ass:ortho_feature}, we have
\begin{equation*}
    (\hat{\boldsymbol{w}}(k) - \hat{\boldsymbol{w}}(k'))^T (\hat{\boldsymbol{w}}(k) - \hat{\boldsymbol{w}}(k')) = 1, \quad \parallel \hat{\boldsymbol{w}}(k) - \hat{\boldsymbol{w}}(k') \parallel_2^2 = 2,
\end{equation*}
for any $k \ne k^{\prime} \ne k^{\prime \prime}$. Then with Lemma \ref{lem_gaussian_multi}, the IID forecasting accuracy can be expressed as
\begin{equation*}
 \mathbb{P}(\by = {\by}_{\wse}) = \boldsymbol{L}(1,\dots,1) \in [1 - \epsilon, 1],
\end{equation*}
which can not be influenced by $\bar{n}_v, \bar{n}_{s}, \tilde{n}_v, \tilde{n}_s, n_{vo}, n_{so}$.

Then we turn to the OOD forecasting accuracy and for each $k = 1, \dots, K$, we take some notations as follows:
\begin{equation}\label{eqn:oodnote}
\begin{aligned}
& \bar{r}_k = |\{ \bbI(\bmu_{s, i}(k) =  \bmu_{s, i}(k))\}_{i=1}^{\bar n_s} - n_{so}|\\
& \tilde{r}_k =|\{ \bbI( \bmu_{s, i}(k) =  \bmu_{s, i}(k))\}_{i=1}^{\tilde n_s - n_{so}}|\\
& r^o_k =|\{ \bbI( \bmu_{s, i}(k) = \bmu_{s, i}(k))\}_{i=1}^{n_{so}}|\\
& \bar{r}_{k \to k'} = |\{ \bbI( \bmu_{s, i}(k) =  \bmu_{s, i}(k'))\}_{i=1}^{\bar n_s} - n_{so}|\\
& \tilde{r}_{k \to k'} =|\{ \bbI( \bmu_{s, i}(k) =  \bmu_{s, i}(k'))\}_{i=1}^{\tilde n_s - n_{so}}|\\
& r^o_{k \to k'} =|\{ \bbI( \bmu_{s, i}(k) =  \bmu_{s, i}(k'))\}_{i=1}^{n_{so}}|.
\end{aligned}
\end{equation}
To be specific, for class $k$, we suppose there are $\bar{r}_k, \tilde{r}_k$ spurious features (no overlapped) maintaining their parameters, related to Algorithm $1,2$, and correspondingly, $\bar{r}_{k \to k^{\prime}}, \tilde{r}_{k \to k^{\prime}}$ refer to the number of spurious features flipping to the class $k^{\prime}$, and $r^o_k, r^o_{k \to k^{\prime}}$ are defined similar in overlapped spurious features. Then denoting $R_k(r) = [ \bar{r}_k, \tilde{r}_k, r^o_k, [ \bar{r}_{k \to k^{\prime}},\tilde{r}_{k \to k^{\prime}}, r^o_{k \to k^{\prime}}, \forall k^{\prime} \ne k ] ]$, we obtain the corresponding probability as
\begin{equation*}
    \mathbb{P} (R_k(r)) = \frac{(\bar{n}_s + \tilde{n}_s - 2 n_{so} )! n_{so}!}{(\bar{r}_k + \tilde{r}_k)! r^o_k! \Pi_{k^{\prime} \ne k} (\bar{r}_{k \to k^{\prime}} +\tilde{r}_{k \to k^{\prime}} )! r^o_{k \to k^{\prime}}!} (1 - p + \frac{p}{K})^{\bar{r}_k + \tilde{r}_k + r^o_k} (\frac{K-1}{K}p)^{\bar{n}_s + \tilde{n}_s - n_{so} - \bar{r}_k - \tilde{r}_k - r^o_k},
\end{equation*}
and the conditional OOD forecasting accuracy on label $\be_k$ is
\begin{equation*}
\begin{aligned}
   & \quad \mathbb{P} (\hat{\by} = \be_k \mid R_k(r), \by = \be_k)\\
   &= \mathbb{E}_{\hat{\bx} \mid R_k(r), \by = \be_k} \left\{ \boldsymbol{1}(\hat{\bx}^T \hat{\bw}(k) > \hat{\bx}^T \hat{\bw}(k'), \forall k^{\prime} \ne k) \right\}\\
   &= \mathbb{P}_{\boldsymbol{z} \sim \mathcal{N}(\boldsymbol{0}, \hat{n} \sigma^2 \boldsymbol{I}_d)} \left( (\hat{\bw}(k) - \hat{\bw}(k'))^T \boldsymbol{z} + \frac{\bar{n}_v + \tilde{n}_v + 2 n_{vo} + \bar{r}_k + \tilde{r}_k + 4 r^o_k - \bar{r}_{k \to k^{\prime}} - \tilde{r}_{k \to k^{\prime}} - 4 r^o_{k \to k^{\prime}} }{\sqrt{\bar{n}_v + \bar{n}_s + \tilde{n}_v + \tilde{n}_s + 2 n_{vo} + 2 n_{so}}}, \forall k^{\prime} \ne k \right)\\
   &= \boldsymbol{L}(\frac{\bar{n}_v + \tilde{n}_v + 2 n_{vo} + \bar{r}_k + \tilde{r}_k + 4 r^o_k - \bar{r}_{k \to k^{\prime}} - \tilde{r}_{k \to k^{\prime}} - 4 r^o_{k \to k^{\prime}} }{\bar{n}_v + \bar{n}_s + \tilde{n}_v + \tilde{n}_s + 2 n_{vo} + 2 n_{so}}, \forall k^{\prime} \ne k),
\end{aligned}
\end{equation*}
according to this, the OOD forecasting accuracy can be expressed as
\begin{equation*}
\begin{aligned}
\mathbb{P}(\hat{\by} = \by ) &= \mathbb{E}_{\by} [\mathbb{P}(\hat{\by} = \by \mid \by)] = \mathbb{P} (\hat{\by} = \be_1 \mid \by = \be_1)\\
&= \sum_{\hat{r}_1} \mathbb{P} (R_1(r)) \boldsymbol{L}(\frac{\bar{n}_v + \tilde{n}_v + 2 n_{vo} + \bar{r}_1 + \tilde{r}_1 + 4 r^o_1 - \bar{r}_{1 \to k^{\prime}} - \tilde{r}_{1 \to k^{\prime}} - 4 r^o_{1 \to k^{\prime}} }{\bar{n}_v + \bar{n}_s + \tilde{n}_v + \tilde{n}_s + 2 n_{vo} + 2 n_{so}}, \forall k^{\prime} \ne k).
\end{aligned}
\end{equation*}
with Lemma \ref{lemma:failure_probability}, we can get related properties about $\boldsymbol{L}(\cdot)$, then take upper and lower bounds respectively. 
Still recalling the expression in \eqref{eqn:G} with $n_v = \bar{n}_v + \tilde{n}_v, n_s = \bar{n}_s + \tilde{n}_s, n_{vo} = n_{vo}, n_{so} = n_{so}, C = 4$, we can obtain the lower bound for OOD forecasting accuracy as
\begin{equation*}
\begin{aligned}
 \mathbb{P} (\hat{\by} = \by) &\ge \mathbb{P}(\mathcal{A}) (1 - \epsilon) + \sum_{N=1}^{K-1} \mathbb{P}(\mathcal{C}(N)) (h(N) - \epsilon )\\
 &\ge \mathbb{P}(\mathcal{A}) + \sum_{N=1}^{K-1} \mathbb{P}(\mathcal{C}(N)) h( N) - \epsilon = \boldsymbol{G} (\bar{n}_v + \tilde{n}_v, \bar{n}_s + \tilde{n}_s, n_{vo}, n_{so}, 4) - \epsilon,
\end{aligned}
\end{equation*}
and on the other hand, it can be upper bounded by
\begin{equation*}
\mathbb{P} (\hat{\by} = \by) \le \mathbb{P}(\mathcal{A})  + \sum_{N=1}^{K-1} \mathbb{P}(\mathcal{C}(N)) h( N) + \epsilon \mathbb{P}(\mathcal{B}) \le \boldsymbol{G}(\bar{n}_v + \tilde{n}_v, \bar{n}_s + \tilde{n}_s, n_{vo}, n_{so}, 4) + \epsilon,
\end{equation*}
Similar to the analysis before, for ID forecasting accuracy, we have
\begin{equation*}
\mathcal{J}_{id} = 0 \le 3 \epsilon,
\end{equation*}
and for OOD forecasting accuracy, we can draw a conclusion that 
\begin{equation*}
    \mathcal{J}_{ood} \ge \boldsymbol{G}(\bar{n}_v + \tilde{n}_v, \bar{n}_s + \tilde{n}_s, n_{vo}, n_{so}, 4) - \max \{ \boldsymbol{G}(\bar{n}_v, \bar{n}_s, 0, 0, 0), \boldsymbol{G}(\tilde{n}_v, \tilde{n}_s, 0, 0, 0) \} - 3 \epsilon.
\end{equation*}

Similar to the analysis above, we'd like to take some intuitive approximation for OOD forecasting accuracy in the ensemble model. As the number $\bar{n}_v, \bar{n}_s, \tilde{n}_v, \tilde{n}_s, n_{vo}, n_{so}$ are large enough, we can take approximation by multivariate Gaussian distribution. To be specific, we denote $\bar{\br} = [\bar{r}_1, \bar{r}_{1 \to 2}, \dots, \bar{r}_{1 \to K}]$, $\tilde{\br} = [\tilde{r}_1, \tilde{r}_{1 \to 2}, \dots, \tilde{r}_{1 \to K}]$ and $\br^o = [r^o_1, r^o_{1 \to 2}, \dots, r^o_{1 \to K}]$, then can regard them as $\bar{\br} \sim \mathcal{N}(\bar{\boldsymbol{\gamma}}, \bar{\boldsymbol{\Sigma}})$, $\tilde{\br} \sim \mathcal{N}(\tilde{\boldsymbol{\gamma}}, \tilde{\boldsymbol{\Sigma}})$ and $\br^o \sim \mathcal{N}(\boldsymbol{\gamma}^o, \boldsymbol{\Sigma}^o)$ (they are independent), in which
\begin{equation*}
\begin{aligned}
& \bar{\boldsymbol{\gamma}} = [(\bar{n}_s - n_{so}) (1 - p + p / K), (\bar{n}_s - n_{so}) p / K, \dots, (\bar{n}_s - n_{so}) p / K]^T,\\
& \bar{\boldsymbol{\Sigma}}_{i,i} =  \frac{\bar{\gamma}_i(\bar{n}_s - n_{so} - \bar{\gamma}_i)}{\bar{n}_s - n_{so}}, \quad \bar{\boldsymbol{\Sigma}}_{i,j} = \frac{- \bar{\gamma}_i \bar{\gamma}_j}{\bar{n}_s - n_{so}},\\
& \tilde{\boldsymbol{\gamma}} = [(\tilde{n}_s - n_{so}) (1 - p + p / K), (\tilde{n}_s - n_{so}) p / K, \dots, (\tilde{n}_s - n_{so}) p / K]^T,\\
& \tilde{\boldsymbol{\Sigma}}_{i,i} =  \frac{\tilde{\gamma}_i(\tilde{n}_s - n_{so} - \tilde{\gamma}_i)}{\tilde{n}_s - n_{so}}, \quad \tilde{\boldsymbol{\Sigma}}_{i,j} = \frac{- \tilde{\gamma}_i \tilde{\gamma}_j}{\tilde{n}_s - n_{so}},\\
& \boldsymbol{\gamma}^o = [n_{so} (1 - p + p / K), n_{so} p / K, \dots, n_{so} p / K]^T,\\
& \boldsymbol{\Sigma}^o_{i,i} =  \frac{\gamma^o_i(n_{so} - \gamma^o_i)}{n_{so}}, \quad \boldsymbol{\Sigma}^o_{i,j} = \frac{- \gamma^o_i \gamma^o_j}{ n_{so}}.
\end{aligned}
\end{equation*}
If we denote a new $(K - 1) \times K$ matrix as
\begin{equation*}
\boldsymbol{T} = 
\begin{pmatrix}
1 & -1 & 0 & \dots & 0\\
1 & 0 & -1 & \dots & 0\\
\vdots & \ddots & \ddots & \cdots & 0\\
1 & 0 & 0 & \dots & -1
\end{pmatrix}
\end{equation*}
And the new $(K-1)$-dim random variable, i.e,
\begin{equation*}
   \boldsymbol{\eta} \doteq (\boldsymbol{T}, \boldsymbol{T}, \boldsymbol{4T}) (\bar{\br}, \tilde{\br}, \br^o)^T + (\bar{n}_v + \tilde{n}_v + 2 n_{vo}) \boldsymbol{1}
\end{equation*}
is still Gaussian, to be specific, if we denote its distribution as $\boldsymbol{\eta} \sim \mathcal{N}(\boldsymbol{\alpha}, \boldsymbol{M})$, then we have
\begin{equation*}
\begin{aligned}
& \boldsymbol{\alpha} = \left((\bar{n}_s + \tilde{n}_s +  2 n_{so})(1 - p) + \bar{n}_v + \tilde{n}_v + 2 n_{vo}\right) \boldsymbol{1},\\
& \boldsymbol{M}_{i,i} = (\bar{n}_s + \tilde{n}_s + 14 n_{so})\frac{p(K + 2 - pK)}{K}, \\
& \boldsymbol{M}_{i,j} = (\bar{n}_s + \tilde{n}_s + 14 n_{so})\frac{p(K + 1 - pK)}{K},
\end{aligned}
\end{equation*}
$\boldsymbol{G}(\bar{n}_v + \tilde{n}_v, \bar{n}_s + \tilde{n}_s, n_{vo}, n_{so}, 4)$ can be approximated as 
\begin{equation*}
    \mathbb{P} (\boldsymbol{\eta}_1 > 0, \dots, \boldsymbol{\eta}_{K-1} > 0),
\end{equation*}
which is equal to $F_p \left(((\bar{n}_s + \tilde{n}_s + 2 n_{so} )(1 - p) + \bar{n}_v + \tilde{n}_v + 2 n_{vo} ) / \sqrt{\bar{n}_s + \tilde{n}_s + 14 n_{so}} \right)$, and $F_p(\cdot)$ is defined in Appendix~\ref{form:fp}.

\subsubsection{Proof for Output Space Ensemble}

For ensemble model, we also denote 
\small{
\begin{equation*}
    \hat{\bw}(k) = \frac{\sum_{\bar{i} = 1}^{\bar{n}_v - n_{vo}} \bmu_{v, \bar{i}}(k) + 
 \sum_{\bar{j} =1}^{\bar{n}_s - n_{so}} \bmu_{s, \bar{j}}(k) +  \sum_{\tilde{i} = 1}^{\tilde{n}_v - n_{vo}} \bmu_{v, \tilde{i}}(k) + 
\sum_{\tilde{i}=1}^{\tilde{n}_s - n_{so}} \bmu_{s, \tilde{i}}(k) + 2 \sum_{i=1}^{n_{vo}} \bmu_{v,i}(k) + 2 \sum_{i=1}^{n_{so}} \bmu_{s, i}(k)}{\sqrt{\bar{n}_v + \bar{n}_s + \tilde{n}_v + \tilde{n}_s + 2 n_{vo} + 2 n_{so}}},
\end{equation*}
}
then the forecasting accuracy on IID case is
\begin{equation*}
\begin{aligned}
\mathbb{P}(\hat{y} = y) &= \frac{1}{K} \sum_{k=1}^K \mathbb{E}_{\bx \mid y = e_k} \{ \boldsymbol{1}(\bx \bar{\Phi}^\top \bar{\bw}(k) + \bx \tilde{\Phi}^T\tilde{\bw}(k)  > \bx \bar{\Phi}^\top \bar{\bw}(k') + \bx \tilde{\Phi}^T\tilde{\bw}(k'), \forall k^{\prime} \neq k )\}\\
&= \frac{1}{K} \sum_{k=1}^K \mathbb{P}_{\boldsymbol{z} \sim \mathcal{N}(\boldsymbol{0}, \sigma^2 \boldsymbol{I}_d)} \left(  (\hat{\boldsymbol{w}}(k) - \hat{\boldsymbol{w}}(k'))^T \boldsymbol{z} + \hat{\delta}_{k, k^{\prime}} > 0, \forall k^{\prime} \ne k \right),
\end{aligned}
\end{equation*}
in which 
\begin{equation*}
\begin{aligned}
\hat{\delta}_{k,k^{\prime}} &= \frac{\sum_{\bar{i} = 1}^{\bar{n}_v - n_{vo}} 1 + \sum_{\bar{j} = 1}^{\bar{n}_s - n_{so}} 1 + \sum_{\tilde{i} = 1}^{\tilde{n}_v - n_{vo}} 1 + \sum_{\tilde{j} = 1}^{\tilde{n}_s - n_{so}} 1 + \sum_{i=1}^{n_{vo}} 2 + \sum_{i=1}^{n_{so}} 2}{\sqrt{\bar{n}_v + \bar{n}_s + \tilde{n}_v + \tilde{n}_s + 2 n_{vo} + 2 n_{so}} \sqrt{\bar{n}_v + \bar{n}_s + \tilde{n}_v + \tilde{n}_s - n_{vo} - n_{so}}}\\
&= \frac{\bar{n}_v + \bar{n}_s + \tilde{n}_v + \tilde{n}_s}{\sqrt{\bar{n}_v + \bar{n}_s + \tilde{n}_v + \tilde{n}_s + 2 n_{vo} + 2 n_{so}} \sqrt{\bar{n}_v + \bar{n}_s + \tilde{n}_v + \tilde{n}_s - n_{vo} - n_{so}}} \doteq s < 1,
\end{aligned}
\end{equation*}
while $n_{vo} + n_{so} < (\bar{n}_v + \bar{n}_s + \tilde{n}_v + \tilde{n}_s) / 2$,
for any $k^{\prime} \ne k$. And considering Assumption \ref{ass:ortho_feature}, we have
\begin{equation*}
    (\hat{\boldsymbol{w}}(k) - \hat{\boldsymbol{w}}(k'))^T (\hat{\boldsymbol{w}}(k) - \hat{\boldsymbol{w}}(k'')) = 1, \quad \parallel \hat{\boldsymbol{w}}(k) - \hat{\boldsymbol{w}}(k') \parallel_2^2 = 2,
\end{equation*}
for any $k \ne k^{\prime} \ne k^{\prime \prime}$. Then with Lemma \ref{lem_gaussian_multi}, the IID forecasting accuracy can be expressed as
\begin{equation*}
 \mathbb{P}(\by = \hat{\by}) = \boldsymbol{L}(s, \dots, s) \ge 1 - \epsilon,
\end{equation*}
which can be influenced by $\bar{n}_v, \bar{n}_{s}, \tilde{n}_v, \tilde{n}_s, n_{vo}, n_{so}$.

Then we turn to the OOD forecasting accuracy. Similar to the notation in \eqref{eqn:oodnote}, for class $k$, we suppose:
\begin{equation*}
\begin{aligned}
& \bar{r}_k = |\{ \bbI(\bmu_{s, i}(k) =  \bmu_{s, i}(k))\}_{i=1}^{\bar n_s} - n_{so}|\\
& \tilde{r}_k =|\{ \bbI(\bmu_{s, i}(k) =  \bmu_{s, i}(k))\}_{i=1}^{\tilde n_s - n_{so}}|\\
& r^o_k =|\{ \bbI(\bmu_{s, i}(k) =  \bmu_{s, i}(k))\}_{i=1}^{n_{so}}|\\
& \bar{r}_{k \to k'} = |\{ \bbI(\bmu_{s, i}(k) =  \bmu_{s, i}(k'))\}_{i=1}^{\bar n_s} - n_{so}|\\
& \tilde{r}_{k \to k'} =|\{ \bbI(\bmu_{s, i}(k) =  \bmu_{s, i}(k'))\}_{i=1}^{\tilde n_s - n_{so}}|\\
& r^o_{k \to k'} =|\{ \bbI(\bmu_{s, i}(k) =  \bmu_{s, i}(k'))\}_{i=1}^{n_{so}}|.
\end{aligned}
\end{equation*}
Then denoting $R_k(r) := [ \bar{r}_k, \tilde{r}_k,r^o_k, [ \bar{r}_{k \to k^{\prime}}, \tilde{r}_{k \to k^{\prime}},r^o_{k \to k^{\prime}}, \forall k^{\prime} \ne k ] ]$, we have the corresponding probability is
\begin{equation*}
    \mathbb{P} (R_k(r)) = \frac{(\bar{n}_s + \tilde{n}_s - 2 n_{so} )! n_{so}!}{(\bar{r}_k + \tilde{r}_k)! r^o_k! \Pi_{k^{\prime} \ne k} (\bar{r}_{k \to k^{\prime}} +\tilde{r}_{k \to k^{\prime}} )! r^o_{k \to k^{\prime}}!} (1 - p + \frac{p}{K})^{\bar{r}_k + \tilde{r}_k + r^o_k} (\frac{K-1}{K}p)^{\bar{n}_s + \tilde{n}_s - n_{so} - \bar{r}_k - \tilde{r}_k - r^o_k},
\end{equation*}
and the conditional OOD forecasting accuracy on label $\be_k$ is
\begin{equation*}
\begin{aligned}
   & \quad \mathbb{P} (\hat{\by} = \be_k \mid R_k(r), \by = \be_k)\\
   &= \mathbb{E}_{\bx \hat{\Phi} \mid R_k(r), \by = \be_k} \left\{ \boldsymbol{1}(\bx \bar{\Phi}^\top \bar{\bw}(k) + \bx \tilde{\Phi}^T\tilde{\bw}(k)  > \bx \bar{\Phi}^\top \bar{\bw}(k') + \bx \tilde{\Phi}^T\tilde{\bw}(k'), \forall k^{\prime} \ne k) \right\}\\
   &= \boldsymbol{L}(\frac{\bar{n}_v + \tilde{n}_v + \bar{r}_k + \tilde{r}_k + 2 r^o_k - \bar{r}_{k \to k^{\prime}} - \tilde{r}_{k \to k^{\prime}} - 2 r^o_{k \to k^{\prime}} }{\sqrt{\bar{n}_v + \bar{n}_s + \tilde{n}_v + \tilde{n}_s + 2 n_{vo} + 2 n_{so}} \sqrt{\bar{n}_v + \bar{n}_s + \tilde{n}_v + \tilde{n}_s - n_{vo} - n_{so}}}, \forall k^{\prime} \ne k).
\end{aligned}
\end{equation*}
According to this, the OOD forecasting accuracy can be expressed as
\begin{equation*}
\begin{aligned}
\mathbb{P}(\hat{\by} = \by ) &= \mathbb{E}_{\by} [\mathbb{P}(\hat{\by} = \by \mid \by)] = \mathbb{P} (\hat{\by} = \be_1 \mid \by = \be_1)\\
&= \sum_{R_1(k)} \mathbb{P} (R_1(k)) \boldsymbol{L}(\frac{\bar{n}_v + \tilde{n}_v + \bar{r}_1 + \tilde{r}_1 + 2 r^o_1 - \bar{r}_{1 \to k^{\prime}} - \tilde{r}_{1 \to k^{\prime}} - 2 r^o_{1 \to k^{\prime}} }{\sqrt{\bar{n}_v + \bar{n}_s + \tilde{n}_v + \tilde{n}_s + 2 n_{vo} + 2 n_{so}} \sqrt{\bar{n}_v + \bar{n}_s + \tilde{n}_v + \tilde{n}_s - n_{vo} - n_{so}}}, \forall k^{\prime} \ne 1).
\end{aligned}
\end{equation*}
with Lemma \ref{lemma:failure_probability}, we can get related properties about $\boldsymbol{L}(\cdot)$, then take upper and lower bounds respectively. 

To be specific, recalling the expression in \eqref{eqn:G} with $n_v = \bar{n}_v + \tilde{n}_v, n_s = \bar{n}_s + \tilde{n}_s, n_{vo} = n_{vo}, n_{so} = n_{so}, C = 2$, we can lower bound the OOD forecasting accuracy as
\begin{equation*}
\begin{aligned}
 \mathbb{P} (\hat{\by} = \by) &\ge \mathbb{P}(\mathcal{A}) (1 - \epsilon) + \sum_{N=1}^{K-1} \mathbb{P}(\mathcal{C}(N)) (h(N) - \epsilon )\\
 & \ge \mathbb{P}(\mathcal{A}) + \sum_{N=1}^{K-1} \mathbb{P}(\mathcal{C}(N)) h(N) - \epsilon = \boldsymbol{G}(\bar{n}_v + \tilde{n}_v, \bar{n}_s + \tilde{n}_s, n_{vo}, n_{so}, 2) - \epsilon,
\end{aligned}
\end{equation*}
and on the other hand, it can be upper bounded by
\begin{equation*}
\mathbb{P} (\hat{\by} = \by) \le \mathbb{P}(\mathcal{A}) + \sum_{N=1}^{K-1} \mathbb{P}(\mathcal{C}(N)) h(N)  + \epsilon \mathbb{P}(\mathcal{B})  \le G(\bar n_v + \tilde n_v, \bar n_s + \tilde n_s, n_{vo}, n_{so}, 2) + \epsilon,
\end{equation*}
Similar to the analysis before, for ID forecasting accuracy, we have
\begin{equation*}
\mathcal{J}_{id} = 0 \le 3 \epsilon,
\end{equation*}
and for OOD forecasting accuracy, we can draw a conclusion that 
\begin{equation*}
    \mathcal{J}_{ood} \ge \boldsymbol{G}(\bar{n}_v + \tilde{n}_v, \bar{n}_s + \tilde{n}_s, n_{vo}, n_{so}, 2) - \max \{ \boldsymbol{G}(\bar{n}_v, \bar{n}_s, 0, 0, 0), \boldsymbol{G}(\tilde{n}_v, \tilde{n}_s, 0, 0, 0) \} - 3 \epsilon.
\end{equation*}

Similar to the analysis above, we'd like to take some intuitive approximation for OOD forecasting accuracy in the ensemble model. As the number $\bar{n}_v, \bar{n}_s, \tilde{n}_v, \tilde{n}_s, n_{vo}, n_{so}$ are large enough, we can take approximation by multivariate Gaussian distribution. To be specific, we denote $\bar{\br} = [\bar{r}_1, \bar{r}_{1 \to 2}, \dots, \bar{r}_{1 \to K}]$, $\tilde{\br} = [\tilde{r}_1, \tilde{r}_{1 \to 2}, \dots, \tilde{r}_{1 \to K}]$ and $\br^o = [r^o_1, r^o_{1 \to 2}, \dots, r^o_{1 \to K}]$, then can regard them as $\bar{\br} \sim \mathcal{N}(\bar{\boldsymbol{\gamma}}, \bar{\boldsymbol{\Sigma}})$, $\tilde{\br} \sim \mathcal{N}(\tilde{\boldsymbol{\gamma}}, \tilde{\boldsymbol{\Sigma}})$ and $\br^o \sim \mathcal{N}(\boldsymbol{\gamma}^o, \boldsymbol{\Sigma}^o)$ (they are independent), in which
\begin{equation*}
\begin{aligned}
& \bar{\boldsymbol{\gamma}} = [(\bar{n}_s - n_{so}) (1 - p + p / K), (\bar{n}_s - n_{so}) p / K, \dots, (\bar{n}_s - n_{so}) p / K]^T,\\
& \bar{\boldsymbol{\Sigma}}_{i,i} =  \frac{\bar{\gamma}_i(\bar{n}_s - n_{so} - \bar{\gamma}_i)}{\bar{n}_s - n_{so}}, \quad \bar{\boldsymbol{\Sigma}}_{i,j} = \frac{- \bar{\gamma}_i \bar{\gamma}_j}{\bar{n}_s - n_{so}},\\
& \tilde{\boldsymbol{\gamma}} = [(\tilde{n}_s - n_{so}) (1 - p + p / K), (\tilde{n}_s - n_{so}) p / K, \dots, (\tilde{n}_s - n_{so}) p / K]^T,\\
& \tilde{\boldsymbol{\Sigma}}_{i,i} =  \frac{\tilde{\gamma}_i(\tilde{n}_s - n_{so} - \tilde{\gamma}_i)}{\tilde{n}_s - n_{so}}, \quad \tilde{\boldsymbol{\Sigma}}_{i,j} = \frac{- \tilde{\gamma}_i \tilde{\gamma}_j}{\tilde{n}_s - n_{so}},\\
& \boldsymbol{\gamma}^o = [n_{so} (1 - p + p / K), n_{so} p / K, \dots, n_{so} p / K]^T,\\
& \boldsymbol{\Sigma}^o_{i,i} =  \frac{\gamma^o_i(n_{so} - \gamma^o_i)}{n_{so}}, \quad \boldsymbol{\Sigma}^o_{i,j} = \frac{- \gamma^o_i \gamma^o_j}{ n_{so}}.
\end{aligned}
\end{equation*}
If we denote a new $(K - 1) \times K$ matrix as
\begin{equation*}
\boldsymbol{T} = 
\begin{pmatrix}
1 & -1 & 0 & \dots & 0\\
1 & 0 & -1 & \dots & 0\\
\vdots & \ddots & \ddots & \cdots & 0\\
1 & 0 & 0 & \dots & -1
\end{pmatrix}
\end{equation*}
And the new $(K-1)$-dim random variable, i.e,
\begin{equation*}
   \boldsymbol{\eta} \doteq (\boldsymbol{T}, \boldsymbol{T}, \boldsymbol{4T}) (\bar{\br}, \tilde{\br}, \br^o)^T + (\bar{n}_v + \tilde{n}_v + 2 n_{vo}) \boldsymbol{1}
\end{equation*}
is still Gaussian, to be specific, if we denote its distribution as $\boldsymbol{\eta} \sim \mathcal{N}(\boldsymbol{\alpha}, \boldsymbol{M})$, then we have
\begin{equation*}
\begin{aligned}
& \boldsymbol{\alpha} = \left((\bar{n}_s + \tilde{n}_s )(1 - p) + \bar{n}_v + \tilde{n}_v \right) \boldsymbol{1},\\
& \boldsymbol{M}_{i,i} = (\bar{n}_s + \tilde{n}_s + 2 n_{so})\frac{p(K + 2 - pK)}{K}, \\
& \boldsymbol{M}_{i,j} = (\bar{n}_s + \tilde{n}_s + 2 n_{so})\frac{p(K + 1 - pK)}{K},
\end{aligned}
\end{equation*}
the OOD forecasting accuracy can be approximated as
\begin{equation*}
    \mathbb{P}(\eta_1 > 0, \dots, \eta_{K-1} > 0),
\end{equation*}
which is equal to $F_p(((\bar{n}_s + \tilde{n}_s )(1 - p) + \bar{n}_v + \tilde{n}_v ) / \sqrt{\bar{n}_s + \tilde{n}_s + 2 n_{so}})$, and $F_p(\cdot)$ is defined in Appendix~\ref{form:fp}.

\subsubsection{Case Study for $K = 3$}
To interpret the improvements on OOD accuracy of model average and model ensemble, here we set $K = 3$, and further take an insight on the representation function $G(\cdot)$.

Recalling the results calculated above, the OOD accuracy for single models can be approximated as 
\begin{equation*}
    G(\bar{n}_v, \bar{n}_s, 0,0,0), \quad G(\tilde{n}_v, \tilde{n}_s, 0,0,0),
\end{equation*}
and for average model and ensemble model, we could focus on
\begin{equation*}
    G(\bar{n}_v + \tilde{n}_v, \bar{n}_s + \tilde{n}_s, n_{vo}, n_{so}, 4), \quad G(\bar{n}_v + \tilde{n}_v, \bar{n}_s + \tilde{n}_s, n_{vo}, n_{so}, 2).
\end{equation*}
To take specific calculations, we denote the random vector as $[r_1, r_1^o, r_{1 \to 2}, r_{1 \to 3}, r_{1 \to 2}^o, r_{1 \to 3}^o]$, and approximate them on probabilities related to the two-dimensional Gaussian random vector $\boldsymbol{\eta} \sim \mathcal{N}(0, H)$, in which the covariance matrix $H$ has components as
\begin{equation*}
    H_{ii} = \frac{p(5 - 3p)}{3}, \quad H_{ij} = \frac{p(4 - 3p)}{3}.
\end{equation*}
Then we could obtain
\begin{equation*}
\begin{aligned}
& G(\bar{n}_v , \bar{n}_s , 0, 0, 0) = \mathbb{P}(\eta_1 \ge - \frac{(1 - p)\bar{n}_s + \bar{n}_v}{\sqrt{\bar{n}_v}}, \eta_2 \ge - \frac{(1 - p)\bar{n}_s + \bar{n}_v}{\sqrt{\bar{n}_v}}),\\
& G(\tilde{n}_v , \tilde{n}_s , 0, 0, 0) = \mathbb{P}(\eta_1 \ge - \frac{(1 - p)\tilde{n}_s + \tilde{n}_v}{\sqrt{\tilde{n}_v}}, \eta_2 \ge - \frac{(1 - p)\tilde{n}_s + \tilde{n}_v}{\sqrt{\tilde{n}_v}}),\\
& G(\bar{n}_v + \tilde{n}_v, \bar{n}_s + \tilde{n}_s, n_{vo}, n_{so}, 4)\\
& \quad \quad \quad \quad = \mathbb{P}(\eta_1 \ge - \frac{(1  - p)(\bar{n}_s + \tilde{n}_s + 2 n_{so}) + \tilde{n}_v + \bar{n}_v + 2 n_{vo}}{\sqrt{\bar{n}_s + \tilde{n}_s + 14 n_{so}}}, \eta_2 \ge - \frac{(1  - p)(\bar{n}_s + \tilde{n}_s + 2 n_{so}) + \tilde{n}_v + \bar{n}_v + 2 n_{vo}}{\sqrt{\bar{n}_s + \tilde{n}_s + 14 n_{so}}}),\\
& G(\bar{n}_v + \tilde{n}_v, \bar{n}_s + \tilde{n}_s, n_{vo}, n_{so}, 2)\\
& \quad \quad \quad \quad = \mathbb{P}(\eta_1 \ge - \frac{(1  - p)(\bar{n}_s + \tilde{n}_s ) + \tilde{n}_v + \bar{n}_v }{\sqrt{\bar{n}_s + \tilde{n}_s + 2 n_{so}}}, \eta_2 \ge - \frac{(1  - p)(\bar{n}_s + \tilde{n}_s ) + \tilde{n}_v + \bar{n}_v }{\sqrt{\bar{n}_s + \tilde{n}_s }}).
\end{aligned}
\end{equation*}
Here we denote a new function 
\begin{equation*}
    F(x) := \mathbb{P}(\eta_1 \ge -x, \eta_2 \ge -x),
\end{equation*}
which implies that $F$ is monotonically increasing with respect to $x$. And it shows that average model and ensemble model could obtain higher OOD accuracy compared with single models due to
\begin{equation*}
\begin{aligned}
& \frac{(1  - p)(\bar{n}_s + \tilde{n}_s + 2 n_{so}) + \tilde{n}_v + \bar{n}_v + 2 n_{vo}}{\sqrt{\bar{n}_s + \tilde{n}_s + 14 n_{so}}} \ge \max \{\frac{(1 - p)\bar{n}_s + \bar{n}_v}{\sqrt{\bar{n}_v}}, \frac{(1 - p)\tilde{n}_s + \tilde{n}_v}{\sqrt{\tilde{n}_v}}\},\\
& \frac{(1  - p)(\bar{n}_s + \tilde{n}_s ) + \tilde{n}_v + \bar{n}_v }{\sqrt{\bar{n}_s + \tilde{n}_s + 2 n_{so}}} \ge \max \{\frac{(1 - p)\bar{n}_s + \bar{n}_v}{\sqrt{\bar{n}_v}}, \frac{(1 - p)\tilde{n}_s + \tilde{n}_v}{\sqrt{\tilde{n}_v}}\}.
\end{aligned}
\end{equation*}

\subsubsection{Close Form of $F_p(\cdot)$}\label{form:fp}
Here we provide the explicit expression of function $F_p(x)$ in $K$ class situation, which is monotonically increasing with $x$.

We denote a $K-1$-dim random variable $\boldsymbol{\eta} \sim \mathcal{N}(\bx, \boldsymbol{M})$, in which 
\begin{align*}
& \boldsymbol{M}_{i,i} = \frac{p(K + 2 - pK)}{K}, \boldsymbol{M}_{i,j} = \frac{p(K + 1 - pK)}{K},
\end{align*}
then $F_p(x)$ is defined as
\begin{equation*}
    F_p(x) = \mathbb{P} (\boldsymbol{\eta}_1 > 0, \dots, \boldsymbol{\eta}_{K-1} > 0).
\end{equation*}

\subsubsection{Close Form of $G(n_v, n_s, n_{vo}, n_{so}, C)$}

First, denoting a random vector $R_k(r) := [ r_k, r^o_k, [ r_{k \to k^{\prime}}, r^o_{k \to k^{\prime}}, \forall k^{\prime} \ne k ] ] \in \mathbb{R}^{2K}$, we have the corresponding probability as
\begin{equation*}
    \mathbb{P} (R_k(r)) = \frac{(n_s - 2 n_{so} )! n_{so}!}{r_k! r^o_k! \Pi_{k^{\prime} \ne k} r_{k \to k^{\prime}}! r^o_{k \to k^{\prime}}!} (1 - p + \frac{p}{K})^{r_k + r^o_k} (\frac{K-1}{K}p)^{n_s - n_{so} - r_k - r^o_k},
\end{equation*}
then we can define several sets:
\begin{align}
\label{eqn:set_A}
\mathcal{A} : = \{R_k(r): r_1 + C r^o_1 - r_{1 \to k^{\prime}}  - C r^o_{1 \to k^{\prime}} + n_v  > 0, \forall k' = 2, \dots, K \},
\end{align}
\begin{align}
\label{eqn:set_B}
\mathcal{B} : = \{ R_k(r): \min_{k' = 2, \dots, K} r_1 + C r^o_1 - r_{1 \to k^{\prime}} - C r^o_{1 \to k^{\prime}} + n_v  < 0 \},
\end{align}
\begin{align}
\label{eqn:set_C}
& \mathcal{C}(N) : = \{R_k(r): \min_{k' = 2, \dots, K} r_1 + C r^o_1 - r_{1 \to k^{\prime}} - C r^o_{1 \to k^{\prime}} + n_v  = 0, \nonumber \\
&\quad \quad \quad \text{the minimum can be achieved by N values}\},
\end{align}
and related functions as
\begin{equation*}
    h(N) = \mathbb{P}_{\bz \sim \mathcal{N}(0, \sigma^2 \boldsymbol{I}_N)} \left( \boldsymbol{a}_i^T \bz > 0, \forall i = 1, \dots, N \right)
\end{equation*}
in which $\boldsymbol{a}_i^T \boldsymbol{a}_j = 1$ and $\parallel \boldsymbol{a}_i \parallel_2^2 = 1$ for any $i \ne j$.

$G(n_s, n_v, n_{so},  n_{vo}, C)$ is the probability defined as following:
\begin{align}
\label{eqn:G}
    G(n_s, n_v, n_{so},  n_{vo}, C) = \mathbb{P}(\mathcal{A})  + \sum_{N=1}^{K-1} \mathbb{P}(\mathcal{C}(N)) h(N) 
\end{align}
where set $\cA$ and $\cC(N)$ are two sets of $R_k(r)$ defined in Equation \eqref{eqn:set_A} and \eqref{eqn:set_C}, respectively. 
Note that $\mathbb{P} (R_k(r))$ and the set $\cA$ and $\cC(N)$ all depend on $n_s, n_v, n_{so}, n_{vo}$ and $C$. 

\subsection{Proof of Proposition~\ref{prop:imbalanced_averaging}}
\label{app:proof_of_imbalance}
By the proof in Proposition~\ref{prop:simple_case_improvment} we have
\begin{align*}
\bar \bw(k) & = \sum_{i=1}^2 \bmu_{v, i}(k) + \sum_{j=1}^3 \bmu_{s, j}(k).\\
\tilde \bw(k) & = \sum_{i=3}^4 \bmu_{v, i}(k) + \sum_{j=4}^6 \bmu_{s, j}(k).
\end{align*} 

Then we consider the averaged mode about the value of $\hat \bw^T \bx \hat \Phi^T$, where $\hat \bw = \frac{\bar \bw + \lambda \tilde \bw}{1 + \lambda}$ and $\hat \Phi = \frac{\bar \Phi + \lambda \tilde \Phi}{1 + \lambda}$\\

We first have:
\begin{align*}
    \hat \bw (k) &=  \frac{1}{1 + \lambda} (\sum_{i = 1,2}{\bmu_{v,i}(k)} + \lambda \sum_{i = 3,4}{\bmu_{v,i}(k)} + \sum_{j = 1,2,3}{\bmu_{s,j}(k)} + \sum_{j = 4,5,6}{\bmu_{s,j}(k)} \ )\\
    \bx \hat \Phi^T|_{y = e_1} &= \frac{1}{1+\lambda} (x \bar \Phi^T + \lambda x \tilde \Phi^T) \\
    &= \ \frac{1}{1 + \lambda}(\sum_{i=1,2}{\bmu_{v,i} Q_{v,i}(1)} + \lambda \sum_{i=3,4}{\bmu_{v,i} Q_{v,i}(1)}  \\ 
    & \quad \quad +\sum_{j = 1,2,3}{\bmu_{s,j} Q_{s,j}(1)} + \lambda \sum_{j = 4,5,6}{\bmu_{s,j} Q_{s,j}(1)} + (\sum_{i=1}^{5}{z_i} + \lambda \sum_{i=6}^{10}{z_i} ))\\
\hat \bw(k)^T \bx \hat \Phi^T|_{y = e_1} &= 
\frac{1}{(1 + \lambda)^2}(\sum_{i = 1,2}{\bmu_{v,i}(k)^T \bmu_{v,i}Q_{v,i}(1)} + \lambda^2 \sum_{i = 3,4}{\bmu_{v,i}(k)^T \bmu_{v,i}Q_{v,i}(1)} \\
&\quad \quad + \sum_{j = 1,2,3}{\bmu_{s,j}(k)^T \bmu_{s,j} Q_{s,j}(1)} + \lambda^2 \sum_{j = 4,5,6}{\bmu_{s,j}(k)^T \bmu_{s,j} Q_{s,j}(1)})
\end{align*}

Then for class $k = 1$, we have
$$\hat \bw(1)^T \bx \hat \Phi^T|_{y = e_1} = 
\frac{1}{(1 + \lambda)^2}(\underbrace{2 + 2\lambda^2}_{>12} +
\sum_{j = 1,2,3}{\bmu_{s,j}(1)^T \bmu_{s,j} Q_{s,j}(1)} + \underbrace{\lambda^2}_{>5} \sum_{j = 4,5,6}{\bmu_{s,j}(1)^T \bmu_{s,j} Q_{s,j}(1)})$$
For the other two classes, we have
$$\hat \bw(2)^T \bx \hat \Phi^T|_{y = e_1} = 
\frac{1}{(1 + \lambda)^2}(
\sum_{j = 1,2,3}{\bmu_{s,j}(2)^T \bmu_{s,j} Q_{s,j}(1))} + \underbrace{\lambda^2}_{>5} \sum_{j = 4,5,6}{\bmu_{s,j}(2)^T \bmu_{s,j} Q_{s,j}(1)})$$
$$\hat \bw(3)^T \bx \hat \Phi^T|_{y = e_1} = 
\frac{1}{(1 + \lambda)^2}(
\sum_{j = 1,2,3}{\bmu_{s,j}(3)^T \bmu_{s,j} Q_{s,j}(1)} + \underbrace{\lambda^2}_{>5} \sum_{j = 4,5,6}{\bmu_{s,j}(3)^T \bmu_{s,j} Q_{s,j}(1)})$$
For simplicity of the discussion, we will ignore the constant factor $\frac{1}{(1 + \lambda)^2}$, when $\lambda > \sqrt{5}$, we discuss the value of 
 $\hat \bw(1)^T \bx \hat \Phi^T|_{y = e_1} - \hat \bw(2)^T \bx \hat \Phi^T|_{y = e_1}$ :\\

When $\displaystyle\sum_{j=4,5,6 }\bbI(\bQ_{s, j}(1) = \be_2) = 2$, it suffices to consider comparing: \\
$$
\begin{cases}
    2 + \sum_{j = 1,2,3}{\bmu_{s,j}(1)^T \bmu_{s,j} Q_{s,j}(1)} + \lambda^2 \sum_{j = 4,5,6}{\bmu_{s,j}(1)^T \bmu_{s,j} Q_{s,j}(1)}\\
    \sum_{j = 1,2,3}{\bmu_{s,j}(2)^T \bmu_{s,j} Q_{s,j}(1)}\\
\end{cases}
$$

$\hat \bw(1)^T \bx \hat \Phi^T|_{y = e_1} \leq \hat \bw(2)^T \bx \hat \Phi^T|_{y = e_1}$ only when $\displaystyle\sum_{j=4,5,6 }\bbI(\bQ_{s, j}(1) = \be_1) = 0$, that is, $\displaystyle\sum_{j=4,5,6 }\bbI(\bQ_{s, j}(1) = \be_3) = 1$.\\
The probability of the aforementioned scenario is $3 \cdot (p/3)^3 = p^3/9$.\\
\\
Then
$$
\begin{cases}
\hat \bw(1)^T \bx \hat \Phi^T|_{y = e_1} < \hat \bw(2)^T \bx \hat \Phi^T|_{y = e_1} \mbox{ if } \displaystyle\sum_{j=1,2,3 }\bbI(\bQ_{s, j}(1) = \be_2) = 3\\
\hat \bw(1)^T \bx \hat \Phi^T|_{y = e_1} = \hat \bw(2)^T \bx \hat \Phi^T|_{y = e_1} \mbox{ if } \displaystyle\sum_{j=1,2,3 }\bbI(\bQ_{s, j}(1) = \be_2) = 2 \mbox{ and } \displaystyle\sum_{j=1,2,3 }\bbI(\bQ_{s, j}(1) = \be_3) = 1
\end{cases}
$$
Therefore, the probability of "$<$" is $p^3/9 \cdot (p/3)^3 = p^6 / 243$, "$=$" is $p^3/9 \cdot 3 \cdot (p/3)^3 = p^6 / 81$.\\

When $\sum_{j=4,5,6 }\bbI(\bQ_{s, j}(1) = \be_2) = 3$, it suffices to consider comparing\\
$$
\begin{cases}
2 + \sum_{j = 1,2,3}{\bmu_{s,j}(1)^T \bmu_{s,j} Q_{s,j}(1)}\\
\sum_{j = 1,2,3}{\bmu_{s,j}(2)^T \bmu_{s,j} Q_{s,j}(1)} + \lambda^2
\end{cases}
$$
Trivially we have $\sum_{j = 1,2,3}{\bmu_{s,j}(2)^T \bmu_{s,j} Q_{s,j}(1)} + \lambda^2 > 5 \geq 2 + \sum_{j = 1,2,3}{\bmu_{s,j}(1)^T \bmu_{s,j} Q_{s,j}(1)}$, therefore, "$<$" holds under this case, the probability is $(p/3)^3 = p^3 / 27$.\\

When $\sum_{j=4,5,6 }\bbI(\bQ_{s, j}(1) = \be_2) = 0 / 1$,\\
$\displaystyle\sum_{j = 1,2,3}{\bmu_{s,j}(2)^T \bmu_{s,j} Q_{s,j}(1))} + \lambda^2 \sum_{j = 4,5,6}{\bmu_{s,j}(2)^T \bmu_{s,j} Q_{s,j}(1)}) \leq 3 + \lambda^2 < 2 + 2\lambda^2 + \cdots$
Therefore, $"\leq"$ is impossible to hold in this case.\\

To sum up, for comparing the first class and the second class, the $"<"$ probability is $p^6/243 + p^3 / 27$, the "=" probability is $p^6/81$.\\

Generally, the "$<$" probability is $2p^6/243 + 2p^3 / 27$, the "=" probability is $2p^6/81$, the otherwise probability is $1 - 8p^6/243 - 2p^3/27$. Then the total accuracy is approximately $1/2 \cdot 2p^6/81 + (1 - 8p^6/243 - 2p^3/27) = 1 - 5p^6/243 - 2p^3/27$, that is, the accuracy lies in $[1 - 5p^6/243 - 2p^3/27 - \varepsilon, 1 - 5p^6/243 - 2p^3/27 + \varepsilon]$.

\subsection{Proof of  Proposition~\ref{prop:overlap_case_improvment}}
\label{app:proof_example_1_2}
\begin{proof}
(a)\textbf{Two individual models}. The accuracy of two individual models are the same with Example~(\ref{exp:illustrative}-1) following the same proof. Specifically, so we have
$\cA_{\mbox{ood}}(\bar f) \in [1-5p^3/54 - \epsilon, 1-5p^3/54 +\epsilon]$. $\cA_{\mbox{ood}}(\tilde f) \in  [1-5p^3/27 - \epsilon, 1-5p^3/27 +\epsilon]$

(b) \textbf{Weight space ensemble}. We first solve the $\bar \bw$ and $\tilde \bw$ on the infinite ID samples. Lemma \ref{lemma:classifier} , for $k=1, 2, 3$, we have 
\begin{align*}
    \bar \bw(k) & =  \sum_{i=1}^2 \bmu_{v, i}(k) + \sum_{j=1}^3 \bmu_{s, j}(k), \\
    \tilde \bw(k) & =  \sum_{i=2}^3 \bmu_{v, i}(k) + \sum_{j=3}^5 \bmu_{s, j}(k), \\
\end{align*}
So we have 
\begin{align*}
    \bar \bw(k) + \tilde \bw(k)  =    \sum_{i=1, 3} \bmu_{v, i}(k) + 2 \bmu_{v, 2}(k) + \sum_{j=1,2,4,5} \bmu_{s, j}(k) + 2 \bmu_{s, 3}(k) 
\end{align*}
For samples from the first class, we also have 
$$\bx (\bar \Phi + \tilde \Phi)|_{\by=\be_1} =\sum_{i=1, 3} \bmu_{v, i} \bQ_{v, i}(1) + 2\bmu_{v, 2} \bQ_{v, 2}(1) + \sum_{j=1,2,4,5} \bmu_{s, j} \bQ_{s,j}(1) +  2\bmu_{s, 3} \bQ_{s,3}(1) + \sum_{i=1}^{10} \bz_i$$
where $\bz_i \sim \cN(0, \sigma^2 \boldsymbol{I}_d), \forall i$. We then have
\begin{align*}
    & (\bar \bw(k) + \tilde \bw(k))^\top\bx (\bar \Phi + \tilde \Phi)|_{\by=\be_1} \\
    = & \sum_{i=1, 3} \bmu_{v, i}(k)^\top \left( \bmu_{v, i} \bQ_{v, i}(1)\right) + 4 \bmu_{v, 2}(k)^\top\left(\bmu_{v, 2} \bQ_{v, 2}(1)\right) \\
    & + \sum_{j=1,2,4,5} \bmu_{s, j}(k)^\top \left(\bmu_{s, j} \bQ_{s,j}(1)\right) +  {4\bmu_{s, 3}(k)^\top \left( \bmu_{s, 3} \bQ_{s,3}(1) \right)} + \xi 
\end{align*}
So
\begin{align*}
    & (\bar \bw(1) + \tilde \bw(1))^\top\bx (\bar \Phi + \tilde \Phi)|_{\by=\be_1}\\
    = & \ 6 + \sum_{j=1,2,4,5} \bmu_{s, j}(1)^\top \left(\bmu_{s, j} \bQ_{s,j}(1)\right) +  {4\bmu_{s, 3}(1)^\top \left( \bmu_{s, 3} \bQ_{s,3}(1) \right)} + \xi
\end{align*}
Similarly, we have
\begin{align*}
    & (\bar \bw(2) + \tilde \bw(2))^\top\bx (\bar \Phi + \tilde \Phi)|_{\by=\be_1}  = \sum_{j=1,2,4,5} \bmu_{s, j}(2)^\top \left(\bmu_{s, j} \bQ_{s,j}(1)\right) +  {4\bmu_{s, 3}(2)^\top \left( \bmu_{s, 3} \bQ_{s,3}(1) \right)} + \xi, \\
    & (\bar \bw(3) + \tilde \bw(3))^\top\bx (\bar \Phi + \tilde \Phi)|_{\by=\be_1}  = \sum_{j=1,2,4,5} \bmu_{s, j}(3)^\top \left(\bmu_{s, j} \bQ_{s,j}(1)\right) +  {4\bmu_{s, 3}(3)^\top \left( \bmu_{s, 3} \bQ_{s,3}(1) \right)} + \xi. 
\end{align*}
Then
\begin{align*}
    & \left(\bar \bw(1) + \tilde \bw(1)\right)^\top \bx \left( \bar \Phi + \tilde  \Phi\right)^\top - \left(\bar \bw(2) + \tilde \bw(2)\right)^\top \bx \left( \bar \Phi + \tilde  \Phi\right)^\top \\
    = & \begin{cases}
        -2, \mbox{ if } \sum_{j=1,2,4,5 }\bbI(\bQ_{s, j}(1) = \be_2) = 4, \mbox{ and }  \bbI(\bQ_{s, 3}(1) = \be_2) = 1 \\
        -1, \mbox{ if } \left( \bbI(\bQ_{s, 3}(1) = \be_2) = 1, \sum_{j=1,2,4,5 }\bbI(\bQ_{s, j}(1) = \be_2) = 3, \mbox{ and } \sum_{j=1,2,4,5 }\bbI(\bQ_{s, j}(1) = \be_3) = 1\right),\\
        0, \mbox{ if } \left( \bbI(\bQ_{s, 3}(1) = \be_2) = 1, \sum_{j=1,2,4,5 }\bbI(\bQ_{s, j}(1) = \be_2) = 3, \mbox{ and } \sum_{j=1,2,4,5 }\bbI(\bQ_{s, j}(1) = \be_1) = 1\right) \mbox{ or }\\
        \quad   \left(\bbI(\bQ_{s, 3}(1) = \be_2) = 1, \sum_{j=1,2,4,5 }\bbI(\bQ_{s, j}(1) = \be_2) = 2, \mbox{ and } \sum_{j=1,2,4,5 }\bbI(\bQ_{s, j}(1) = \be_3) = 2 \right). \\
        \geq  1 \mbox{ otherwise}.
    \end{cases},
\end{align*}

Then we can compute the probability respectively,\\
For -2 case, the probability is given by $2 \cdot (p/3)^5 = 2p^5 / 243$\\
For -1 case, the probability is given by $2 \cdot 4 \cdot (p/3)^5 = 8p^5 / 243$ \\
For 0 case, the probability is given by $2 \cdot (4 \cdot (1-2p/3) \cdot (p/3)^4 + 4C2 \cdot (p/3)^5) = 8 p^4/81 - 4 p^5/243$\\
Otherwise, the probability is: $1 - 8 p^4/81 - 2 p^5/81$\\
\\
Then the total accuracy can be computed as approximately $1 - 4 p^4/81 - 8 p^5/243$,\\
the interval is $[1 - 4 p^4/81 - 8 p^5/243 - \varepsilon, 1 - 4 p^4/81 - 8 p^5/243 + \varepsilon]$\\

(c) \textbf{Output Space Ensemble}. Similar to the derivation of model averaging, we have
\begin{align*}
    & \bar \bw(k)^\top\bx\bar \Phi + \tilde \bw(k)^\top\bx\tilde \Phi|_{\by=\be_1} \\
    = & \sum_{i=1, 3} \bmu_{v, i}(k)^\top \left( \bmu_{v, i} \bQ_{v, i}(1)\right) + 2 \bmu_{v, 2}(k)^\top\left(\bmu_{v, 2} \bQ_{v, 2}(1)\right) \\
    & + \sum_{j=1,2,4,5} \bmu_{s, j}(k)^\top \left(\bmu_{s, j} \bQ_{s,j}(1)\right) +  {2\bmu_{s, 3}(k)^\top \left( \bmu_{s, 3} \bQ_{s,3}(1) \right)} + \xi. 
\end{align*}
Then we consider the fist class:
\begin{align*}
    & \bar \bw(1)^\top\bx\bar \Phi + \tilde \bw(1)^\top\bx\tilde \Phi|_{\by=\be_1} \\
    = & \sum_{i=1, 3} \bmu_{v, i}(1)^\top \left( \bmu_{v, i} \bQ_{v, i}(1)\right) + 2 \bmu_{v, 2}(1)^\top\left(\bmu_{v, 2} \bQ_{v, 2}(1)\right) \\
    & + \sum_{j=1,2,4,5} \bmu_{s, j}(1)^\top \left(\bmu_{s, j} \bQ_{s,j}(1)\right) +  {2\bmu_{s, 3}(1)^\top \left( \bmu_{s, 3} \bQ_{s,3}(1) \right)} + \xi. \\
    = & \ 4 + \sum_{j=1,2,4,5} \bmu_{s, j}(1)^\top \left(\bmu_{s, j} \bQ_{s,j}(1)\right) +  {2\bmu_{s, 3}(1)^\top \left( \bmu_{s, 3} \bQ_{s,3}(1) \right)} + \xi.
\end{align*}
Similarly we have,
$$\bar \bw(2)^\top\bx\bar \Phi + \tilde \bw(2)^\top\bx\tilde \Phi|_{\by=\be_1} = \sum_{j=1,2,4,5} \bmu_{s, j}(2)^\top \left(\bmu_{s, j} \bQ_{s,j}(1)\right) +  {2\bmu_{s, 3}(2)^\top \left( \bmu_{s, 3} \bQ_{s,3}(1) \right)} + \xi.$$
$$\bar \bw(3)^\top\bx\bar \Phi + \tilde \bw(3)^\top\bx\tilde \Phi|_{\by=\be_1} = \sum_{j=1,2,4,5} \bmu_{s, j}(3)^\top \left(\bmu_{s, j} \bQ_{s,j}(1)\right) +  {2\bmu_{s, 3}(3)^\top \left( \bmu_{s, 3} \bQ_{s,3}(1) \right)} + \xi.$$
Then
\begin{align*}
& (\bar \bw(1)^\top\bx\bar \Phi + \tilde \bw(1)^\top\bx\tilde \Phi)|_{\by=\be_1} -  (\bar \bw(2)^\top\bx\bar \Phi + \tilde \bw(2)^\top\bx\tilde \Phi)|_{\by=\be_1}\\
    &= \begin{cases}
        -2, \mbox{ if } \sum_{j=1,2,4,5 }\bbI(\bQ_{s, j}(1) = \be_2) = 4, \text{ and } \bQ_{s,3}(1) = \be_2 \\
        -1, \mbox{ if } \sum_{j=1,2,4,5 }\bbI(\bQ_{s, j}(1) = \be_2) = 3, \sum_{j=1,2,4,5 }\bbI(\bQ_{s, j}(1) = \be_3) = 1 \text{ and } \bQ_{s,3}(1) = \be_2\\
        0, \mbox{ if } (\sum_{j=1,2,4,5 }\bbI(\bQ_{s, j}(1) = \be_2) = 2 \mbox{ and } \sum_{j=1,2,4,5 }\bbI(\bQ_{s, j}(1) = \be_3) = 2)\mbox{ and } \bQ_{s,3}(1) = \be_2)\\ \quad \mbox { or } (\sum_{j=1,2,4,5 }\bbI(\bQ_{s, j}(1) = \be_2) = 4 \mbox{ and } \bQ_{s,3}(1) = \be_3)\\
        \quad \mbox{ or } (\sum_{j=1,2,4,5 }\bbI(\bQ_{s, j}(1) = \be_2) = 3, \sum_{j=1,2,4,5 }\bbI(\bQ_{s, j}(1) = \be_1) = 1, \bQ_{s,3}(1) = \be_2)\\
        \geq  1 \mbox{ otherwise}.
    \end{cases}
\end{align*}


Then we can compute the probability respectively:\\
For -2 case, the probability is given by $2 \cdot (p/3)^5 = 2p^5/243$\\
For -1 case, the probability is given by $2 \cdot 4 \cdot (p/3)^4 \cdot (p/3) = 8p^5/243$\\
For 0 case, the probability is given by $2 \cdot (4C2 (p/3)^4 \cdot (p/3) + (p/3)^4 \cdot (p/3) + 4 \cdot (1 - 2p/3) \cdot (p/3)^3 \cdot (p/3)) = 8p^4/81 - 2p^5/243$\\
Otherwise, the probability is given by $1 - 8p^4/81 - 8p^5/243$\\

Then the probability can be computed as approximately $[1 - 4p^4/81 - p^5/27 - \varepsilon, 1 - 4p^4/81 - p^5/27 + \varepsilon]$.

\end{proof}

\subsection{Auxiliary Lemmas}

\begin{lemma}\label{lem_rv}
For any $N$ i.i.d random variables $\left\{ z_i \right\}_{i=1}^N \sim \mathcal{N}(0 ,\sigma^2)$ and $t > 0$, with probability at least $1 - N e^{- \frac{t^2}{2 \sigma^2}}$, we have
\begin{equation*}
    z_i + t \ge 0, \quad \text{for any} \quad i = 1, \dots, N.
\end{equation*}
\end{lemma}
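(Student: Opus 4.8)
The plan is to prove the bound by complementation followed by a union bound, reducing the joint event to a single Gaussian tail estimate. First I would observe that the target event $\{z_i + t \ge 0 \text{ for all } i\}$ is exactly the complement of the failure event $\bigcup_{i=1}^N \{z_i < -t\}$, so it suffices to show $\mathbb{P}\big(\bigcup_{i=1}^N \{z_i < -t\}\big) \le N e^{-t^2/(2\sigma^2)}$. Applying the union bound and using that the $z_i$ are identically distributed, this reduces the whole problem to the one-variable tail inequality $\mathbb{P}(z_1 < -t) \le e^{-t^2/(2\sigma^2)}$.

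Next I would establish that one-dimensional tail bound. By symmetry of the centered Gaussian, $\mathbb{P}(z_1 < -t) = \mathbb{P}(z_1 > t)$, and after rescaling $z_1/\sigma \sim \mathcal{N}(0,1)$ this becomes a bound on the standard normal upper tail at level $t/\sigma$. The key technical ingredient is the Chernoff (sub-Gaussian) estimate: for $Z \sim \mathcal{N}(0,1)$ the moment generating function satisfies $\mathbb{E}[e^{\lambda Z}] = e^{\lambda^2/2}$, so Markov's inequality gives $\mathbb{P}(Z > s) \le e^{-\lambda s + \lambda^2/2}$ for every $\lambda > 0$; optimizing at $\lambda = s$ yields $\mathbb{P}(Z > s) \le e^{-s^2/2}$. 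Taking $s = t/\sigma$ then delivers $\mathbb{P}(z_1 > t) \le e^{-t^2/(2\sigma^2)}$.

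Combining the two steps, $\mathbb{P}\big(\bigcup_i \{z_i < -t\}\big) \le \sum_{i=1}^N \mathbb{P}(z_i < -t) \le N e^{-t^2/(2\sigma^2)}$, so the complementary event has probability at least $1 - N e^{-t^2/(2\sigma^2)}$, which is the claim. I do not expect a genuine obstacle here, as the statement is an elementary maximal-type concentration bound; the only point requiring a little care is the derivation of the Gaussian tail constant, where one must use the MGF-based Chernoff argument rather than a cruder estimate in order to obtain exactly the exponent $t^2/(2\sigma^2)$ with no spurious prefactor.
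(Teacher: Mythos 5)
Your proof is correct and follows essentially the same route as the paper: a Chernoff/MGF bound $\mathbb{P}(z_1 < -t) \le e^{-t^2/(2\sigma^2)}$ obtained by optimizing $\lambda$ in Markov's inequality, combined with a union over the $N$ events. The paper phrases the second step as $\mathbb{E}\max_i e^{\lambda z_i} \le \sum_i \mathbb{E} e^{\lambda z_i}$ before optimizing $\lambda$, while you apply the union bound directly to the probabilities, but for identically distributed variables these yield the identical bound $N e^{-t^2/(2\sigma^2)}$.
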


\begin{proof}
For any index $i$, according to Markov inequality, with any $\lambda > 0$, we have
\begin{equation*}
    \mathbb{P} (z_i + t \le 0) = \mathbb{P} (e^{\lambda z_i} \ge e^{ \lambda t}) \le \frac{\mathbb{E} e^{\lambda z_i}}{e^{ \lambda t}} \le \text{exp}\{ \frac{\lambda^2 \sigma^2}{2} - \lambda t \}, 
\end{equation*}
taking the minimum value on the right handside, with respect to $\lambda$, we can get
\begin{equation*}
    \mathbb{P} (z_i + t \le 0) \le \text{exp}\{ - \frac{t^2}{2 \sigma^2} \}.
\end{equation*}
Then considering the random variables through all index $i = 1, \dots, N$,
\begin{equation*}
    \mathbb{P}(\min_i  z_i + t \le 0) = \mathbb{P} (e^{\max_i \lambda z_i} \ge e^{ \lambda t})) \le \mathbb{E} (\text{exp} \{ \max_i \lambda z_i - \lambda t \} ),
\end{equation*}
while
\begin{equation*}
   \mathbb{E} \text{exp}\{ \max_i \lambda z_i \} = \mathbb{E} \max_i e^{\lambda z_i} \le \sum_i \mathbb{E} e^{\lambda z_i} \le N e^{\lambda^2 \sigma^2 / 2},
\end{equation*}
we can take the minimum value on the right hand side, with respect with $\lambda$, then further obtain
\begin{equation*}
    \mathbb{P}(\min_i  z_i + t \ge 0) \le N e^{- t^2 / 2 \sigma^2}.
\end{equation*}
\end{proof}

\begin{lemma}\label{lem_gaussian_multi}
Suppose that $\bx \sim \mathcal{N}(\boldsymbol{0}, \sigma^2 \boldsymbol{I}_d)$, $k - 1$ vectors $\{ \boldsymbol{a}_1, \dots, \boldsymbol{a}_{k-1} \}$ and  $\delta_i \in \mathbb{R}$ for $i = 1, \dots, k-1$, then by Gram-Schmidt process, the probability of
\begin{equation*}
\left\{
\begin{aligned}
& \boldsymbol{a}_1^T \bx + \delta_1 > 0,\\
& \dots \\
& \boldsymbol{a}_{k-1}^T \bx + \delta_{k-1} > 0,
\end{aligned}
\right.
\end{equation*}
is equivalent to the probability of
\begin{equation*}
\left\{
\begin{aligned}
& e_1 + \frac{\delta_1}{\parallel \ba_1 \parallel_2} > 0,\\
& \sqrt{1 - (\frac{\ba_2^T \bv_1}{\parallel \ba_2 \parallel_2 \parallel \bv_1 \parallel_2})^2} e_2 + \frac{\ba_2^T \bv_1}{\parallel \ba_2 \parallel_2 \parallel \bv_1 \parallel_2} e_1 + \frac{\delta_2}{\parallel \ba_2 \parallel_2} > 0,\\
& \dots \\
& \sqrt{1 - \sum_{i=1}^{k-2} (\frac{\ba_{k-1}^T \bv_i}{\parallel \ba_{k-1} \parallel_2 \parallel \bv_i \parallel_2})^2} e_{k-1} + \sum_{i=1}^{k-2} \frac{\ba_{k-1}^T \bv_i}{\parallel \ba_{k-1} \parallel_2 \parallel \bv_i \parallel_2} e_i + \frac{\delta_{k-1}}{\parallel \ba_{k-1} \parallel_2} > 0.
\end{aligned}
\right.
\end{equation*}
in which $\{ e_i \}$ are i.i.d. $\mathcal{N}(0 ,\sigma^2)$ and $\{ \bv_i \}_{i=1}^{k-1}$ are orthogonal vectors span on $\{ \ba_i\}_{i=1}^{k-1}$ as
\begin{equation*}
\left\{
\begin{aligned}
& \bv_1 = \frac{\ba_1}{\parallel \ba_1 \parallel_2},\\
& \bv_2 = \frac{\ba_2 - (\ba_2^T \bv_1) \bv_1}{\sqrt{\parallel \ba_2 \parallel_2^2 - (\ba_2^T \bv_1)^2}},\\
& \dots \\
& \bv_{k-1} = \frac{\ba_{k-1} - \sum_{i=1}^{k-2}(\ba_{k-1}^T \bv_i) \bv_i}{\sqrt{\parallel \ba_{k-1} \parallel_2^2 - \sum_{i=1}^{k-2} (\ba_{k-1}^T  \bv_i)^2}},
\end{aligned}
\right.
\end{equation*}
\end{lemma}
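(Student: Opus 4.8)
The plan is to exploit the rotational invariance of the isotropic Gaussian $\bx \sim \cN(\vzero, \sigma^2 \bI_d)$: each event $\{\ba_i^\top \bx + \delta_i > 0\}$ depends on $\bx$ only through its projection onto $\mathrm{span}\{\ba_1, \dots, \ba_{k-1}\}$, a subspace of dimension at most $k-1$. Because the covariance is proportional to the identity, the coordinates of $\bx$ in \emph{any} orthonormal basis are i.i.d.\ $\cN(0,\sigma^2)$, so the joint probability of the $d$-dimensional system collapses to a probability over $k-1$ i.i.d.\ scalar Gaussians $e_1, \dots, e_{k-1}$. The entire content of the lemma is to make this collapse explicit and to read off the exact coefficients produced by the Gram--Schmidt frame.

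First I would run Gram--Schmidt on $\ba_1, \dots, \ba_{k-1}$ (assuming, as holds in the intended applications, that they are linearly independent) to obtain the orthonormal vectors $\bv_1, \dots, \bv_{k-1}$ written in the statement, and then set $e_j := \bv_j^\top \bx$. Since $\{\bv_j\}$ is orthonormal and $\bx$ is isotropic Gaussian, the vector $(e_1, \dots, e_{k-1})$ has covariance $\sigma^2 \bI_{k-1}$, so the $e_j$ are i.i.d.\ $\cN(0,\sigma^2)$, matching the claimed distribution. Next, since each $\ba_i$ lies in $\mathrm{span}\{\bv_1, \dots, \bv_i\}$, I would expand $\ba_i = \sum_{j=1}^{i} (\ba_i^\top \bv_j)\,\bv_j$ and hence
\[
\ba_i^\top \bx \;=\; \sum_{j=1}^{i} (\ba_i^\top \bv_j)\,(\bv_j^\top \bx) \;=\; \sum_{j=1}^{i} (\ba_i^\top \bv_j)\, e_j ,
\]
so the $i$-th event becomes $\sum_{j=1}^{i} (\ba_i^\top \bv_j)\, e_j + \delta_i > 0$.

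The remaining step, and the one requiring the most care, is to verify that dividing by $\|\ba_i\|_2$ turns this plain expansion into the normalized triangular form stated in the lemma. The only nontrivial identity is that the coefficient of the leading term $e_i$ equals $\ba_i^\top \bv_i = \|\ba_i\|_2 \sqrt{1 - \sum_{j<i}(\ba_i^\top \bv_j / \|\ba_i\|_2)^2}$; I would establish this from the Pythagorean relation $\|\ba_i\|_2^2 = \sum_{j=1}^{i} (\ba_i^\top \bv_j)^2$, which holds because $\ba_i$ is expanded in the orthonormal frame $\{\bv_j\}_{j\le i}$, together with the fact that Gram--Schmidt selects $\bv_i$ so that $\ba_i^\top \bv_i = \sqrt{\|\ba_i\|_2^2 - \sum_{j<i}(\ba_i^\top \bv_j)^2} > 0$, which lets me drop the absolute value. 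Substituting $\|\bv_j\|_2 = 1$ throughout makes the coefficients in the lemma identical to $(\ba_i^\top \bv_j)/\|\ba_i\|_2$, and dividing the $i$-th inequality by the positive constant $\|\ba_i\|_2$ preserves the event. Since the two systems of inequalities then hold for exactly the same realizations of $(e_1, \dots, e_{k-1})$, their probabilities coincide, which is the claim. The main obstacle is purely the bookkeeping in this normalization identity; conceptually the argument is just rotational invariance plus orthogonalization, and I anticipate no genuine analytic difficulty.
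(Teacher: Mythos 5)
Your argument is correct: defining $e_j = \bv_j^\top \bx$ for the Gram--Schmidt orthonormal frame gives i.i.d.\ $\mathcal{N}(0,\sigma^2)$ coordinates by rotational invariance, expanding $\ba_i = \sum_{j\le i}(\ba_i^\top\bv_j)\bv_j$ rewrites each event exactly, and the Pythagorean identity $\|\ba_i\|_2^2=\sum_{j\le i}(\ba_i^\top\bv_j)^2$ together with $\ba_i^\top\bv_i\ge 0$ yields the stated leading coefficient after dividing by $\|\ba_i\|_2>0$. The paper states this lemma without providing any proof, so there is nothing to compare against; your derivation is the standard one (and your explicit caveat that the $\ba_i$ must be linearly independent is satisfied in the paper's applications, where the Gram matrix has diagonal $2$ and off-diagonal $1$ and is hence positive definite).
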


\begin{lemma}
\label{lemma:failure_probability}
Just consider two classes $K = 1, 2$ and denote $r_{1 \xrightarrow[]{} 2}$ as the number of the events $\{ \bbI(\bQ_{s,i}(1) = e_2)\}$ that holds. Suppose Assumption \ref{ass:small_noise} hold, then 
\begin{enumerate}
\item when $n_s + n_v - 2 r_{1 \xrightarrow[]{} 2} > 0$, the related probability $\mathbb{P}((\bw(1) - \bw(2))^T \bx \boldsymbol{\Phi} > 0 \mid_{y=e_1}) $ is larger than $\sqrt{1 - \epsilon}$,
\item when $n_s + n_v - 2 r_{1 \xrightarrow[]{} 2} = 0$, the related probability $\mathbb{P}((\bw(1) - \bw(2))^T \bx \boldsymbol{\Phi} > 0 \mid_{y=e_1}) $ in $[1 / 2 - \epsilon, 1 / 2 + \epsilon]$,
\item when $n_s + n_v - 2 r_{1 \xrightarrow[]{} 2} < 0$, the related probability $\mathbb{P}((\bw(1) - \bw(2))^T \bx \boldsymbol{\Phi} > 0 \mid_{y=e_1})$ is less than $\epsilon$.
\end{enumerate}
    
\end{lemma}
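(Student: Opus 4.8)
The plan is to reduce the event $(\bw(1)-\bw(2))^\top \bx\boldsymbol{\Phi} > 0 \mid_{\by=\be_1}$ to a single one–dimensional Gaussian tail, and then read off the three cases purely from the sign of the integer $m := n_v+n_s-2r_{1\to 2}$. First I would invoke Lemma~\ref{lemma:classifier} to write $\bw(k)=\frac{1}{\sqrt{n_v+n_s}}\big(\sum_{i}\bmu_{v,i}(k)+\sum_{j}\bmu_{s,j}(k)\big)$, and expand the conditional feature as $\bx\boldsymbol{\Phi}\mid_{\by=\be_1}=\sum_i\bmu_{v,i}\bQ_{v,i}(1)+\sum_j\bmu_{s,j}\bQ_{s,j}(1)+\sum_{\ell=1}^{n_v+n_s}\bz_\ell$ with $\bz_\ell\sim\cN(0,\sigma^2\boldsymbol{I}_d)$ i.i.d. Splitting $(\bw(1)-\bw(2))^\top\bx\boldsymbol{\Phi}$ into a deterministic part and a noise part is the backbone of the argument.

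For the deterministic part I would use the orthonormality of Assumption~\ref{ass:ortho_feature}: each invariant feature has $\bQ_{v,i}(1)=\be_1$ and contributes $\bmu_{v,i}(1)^\top\bmu_{v,i}(1)-\bmu_{v,i}(2)^\top\bmu_{v,i}(1)=1$; each spurious feature with $\bQ_{s,j}(1)=\be_1$ contributes $+1$, while each of the $r_{1\to 2}$ flipped spurious features with $\bQ_{s,j}(1)=\be_2$ contributes $-1$ (for $K=2$ every flip lands on class $2$). Summing gives a deterministic margin $(n_v+n_s-2r_{1\to 2})/\sqrt{n_v+n_s}$. For the noise part, orthogonality also yields $\|\bw(1)-\bw(2)\|_2^2=2$, so the projected noise $(\bw(1)-\bw(2))^\top\big(\sum_\ell\bz_\ell\big)$ is a scalar $\cN\!\big(0,\,2(n_v+n_s)\sigma^2\big)$. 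Hence the target probability equals $\boldsymbol{F}\big(\frac{n_v+n_s-2r_{1\to 2}}{\sqrt{2(n_v+n_s)}\,\sigma}\big)$, where $\boldsymbol{F}$ is the standard Gaussian CDF.

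The three claims then follow from monotonicity of $\boldsymbol{F}$ together with the fact that $m$ is an integer. When $m>0$ we have $m\ge 1$, so the argument is at least $\frac{1}{\sqrt{2(n_v+n_s)}\,\sigma}$; Assumption~\ref{ass:small_noise} (using $n_v+n_s\le n_v'+n_s'$ and absorbing the order-one constant into the small-noise regime) forces $\boldsymbol{F}$ of this quantity to be at least $(1-\epsilon)^{1/K}\ge\sqrt{1-\epsilon}$ for $K\ge 2$, which is the first bullet. When $m=0$ the argument is exactly $0$ and the probability is exactly $1/2\in[1/2-\epsilon,\,1/2+\epsilon]$. When $m<0$, i.e. $m\le -1$, symmetry of $\boldsymbol{F}$ gives the probability $=1-\boldsymbol{F}\big(\frac{|m|}{\sqrt{2(n_v+n_s)}\,\sigma}\big)\le 1-\sqrt{1-\epsilon}\le\epsilon$, where the last inequality uses $\sqrt{1-\epsilon}\ge 1-\epsilon$.

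The step I expect to be the main obstacle is the careful bookkeeping when invoking Assumption~\ref{ass:small_noise}: matching the effective feature count $n_v+n_s$ against the cap $n_v'+n_s'$ and absorbing the constants (the $\sqrt{2}$ coming from $\|\bw(1)-\bw(2)\|_2^2=2$ and the $1/\sqrt{n_v+n_s}$ classifier normalization) so that the single assumed tail bound $\boldsymbol{F}^{K}\!\big(\tfrac{1}{\sigma(n_v'+n_s')}\big)\ge 1-\epsilon$ uniformly controls all three regimes; everything else is routine Gaussian-tail manipulation. It is worth noting that this lemma is exactly the two-class building block to which the pairwise logit comparisons in the multi-class proofs (handled through Lemma~\ref{lem_gaussian_multi}) are reduced.
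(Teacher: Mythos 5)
Your proof is correct and follows essentially the same route as the paper, which simply defers to the general multi-class version (Lemma~\ref{lemma:failure_probability_com}) whose proof is the same margin-versus-Gaussian-noise case analysis that you carry out explicitly for the two-class specialization. The only blemish is a normalization slip in your closed form --- with the $1/\sqrt{n_v+n_s}$ classifier scaling the argument of $\boldsymbol{F}$ should be $\frac{n_v+n_s-2r_{1\to 2}}{\sqrt{2}\,(n_v+n_s)\,\sigma}$ rather than $\frac{n_v+n_s-2r_{1\to 2}}{\sqrt{2(n_v+n_s)}\,\sigma}$ --- but this does not affect the sign-based trichotomy, and the order-one constant bookkeeping you flag as the delicate point is handled no more carefully in the paper's own argument.
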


\begin{proof}
This can be directly deduced by using Lemma \ref{lemma:failure_probability_com}, which is the more general version.
\end{proof}

\begin{lemma}
\label{lemma:failure_probability_com}
Denote $r_{k \xrightarrow[]{} l}$ as the number of the events $\{\bbI(\bQ_{s, i}(k) = \be_l)\}_{i=1}^{n_s}$ that hold.
Suppose Assumption \ref{ass:small_noise} hold, then for class $k$, 
\begin{itemize}
    \item when $n_s + n_v - \sum_{l\neq k} r_{k \xrightarrow[]{} l} > \max_{l \neq k}r_{k \xrightarrow[]{} l}$, the accuracy is larger than $1 - \epsilon$,
    \item when $n_s + n_v - \sum_{l\neq k} r_{k \xrightarrow[]{} l} = \max_{l \neq k}r_{k \xrightarrow[]{} l}$, denote $N$ as the number of the events that holds $\{\bbI(r_{k \xrightarrow[]{} l} = n_s + n_v - \sum_{l'} r_{k \xrightarrow[]{} l'})\}_{l \neq  k}$, the accuracy in $[1 / (N+1) - \epsilon, 1 / (N+1) + \epsilon]$,
    \item when $n_s + n_v - \sum_{l\neq k} r_{k \xrightarrow[]{} l} < \max_{l \neq k}r_{k \xrightarrow[]{} l}$, the accuracy is less than $\epsilon$.
\end{itemize}

\end{lemma}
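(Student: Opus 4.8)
The plan is to fix the true class $k$, condition on the realized OOD transformation $\bQ_s$ (so that the flip counts $r_{k\to l}$ are fixed integers and the only randomness left is the feature noise $\bz$), and reduce the prediction event to a comparison of class logits that collapse to simple counts. For a sample with $\by=\be_k$ we have $\bx\Phi = \sum_i \bmu_{v,i}\bQ_{v,i}(k) + \sum_j \bmu_{s,j}\bQ_{s,j}(k) + \bz$, and by Lemma~\ref{lemma:classifier} the classifier satisfies $\bw(l) = \sum_i \bmu_{v,i}(l) + \sum_j \bmu_{s,j}(l)$. First I would substitute these in and use the orthogonality in Assumption~\ref{ass:ortho_feature} to annihilate every cross term between distinct latent features, leaving the noiseless logit of the true class equal to $n_v + n_s - \sum_{l'\neq k} r_{k\to l'}$ (all $n_v$ invariant features contribute, plus the $r_{k\to k}$ spurious features that did not flip) and the noiseless logit of any wrong class $l\neq k$ equal to $r_{k\to l}$. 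The conditional accuracy is then the probability that $(\bw(k)-\bw(l))^\top\bx\Phi>0$ for all $l\neq k$, whose mean gap is exactly $n_v + n_s - \sum_{l'\neq k} r_{k\to l'} - r_{k\to l}$.

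Next I would dispatch the two strict cases by Gaussian concentration. Each difference $(\bw(k)-\bw(l))^\top\bx\Phi$ is Gaussian with mean the integer gap above and a noise scale that Assumption~\ref{ass:small_noise} forces to be small relative to a single unit of gap. In the strict-majority case $n_v + n_s - \sum_{l'\neq k} r_{k\to l'} > \max_{l\neq k} r_{k\to l}$ every gap is a positive integer, hence at least $1$; applying Lemma~\ref{lem_rv} as a union bound over the $K-1$ competitors on the minimum of the Gaussian perturbations, together with the $\boldsymbol{F}^K$ form of Assumption~\ref{ass:small_noise}, yields that all comparisons favor class $k$ with probability at least $1-\epsilon$. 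In the strict-minority case some wrong class carries a gap $\leq -1$, so that one comparison fails with probability at least $1-\epsilon$ by the same tail bound, forcing the accuracy below $\epsilon$.

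The delicate case, and the main obstacle, is the exact tie $n_v + n_s - \sum_{l'\neq k} r_{k\to l'} = \max_{l\neq k} r_{k\to l}$ attained by precisely $N$ wrong classes. Here the true class and those $N$ classes share the same noiseless logit, while every remaining class has a gap $\leq -1$ and is therefore dominated outside an event of probability at most $\epsilon$ (again Lemma~\ref{lem_rv}). On the complementary event the winner is decided among the $N+1$ tied classes purely by their Gaussian perturbations, whose joint law is equicorrelated (pairwise inner product $1$, squared norm $2$, exactly the structure underlying $\boldsymbol{L}(\cdot)$ and $h(N)$) and hence invariant under permuting the tied classes. Exchangeability then makes each tied class the argmax with equal probability $1/(N+1)$, giving accuracy in $[1/(N+1)-\epsilon, 1/(N+1)+\epsilon]$. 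The one point to check carefully is that the interference of the non-tied classes and the $\epsilon$-event does not disturb the exact symmetry beyond the stated slack; this is precisely where the small-noise assumption is invoked a second time, to absorb the boundary corrections into $\epsilon$.
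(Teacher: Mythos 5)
Your proposal is correct and follows essentially the same route as the paper: condition on the realized flip counts $r_{k\to l}$, use Lemma~\ref{lemma:classifier} and Assumption~\ref{ass:ortho_feature} to reduce the noiseless logits to the integer counts $n_v+n_s-\sum_{l'\neq k}r_{k\to l'}$ versus $r_{k\to l}$, and then dispatch the three sign cases with the small-noise Assumption~\ref{ass:small_noise} (the paper phrases this through the function $G(\cdot)$ and the tail bound of Lemma~\ref{lem_rv}, exactly as you do).

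The one place where you genuinely diverge is the tie case, and there your argument is the better one. The paper's written proof lower-bounds the tied-case accuracy by $\tfrac{1}{2^N}(1-\epsilon)$, which is what one would get by treating the $N$ borderline comparisons as independent; this neither matches the $1/(N+1)$ claimed in the statement (they agree only at $N=1$) nor gives the required upper bound. Your exchangeability argument --- writing the tied logits as a common constant plus i.i.d.\ Gaussian projections $\bw(l)^\top\bz$ (i.i.d.\ because the $\bw(l)$ are sums of orthonormal latent directions of equal norm), so that each of the $N+1$ tied classes is the argmax with probability exactly $1/(N+1)$, with the non-tied classes and degenerate events absorbed into $\pm\epsilon$ --- is the correct way to obtain the two-sided bound $[1/(N+1)-\epsilon,\,1/(N+1)+\epsilon]$, and it is consistent with the quantity $h(N)$ the paper defines (under the evidently intended normalization $\|\boldsymbol{a}_i\|_2^2=2$, $\boldsymbol{a}_i^\top\boldsymbol{a}_j=1$). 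In short: same skeleton, but your handling of the boundary case repairs a flaw in the paper's own writeup.
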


\begin{proof}
Considering the conditional forecasting accuracy on class $k$, with respect to $r_{k \to 1}, \dots, r_{k \to K}$, it is equivalent with $G(\{ n_s + n_v - \sum_{l \ne k} r_{k \to l} - r_{k \to s}, \forall s \ne k\})$, which is defined previously. Then we can take analysis case by case:
\begin{itemize}
\item  $n_s + n_v - \sum_{l\neq k} r_{k \xrightarrow[]{} l} > \max_{l \neq k}r_{k \xrightarrow[]{} l}$

In this case, all elements in function $G(\cdot)$ are larger than $0$, which means that no smaller than $1/ (N_v^{\prime} + N_s^{\prime})$. With Assumption \ref{ass:small_noise}, we have
\begin{equation*}
    G(\{ n_s + n_v - \sum_{l \ne k} r_{k \to l} - r_{k \to s}, \forall s \ne k\}) \ge \boldsymbol{F}^K (\frac{1}{\sigma (N_v^{\prime} + N_s^{\prime})}) \ge 1 - \epsilon.
\end{equation*}
\item  $n_s + n_v - \sum_{l\neq k} r_{k \xrightarrow[]{} l} = \max_{l \neq k}r_{k \xrightarrow[]{} l}$

In this case, all elements in function $G(\cdot)$ are no smaller than $1/(N_v^{\prime} + N_s^{\prime})$, except $N$ zero elements. With Assumption \ref{ass:small_noise}, we have
\begin{equation*}
  G(\{ n_s + n_v - \sum_{l \ne k} r_{k \to l} - r_{k \to s}, \forall s \ne k\}) \ge \frac{1}{2^N} (1 - \epsilon) \ge \frac{1}{2^N} - \epsilon.
\end{equation*}
\item  $n_s + n_v - \sum_{l\neq k} r_{k \xrightarrow[]{} l} < \max_{l \neq k}r_{k \xrightarrow[]{} l}$ 

In this case, there is at least one element in $G(\cdot)$ no larger than $- 1 / (N_v^{\prime} + N_s^{\prime})$, still considering Assumption \ref{ass:small_noise}, we have
\begin{equation*}
    G(\{ n_s + n_v - \sum_{l \ne k} r_{k \to l} - r_{k \to s}, \forall s \ne k\}) \le \boldsymbol{F}(- \frac{1}{\sigma (N_v^{\prime} + N_s^{\prime})}) \le \epsilon.
\end{equation*}
\end{itemize}
\end{proof}

\begin{lemma}
\label{lemma:classifier}
With features $\{ \bx_{v,i} \}_{i=1}^{n_v}$ and $\{ \bx_{s, j} \}_{j=1}^{n_s}$, the classifier trained on infinite samples is equivalent with the mean value of $\bx = \sum_{i=1}^{n_v} \bx_i + \sum_{j=1}^{n_s} \bx_j$.
\end{lemma}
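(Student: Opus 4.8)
The plan is to treat the ID training of the classifier as population softmax regression on the single pooled feature $\tilde\bx:=\bx\Phi=\sum_{i}\bx_{v,i}+\sum_{j}\bx_{s,j}$, and to show that the population cross-entropy minimizer simply recovers the Bayes posterior of a symmetric Gaussian mixture, whose logits point along the class-conditional means. First I would identify the conditional law of $\tilde\bx$. In $\cD_{\mathrm{id}}$ we have $\bQ_{v,i}=\bQ_{s,j}=\bI$, so conditioning on $\by=\be_k$ gives $\bx_{v,i}\sim\cN(\bmu_{v,i}(k),\sigma^2\bI_d)$ and $\bx_{s,j}\sim\cN(\bmu_{s,j}(k),\sigma^2\bI_d)$; using conditional independence of the features given $\by$, the pooled feature satisfies $\tilde\bx\mid(\by=\be_k)\sim\cN(\vm_k,\tau^2\bI_d)$ with $\vm_k:=\sum_i\bmu_{v,i}(k)+\sum_j\bmu_{s,j}(k)$ and $\tau^2=(n_v+n_s)\sigma^2$. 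By Assumption~\ref{ass:ortho_feature} the $\vm_k$ are mutually orthogonal with common squared norm $\|\vm_k\|_2^2=n_v+n_s$, and the priors are equal ($1/K$): this is a symmetric, homoscedastic Gaussian mixture.

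The key step is to observe that the softmax-linear model is well specified for this mixture. Expanding the Gaussian exponents, the term $-\|\tilde\bx\|_2^2/(2\tau^2)$ is common to all classes and cancels in the softmax; and because $\|\vm_k\|_2^2$ and the prior are independent of $k$, their offsets cancel as well. Hence the Bayes posterior is exactly of softmax-linear form,
\[
P(\by=\be_k\mid\tilde\bx)=\frac{\exp\!\big(\vm_k^\top\tilde\bx/\tau^2\big)}{\sum_{l}\exp\!\big(\vm_l^\top\tilde\bx/\tau^2\big)}=\softmax\!\big(W^\top\tilde\bx\big)_k,\qquad W(k)=\vm_k/\tau^2,
\]
so the true posterior lies inside the model class. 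I would then invoke the proper-scoring-rule identity: the population risk $\cR_{\mathrm{id}}(\bv,\Phi)=\bbE_{(\tilde\bx,\by)}[\ell(\bv^\top\tilde\bx,\by)]$ equals $\bbE_{\tilde\bx}\big[\KL\big(P(\cdot\mid\tilde\bx)\,\|\,\widehat P_{\bv}(\cdot\mid\tilde\bx)\big)\big]$ plus the parameter-free conditional entropy. The cross-entropy is convex in $\bv$ and the KL term is nonnegative, vanishing iff $\widehat P_{\bv}=P$ almost everywhere; choosing $\bv=W$ attains this, so $W$ is a global minimizer.

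Finally I would pin down identifiability. Since $\tilde\bx$ is a full-support Gaussian on $\bbR^d$, matching the posteriors for almost every $\tilde\bx$ forces the logit differences to agree for all $\tilde\bx$, i.e. $\bv(k)-\bv(l)=(\vm_k-\vm_l)/\tau^2$, so any minimizer has the form $\bv(k)=\vm_k/\tau^2+\vc$ for a class-independent shift $\vc$. Both the common shift $\vc$ and the positive scalar $1/\tau^2$ alter every class logit by the same amount or factor and therefore leave the $\argmin$-over-classes prediction unchanged; this is all the downstream accuracy computations use. Consequently, up to this immaterial common shift and scaling, $\bw(k)=\vm_k=\sum_i\bmu_{v,i}(k)+\sum_j\bmu_{s,j}(k)$, the class-conditional mean of the pooled feature, which is exactly the claim. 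I expect the main obstacle to be the identifiability/uniqueness argument — arguing that a minimizer \emph{must} coincide with the Bayes-posterior parameters (rather than merely that those parameters achieve the minimum), and stating precisely that ``equivalent with the mean value'' means equality up to a common shift and positive scaling.
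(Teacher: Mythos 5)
Your proposal is correct and reaches the same conclusion, but it justifies the key step by a different mechanism than the paper. The paper works directly with the population likelihood of the softmax regression: it writes down the first-order stationarity condition (class-conditional feature sums equal posterior-weighted feature sums) and then uses Bayes' formula to verify that $\bw(k)=\bmu(k)=\sum_i\bmu_{v,i}(k)+\sum_j\bmu_{s,j}(k)$ satisfies that condition in the infinite-sample limit, implicitly relying on convexity of the cross-entropy to promote the stationary point to a global minimizer. You instead observe that the symmetric, homoscedastic Gaussian mixture makes the softmax-linear model \emph{well specified} (the quadratic and norm terms cancel because $\|\vm_k\|_2$ and the priors are class-independent, which is where Assumption~\ref{ass:ortho_feature} enters), and then invoke the proper-scoring-rule decomposition of the population cross-entropy into conditional entropy plus an expected KL term, so the Bayes-posterior parameters are immediately a global minimizer. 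The two arguments are two faces of the same fact, but yours buys a cleaner route to global (not merely first-order) optimality, and your identifiability step — that any minimizer equals $\vm_k/\tau^2$ up to a class-independent shift, which is invisible to the argmax prediction — makes precise the ``equivalent with the mean value'' phrasing that the paper resolves only informally by ``scaling the classifier.'' Either argument suffices for the downstream propositions, which only use the direction of $\bw(k)$.
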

\begin{proof}
Considering the classifier, it should be $\max_w \mathbb{P}(\hat{\by} | \Phi(\bx)^\top w)$. From Definition \ref{defi:data_generation}, we have $\bx \mid \by \sim \mathcal{N}(\boldsymbol{\mu} \circ \by, (n_v + n_s) \sigma^2)$, in which $\boldsymbol{\mu} = \sum_{i=1}^{n_v} \boldsymbol{\mu}_{v,i} + \sum_{j=1}^{n_s} \boldsymbol{\mu}_{s,j}$. For simplicity, we denote $\bz = \bx \mid \by - \boldsymbol{\mu} \circ \by$, and $\bz$ is independent of $\by$.

Given samples as $\{(\bx_i, \by_i) \}_i$, by maximum likelihood estimation (MLE), we have 
\begin{equation*}
\begin{aligned}
& \quad \arg \max_w \Pi_{i=1}^n \frac{\text{exp}\{ \frac{1}{(n_v + n_s)\sigma^2} \bx_i^\top (\boldsymbol{w} \circ \by_i)  \}}{\sum_{\be_j} \text{exp}\{ \frac{1}{(n_v + n_s)\sigma^2} \bx_i^\top (\boldsymbol{w} \circ \be_j)\}}\\
& \iff \arg \max_w \sum_{i=1}^n \frac{1}{(n_v + n_s)\sigma^2} \bx_i^\top (\boldsymbol{w} \circ \by_i) - \sum_{i=1}^n \log \left( \sum_{\be_j} \text{exp}\{ \frac{1}{(n_v + n_s)\sigma^2} \bx_i^\top (\boldsymbol{w} \circ \be_j)\} \right),
\end{aligned}
\end{equation*}
then taking derivative for each $w_k$, we have
\begin{equation*}
    \frac{1}{(n_v + n_s)\sigma^2} \sum_{i^{k}=1}^{n^k} \bx_{i^k} - \sum_{i=1}^n \frac{1}{(n_v + n_s)\sigma^2} \frac{\text{exp}\{ \frac{1}{(n_v + n_s)\sigma^2} \bx_i^\top w_k\}}{\sum_{\be_j} \text{exp}\{ \frac{1}{(n_v + n_s)\sigma^2} \bx_i^\top (\boldsymbol{w} \circ \be_j)\}} \bx_i = \boldsymbol{0}, \forall k = 1, \dots, K.
\end{equation*}

On the other hand, using Bayesian formula to consider the conditional expectation of $\bx$, we have
\begin{equation*}
\begin{aligned}
& \quad \mathbb{E} (\bx \mid \by = \be_k) = \frac{\mathbb{E} (\bx \boldsymbol{1}(\by = \be_k))}{\mathbb{P} (\by = \be_k)} = \frac{\mathbb{E} (\bx \mathbb{E}[\boldsymbol{1}(\by = \be_k) \mid \bx ])}{\mathbb{P} (\by = \be_k)}\\
& \iff \mathbb{E} (\bx \mid \by = \be_k) = K \mathbb{E} \left( \bx \frac{\text{exp}\{ \frac{1}{(n_v + n_s)\sigma^2} \bx^\top (\boldsymbol{\mu} \circ \be_k) \}}{\sum_{r=1}^K \text{exp}\{ \frac{1}{(n_v + n_s)\sigma^2} \bx_i^\top (\boldsymbol{\mu} \circ \be_r) \}} \right),
\end{aligned}    
\end{equation*}
it implies that as sample size $n$ goes to infinity, the following term can maximize the likelihood function:
\begin{equation*}
    \bw(k)  = \boldsymbol{\mu} \circ \be_k  = (\sum_{i=1}^{n_v} \boldsymbol{\mu}_{v,i} + \sum_{j=1}^{n_s} \boldsymbol{\mu}_j) \circ \be_k,
\end{equation*}
for $k = 1, \dots, K$. And by scaling the classifier, we can get the estimated classifier as
\begin{equation*}
    \bw(k) = \frac{1}{\sqrt{n_v + n_s}} \boldsymbol{\mu} \circ \be_k,
\end{equation*}
for any class $k = 1, \dots, K$, which is the same as $(1 / \sqrt{n_v + n_s} ) \mathbb{E}_{\bx \mid \by = \be_k} [\bx \mid \by = \be_k]$. 

\end{proof}

\section{Illustrating the Theory of WiSE-FT}

Recall Definition~\ref{defi:individual_model} that,  $\bar{f}$ learns $\bar{n}_v$ invariant features and $\bar{n}_s$ spurious features, as well as another single model $\tilde{f}$ has $\tilde{n}_v$ invariant features and $\tilde{n}_s$ spurious features. Further, $\bar{f}$ and $\tilde{f}$ learns $n_{vo}$ overlapped invariant features and $n_{so}$ overlapped spurious features. Let $\bar f$ denote the pre-trained model and $\tilde f$ denote the fine-tuned model.
WiSE-FT is specifically is the following:
$\bar f$ has good OOD but bad ID, $\tilde f$ has bad OOD but good ID, and the weight space ensemble of  $\bar f$ and $\tilde f$ has excellent OOD performance. These can be expressed as:
\begin{align*}
    \begin{cases}
        \mathcal{A}_{id} (\bar{f}) < \mathcal{A}_{id} (\tilde{f}), \\
        \mathcal{A}_{ood} (\bar{f}) > \mathcal{A}_{ood} (\tilde{f}), \\
        \mathcal{A}_{ood} (\tilde{f}_{wse}) > \max \{\mathcal{A}_{ood} (\bar{f}), \mathcal{A}_{ood} (\tilde{f})\}
    \end{cases}
\end{align*}

It straightforward that the ID accuracy satisfies the following inequality due to Assumption~\ref{ass:ortho_feature}:
\begin{equation*}
    \mathcal{A}_{id} (\bar{f}) < \mathcal{A}_{id} (\tilde{f}), \text{if} \quad \bar{n}_v + \bar{n}_s < \tilde{n}_v + \tilde{n}_s.
\end{equation*}
Intuitively, if a model learns more feature, it can predict the label better in the ID setting. 

As for the OOD accuracy, by Proposition~\ref{prop:general_results_new}, we have
\begin{equation*}
    \mathcal{A}_{ood}(\bar{f}) = F_p \left( \frac{(1-p)\bar{n}_s + \bar{n}_v}{\sqrt{\bar{n}_s}} \right), \quad \mathcal{A}_{ood}(\tilde{f}) = F_p \left( \frac{(1-p)\tilde{n}_s + \tilde{n}_v}{\sqrt{\tilde{n}_s}} \right).
\end{equation*}
Furthermore, by Proposition~\ref{prop:general_results_wse}, the OOD accuracy of weight space ensemble (WSE) is 
\begin{align*}
    \mathcal{A}_{ood} (f_{wse}) = F_p \left( \frac{(1-p)(\bar{n}_s + \tilde{n}_s + 2 n_{so}) + \bar{n}_v + \tilde{n}_v + 2 n_{vo}}{\sqrt{\bar{n}_s + \tilde{n}_s + 14 n_{so} }} \right)
\end{align*}

Then the WiSE-FT phenomenon can be effectively explained if the following conditions holds:
\begin{align*}
    \begin{cases}
        \bar n_v + \bar n_s < \tilde n_v + \tilde n_s, \\
        \frac{(1-p)\bar{n}_s + \bar{n}_v}{\sqrt{\bar{n}_s}}  > \frac{(1-p)\tilde{n}_s + \tilde{n}_v}{\sqrt{\tilde{n}_s}} , \\
         \frac{(1-p)(\bar{n}_s + \tilde{n}_s + 2 n_{so}) + \bar{n}_v + \tilde{n}_v + 2 n_{vo}}{\sqrt{\bar{n}_s + \tilde{n}_s + 14 n_{so} }} > \max \{
        \frac{(1-p)\bar{n}_s + \bar{n}_v}{\sqrt{\bar{n}_s}}, \frac{(1-p)\tilde{n}_s + \tilde{n}_v}{\sqrt{\tilde{n}_s}} 
        \}
    \end{cases}
\end{align*}

The theoretical results above characterize the conditions for the WiSE-FT phenomenon. To gain a better understanding, we use a concrete example for illustration: $p = 0.9$, there is no overlapped features learned by two models, i.e., $n_{so} = n_{vo} = 0$. The pretrained model $\bar f$ learns some invariant and spurious features, i.e., $\bar{n}_v = 2$, $\bar{n}_s = 4$. The fine-tuned $\tilde f$ model learns more spurious features and less invariant features, i.e., $\tilde{n}_v =1, \tilde{n}_s = 6$. In this example, the fine-tuned model $\tilde f$  has better ID performance than the pre-trained $\bar f$ since $\tilde{n}_v + \tilde{n}_s = 7 > \bar{n}_v + \bar{n}_s = 6$. The fine-tuned model $\tilde f$ has worse OOD performance than the pretrained model $\bar f$ since $\tilde f$ focuses more on spurious features. Specifically, we have $\mathcal{A}_{ood} (\bar{f}) > \mathcal{A}_{ood} (\tilde{f})$ since
\begin{align*}
    & \mathcal{A}_{ood}(\bar{f}) = F_p \left( \frac{(1-p)\bar{n}_s + \bar{n}_v}{\sqrt{\bar{n}_s}} \right) \approx F_p(1.20),\\
& \mathcal{A}_{ood}(\tilde{f}) = F_p \left( \frac{(1-p)\tilde{n}_s + \tilde{n}_v}{\sqrt{\tilde{n}_s}} \right) \approx F_p(0.97),
\end{align*}
Based on Proposition~\ref{prop:general_results_wse}, the OOD performance of wse is 
\begin{align*}
& \mathcal{A}_{ood} (f_{wse}) = F_p \left( \frac{(1-p)(\bar{n}_s + \tilde{n}_s + 2 n_{so}) + \bar{n}_v + \tilde{n}_v + 2 n_{vo}}{\sqrt{\bar{n}_s + \tilde{n}_s + 14 n_{so} }} \right) \approx F_p(1.27).
\end{align*}
Recall that $F_p(\cdot)$ is monotonically increasing,  we can see that  $\mathcal{A}_{ood}(f_{wse}) >  \max\{ \mathcal{A}_{ood}(\tilde{f}),  \mathcal{A}_{ood} (\bar{f})\}$.

\section{Illustrating the Effectiveness of BANG Through the Lens ``Accuracy on the Curve"}

Considering that Mixup and Label Smoothing (LS) enhance the OOD performance of the fine-tuned model, we investigate whether the improvement achieved by BANG is primarily due to better calibration or the fine-tuned model's enhanced OOD performance. In Appendix \ref{app:better_calibration}, we present our findings, which include the following observations:
\begin{itemize}
    \item Dividing the weight of the vanilla fine-tuned model by multiple scalars significantly enhances the performance of weight averaging, closely approaching the performance of BANG.
    \item BANG demonstrates the ability to correct a substantial number of misclassified samples compared to the fine-tuned model.
\end{itemize}
To further investigate the performance of BANG, we examine the concept of ``Accuracy on the Line" \citep{miller2021accuracy, liang2023accuracy}. We generate many checkpoints of vanilla fine-tuning by using different hyper-parameters. Specifically, we fine-tune the model using various hyperparameters, including learning rates ($1e^{-5}, 2e^{-5}, 3e^{-5}, 5e^{-5}$), training epochs ($4, 8, 10, 12, 16, 20$), and learning rate schedules (cosine, step decay). Notably, the default hyperparameters used in Section~\ref{sect:calibrated_ensemble} and mentioned in \cite{wortsman2022robust} are a learning rate of $3e^{-5}$, 10 training epochs, and a cosine scheduler. Weight averaging is applied to each fine-tuned checkpoint with the pretrained model.

Figure \ref{fig:acc_line} illustrates the OOD performance of each fine-tuned model, as well as the averaged model that combines the fine-tuned model with the pre-trained model. Interestingly, we observe that the OOD accuracy of the averaged model forms a quadratic function with respect to the OOD accuracy of the fine-tuned model \cite{liang2023accuracy}, rather than a linear relationship as described in \citep{miller2021accuracy}.

Furthermore, BANG demonstrates significant robustness in OOD scenarios, surpassing the curve of expected performance.

\begin{figure}
    \centering
    \includegraphics[width=0.7\linewidth]{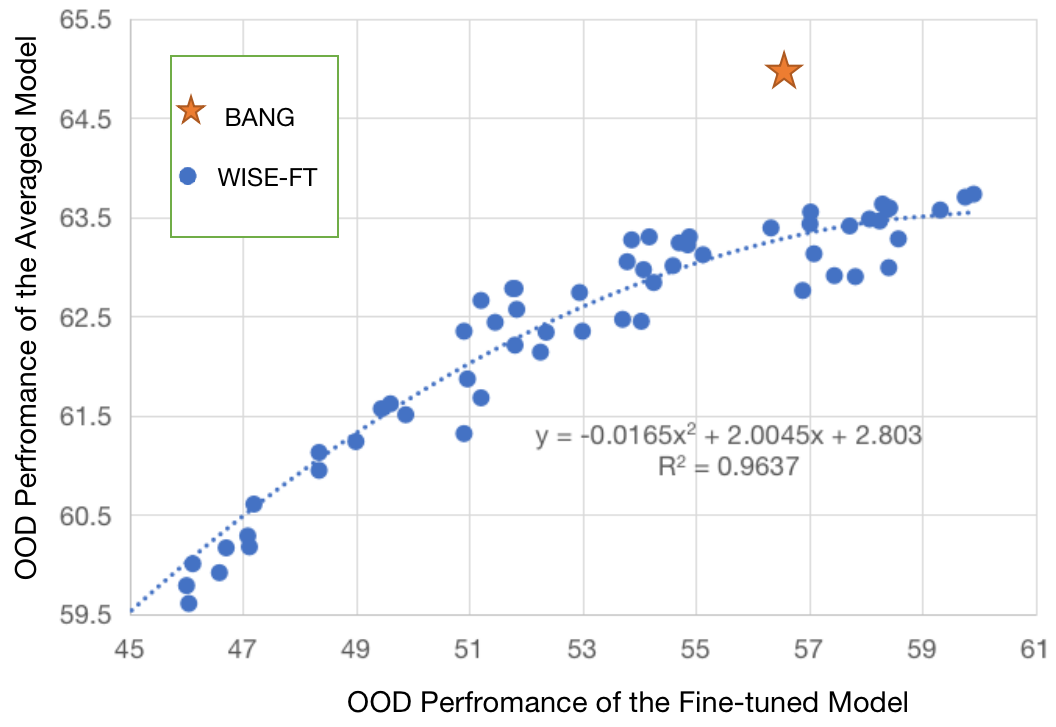}
    \caption{Illustrating the effectiveness of BANG through the lens of ``accuracy on the curve".}
    \label{fig:acc_line}
\end{figure}

\end{document}